\newtcolorbox{conclusionbox}[1][]{%
    enhanced,
    colback=gray!3!white,
    colframe=blue!40!black,
    boxsep=2pt,
    left=2pt,
    right=2pt,
    top=2pt,
    bottom=2pt,
    sharp corners,
    halign=flush left,
    #1
}
\definecolor{bucket1}{RGB}{255, 240, 240}
\definecolor{bucket2}{RGB}{255, 245, 230}
\definecolor{bucket3}{RGB}{240, 255, 240}
\definecolor{bucket4}{RGB}{230, 245, 255}
\definecolor{bucket5}{RGB}{245, 240, 255}
\newcommand{\numClusters}{C} %
\newcommand{\clusterIdx}{i} %
\newcommand{\clusterProb}{p_{\clusterIdx}} %
\newcommand{\zipfExp}{\alpha} %
\newcommand{\gtDictWidth}{d} %
\newcommand{\saeDictWidth}{d_\text{sae}} %
\newcommand{\saeClusterCapacity}{D_{\clusterIdx}} %
\newcommand{\allocExp}{\beta} %
\newcommand{\localRedundancyFactor}{R_{\clusterIdx}} %
\newtheorem{theorem}{Theorem}
\newtheorem{lemma}{Lemma}
\newtheorem{definition}{Definition}
\newtheorem{corollary}{Corollary}
\title{Position: Mechanistic Interpretability Should Prioritize Feature Consistency in SAEs}
\author{%
  Xiangchen Song\thanks{Equal contribution. Correspondence to \texttt{xiangchensong@cmu.edu} and \texttt{amuhamed@andrew.cmu.edu}.} \\
  Carnegie Mellon University \\
  \And
  Aashiq Muhamed\footnotemark[1] \\
  Carnegie Mellon University \\
  \And
  Yujia Zheng \\
  Carnegie Mellon University \\
  \And
  Lingjing Kong \\
  Carnegie Mellon University \\
  \And
  Zeyu Tang \\
  Carnegie Mellon University \\
  \And
  Mona T. Diab \\
  Carnegie Mellon University \\
  \And
  Virginia Smith \\
  Carnegie Mellon University \\
  \And
  Kun Zhang \\
  Carnegie Mellon University \& MBZUAI\\
}
\begin{document}

\maketitle

\begin{abstract}
\looseness=-1
Sparse Autoencoders (SAEs) are a prominent tool in mechanistic interpretability (MI) for decomposing neural network activations into interpretable features. However, the aspiration to identify a canonical set of features is challenged by the observed inconsistency of learned SAE features across different training runs, undermining the reliability and efficiency of MI research. \textbf{This position paper argues that mechanistic interpretability should prioritize feature consistency in SAEs}---the reliable convergence to equivalent feature sets across independent runs. We propose using the Pairwise Dictionary Mean Correlation Coefficient (PW-MCC) as a practical metric to operationalize consistency and demonstrate that high levels are achievable ($0.80$ for TopK SAEs on LLM activations) with appropriate architectural choices. 
Our contributions include detailing the benefits of prioritizing consistency; providing theoretical grounding and synthetic validation using a \emph{model organism}, which verifies PW-MCC as a reliable proxy for ground-truth recovery; and extending these findings to real-world LLM data, where high feature consistency strongly correlates with the semantic similarity of learned feature explanations.
We call for a community-wide shift towards systematically measuring feature consistency to foster robust cumulative progress in MI.\footnote{Code is available at \url{https://github.com/xiangchensong/sae-feature-consistency}.}

\end{abstract}

\section{Introduction}

Mechanistic Interpretability (MI) seeks to reverse-engineer neural networks into human-understandable algorithms \citep{olah2020zoom, elhage2021mathematical}, with Sparse Autoencoders (SAEs) emerging as a prominent tool for decomposing model activations into more interpretable, monosemantic features \citep{bricken2023monosemanticity, cunningham2023sparse, gao2025scaling, pach2025sparse}. The aspiration within the MI community is often to identify a canonical set of features---unique, complete, and atomic units of analysis that faithfully represent the model's internal computations \citep{leask2025sparse}. However, a significant challenge highlighted by recent work~\citep{paulo2025sparse, marks2024enhancing, fel2025archetypal, leask2025sparse, meloux2025everything} is the observed inconsistency of features learned by SAEs across different training runs, even when using identical data and model architectures. This instability, potentially arising from phenomena like feature splitting \citep{leask2025sparse, chanin2024absorption} or the amortization gap \citep{o2024compute}, undermines the reliability of derived interpretations, reduces research efficiency, and impacts the trust in findings derived from MI.

\textbf{Mechanistic Interpretability Should Prioritize Feature Consistency in SAEs}. We argue that the reliable convergence to equivalent feature sets across independent SAE training runs should be elevated from a secondary concern to an essential evaluation criterion and an active research priority. We present the Pairwise Dictionary Mean Correlation Coefficient (PW-MCC) as a concrete, working example of how consistency can be operationalized. Furthermore, we demonstrate that high levels of consistency are achievable with appropriate architectural and training choices (e.g., with TopK SAEs), and highlight the benefits of such prioritization for scientific rigor and practical utility in MI. 

Our main contributions are:
\begin{itemize}[leftmargin=*, itemsep=1pt, parsep=1pt, topsep=2pt, partopsep=0pt]
    \item \looseness=-1 We advocate for prioritizing feature consistency in MI for SAEs, detailing its benefits for scientific reproducibility, research efficiency, and trustworthiness of interpretations. We propose using PW-MCC as a practical metric for operationalizing run-to-run feature consistency (Section~\ref{sec:our_position}).
    \item \looseness=-1 We provide theoretical grounding for achieving strong feature consistency by connecting SAEs to established identifiability results in overcomplete sparse dictionary learning. We validate this using a synthetic \emph{model organism}, demonstrating that PW-MCC reliably tracks ground-truth feature recovery (GT-MCC) and that a specific SAE architecture (TopK SAE) achieves high consistency ($\approx 0.97$) under idealized, matched-capacity conditions (Section~\ref{sec:evidence_synthetic}).
    \item We demonstrate empirically on large language model activations that high feature consistency (PW-MCC $\approx 0.80$ for TopK SAEs) is attainable with appropriate architectural choices and training. Our real-world data experiments reflect findings from the synthetic setting (e.g., frequency-dependent consistency) and critically show that high PW-MCC scores correlate strongly with the semantic similarity of feature explanations (Section~\ref{sec:eval_consistency_real_world}).
\end{itemize}
Ultimately, this work calls for a community-wide shift towards valuing and systematically measuring feature consistency for cumulative progress in understanding the inner workings of complex neural models, and we outline several open questions and future research directions to achieve this.

\section{Background and Related Work}
\label{sec:related_work}

\paragraph{Sparse Autoencoders for MI.}
\looseness=-1
MI aims to reverse-engineer neural networks into human-understandable algorithms by identifying and explicating their internal components and computational processes~\citep{olah2020zoom, pach2025sparse, cunningham2023sparse}. A central challenge in MI is polysemanticity, where individual neurons respond to multiple, unrelated concepts, which obscures straightforward interpretation of model internals \citep{elhage2022toy}. SAEs have emerged as a tool to address this challenge by decomposing high-dimensional neural network activations into a sparser, higher-dimensional representation that aims to isolate \emph{monosemantic} features \citep{bricken2023monosemanticity, cunningham2023sparse}.
An SAE comprises an encoder and decoder network. The encoder transforms an input activation vector $\mathbf{x} \in \mathbb{R}^m$ into a sparse latent representation $\mathbf{f} \in \mathbb{R}^{d_{\text{sae}}}$ (where $d_{\text{sae}} > m$ establishes an overcomplete dictionary): $\mathbf{f}(\mathbf{x}) = \sigma(\mathbf{W}_{\text{enc}}\mathbf{x} + \mathbf{b}_{\text{enc}})$. Here, $\mathbf{W}_{\text{enc}}$ and $\mathbf{b}_{\text{enc}}$ are the encoder weights and biases, while $\sigma$ is a non-linear activation function that is sparsity-inducing. The decoder reconstructs the input from $\mathbf{f}(\mathbf{x})$ as $\hat{\mathbf{x}} = \mathbf{W}_{\text{dec}}\mathbf{f}(\mathbf{x}) + \mathbf{b}_{\text{dec}}$, where $\mathbf{W}_{\text{dec}}$ and $\mathbf{b}_{\text{dec}}$ are the decoder weights and biases. SAEs are trained by minimizing a loss function that balances reconstruction fidelity with sparsity: $L(\mathbf{x}) = \|\mathbf{x} - \hat{\mathbf{x}}\|_2^2 + \lambda S(\mathbf{f}(\mathbf{x}))$, where $S(\cdot)$ represents a sparsity-inducing penalty (e.g., L1 norm) and $\lambda$ controls the sparsity trade-off. 
Once trained, SAEs provides a decomposition of the input activation as $\smash{\mathbf{x} \approx \sum_{i=1}^{d_{\text{sae}}} f_i(\mathbf{x}) \mathbf{a}_i}$, where $f_i(\mathbf{x})$ are the sparse feature activations, and $\mathbf{a}_i$ are dictionary elements corresponding to columns of feature dictionary $\mathbf{A}$ (i.e., $\mathbf{W}_{\text{dec}}$). Several SAE variants have been proposed to improve feature quality and sparsity control. These include Standard ReLU \citep{bricken2023monosemanticity}, TopK \citep{gao2024scaling}, BatchTopK \citep{bussmann2024batchtopk}, Gated~\citep{rajamanoharan2024improving}, and JumpReLU~\citep{rajamanoharan2024jumping}. The ultimate aspiration for many researchers employing SAEs is to identify a \emph{canonical} set of features: unique, complete, and atomic units of analysis that faithfully represent the model's internal computations \citep{leask2025sparse}.

\paragraph{The Challenge of Feature Consistency.}
Despite their promise, SAEs trained on identical data and architectures but different random initializations often converge to substantially different feature sets~\citep{marks2024enhancing, karvonen2024saebench}, with overlap sometimes as low as 30\% for Standard SAEs~\citep{paulo2025sparse}. 
This inconsistency manifests through several documented phenomena, including \emph{feature splitting}, where concept representations vary across runs~\citep{leask2025sparse}, and \emph{feature absorption}, where general features are usurped by more specific ones~\citep{chanin2024absorption}.
These empirical instabilities stem from fundamental limitations in overcomplete dictionary learning~\citep{leask2025sparse}. Despite theoretical advances~\citep{sun2024global, donoho2003optimally}, the gap between idealized assumptions and practical implementations undermines guarantees for unique feature recovery. Existing approaches to address these limitations include Mutual Feature Regularization~\citep{marks2024enhancing}, which forces alignment between concurrently trained SAEs but addresses the effect rather than underlying causes, and Archetypal SAEs~\citep{fel2025archetypal}, which impose geometric constraints that may sacrifice representational power for stability. These challenges have fostered skepticism, with some \citep{paulo2025sparse} suggesting that SAE features should be viewed as pragmatically useful decompositions rather than exhaustive, universal sets.
Contrary to this prevailing skepticism, our work shows that high feature consistency is attainable through careful architectural and training choices without explicit alignment mechanisms.

\section{Mechanistic Interpretability should prioritize Feature Consistency in SAEs}
\label{sec:our_position}

A prerequisite for the scientific validity and practical utility of features extracted in SAEs, is their \emph{consistency}---the reliable convergence to equivalent feature sets across independent training runs given the same data and model architecture. Feature consistency should be elevated from a secondary concern to an essential evaluation criterion and an active research priority for the following reasons.

\textbf{Firstly, achieving consistency yields substantial benefits for current MI practices that use SAEs.}
\begin{itemize}[leftmargin=*,itemsep=1pt,topsep=0pt,parsep=0pt]
    \item \textit{Improved Scientific Reproducibility:} Reproducibility is a cornerstone of science. If SAEs produce different feature dictionaries run-to-run \citep{paulo2025sparse, fel2025archetypal}, feature explanations and discovered circuits become difficult to replicate. Consistency ensures findings are robust, not initialization artifacts, and helps foster cumulative progress.
    \item \textit{Improved Research Efficiency and Resource Allocation:} Significant effort is invested in interpreting SAE features, whether manually or through automated methods~\citep{bricken2023monosemanticity}. If features are not consistent across training runs, this entire interpretation process: identifying, labeling, and understanding features must be repeated for each new SAE training instance. Consistent features, however, can be reliably matched across instances, allowing interpretations to be reused and incrementally refined, thereby saving substantial researcher time and computational resources.
     \item \textit{Increased Trust in Explanations from SAEs on Private Data:} When SAEs are trained on private data and their dictionaries and feature explanations are shared, feature consistency increases credibility of the explanations. Although the training process (e.g., random initialization) can create features that are artifacts of that specific run, a feature consistently emerging across multiple runs is more likely a stable abstraction learned from the data distribution than an ephemeral artifact. This measurable robustness to training variability lends greater confidence that the interpretations reflect genuine learned patterns, which is even more important when direct data validation is impossible.
\end{itemize}

\textbf{Secondly, many current SAE techniques already implicitly assume feature consistency, even if this assumption is not explicitly verified.} The long-term MI ambition of identifying \emph{canonical units of analysis}---features that are complete, atomic, and unique \citep{leask2025sparse}---requires run-to-run consistency as a prerequisite before addressing complexities like atomicity or completeness. Another example of implicit reliance is \textit{feature stitching} \citep{leask2025sparse}, which compares or combines features across different models or SAEs; this process relies on identifiable and stable underlying features. Similarly, downstream applications like model steering~\citep{chalnev2024improving}, unlearning~\citep{muhamed2025saes}, bias removal ~\citep{marks2024sparse}), and feature ablation assume that the targeted features are well-defined, consistent entities. If these base features are unstable or non-identifiable, the reliability of such applications is severely compromised.

\paragraph{Our Proposal: Defining and Measuring Feature Consistency.}
While prior work has highlighted the challenge of feature inconsistency in SAEs \citep{paulo2025sparse, marks2024enhancing, o2024compute}, often leading to pessimistic conclusions about achieving stable feature sets, our findings demonstrate that high levels of feature consistency \textit{are} attainable, with appropriate architectural choices. This motivates a renewed focus on consistency as a key dimension for evaluation. Our central position is that the MI community \textbf{should prioritize feature consistency in SAEs}.
 This requires not only acknowledging its importance but also adopting rigorous methods for its quantification. Conceptually, feature consistency implies that two feature dictionaries, $\smash{\mathbf{A}}$ and $\smash{\mathbf{A}'}$ (both $\smash{\in \mathbb{R}^{m \times d_{\text{sae}}}}$ with $\smash{d_{\text{sae}}}$ features), learned from independent training runs using same dataset, should capture the same underlying concepts. We formalize this ideal with an empirically tractable notion, \textbf{Strong Feature Consistency}, where the dictionaries are considered equivalent if their feature vectors align up to a permutation and individual non-zero scaling factors. That is, for each feature vector $\smash{\mathbf{a}_i}$ in $\smash{\mathbf{A}}$, there should ideally exist a corresponding feature vector $\smash{\mathbf{a}_{\sigma(i)}'}$ in $\smash{\mathbf{A}'}$ (where $\sigma$ is a permutation) such that $\smash{\mathbf{a}_i  = \lambda_i \mathbf{a}_{\sigma(i)}'}$ for some scaling factor $\smash{\lambda_i \neq 0}$. More general notions of consistency are detailed in Appendix~\ref{app:formal_consistency_definitions}.

To make this actionable, throughout this paper we adopt a commonly used evaluation metric from the independent component analysis literature \citep{hyvarinen2000independent}: the Mean Correlation Coefficient (MCC). This metric directly evaluates permutation and scaling equivalence, making it a robust measure of Strong Feature Consistency for dictionary-based features in SAEs. Cosine similarity addresses arbitrary positive scaling, while the use of the absolute value in MCC accounts for feature sign. We present this as a concrete, working example of how consistency can be operationalized, although alternative metrics may be more appropriate for other feature types or notions of equivalence.

We define a general \textbf{Mean Correlation Coefficient (MCC)} between any two feature dictionaries $\smash{\mathbf{A} \in \mathbb{R}^{m \times d_A}}$ and $\smash{\mathbf{B} \in \mathbb{R}^{m \times d_B}}$ with columns $\smash{\mathbf{a}_i}$ and $\smash{\mathbf{b}_j}$ respectively. Let $\smash{n = \min(d_A, d_B)}$ and $\smash{\mathcal{M}_{n}(\mathbf{A}, \mathbf{B})}$ be the set of all possible one-to-one matchings of size $\smash{n}$ between the features of $\smash{\mathbf{A}}$ and $\smash{\mathbf{B}}$. The MCC is defined as:
\begin{equation*}
\text{MCC}(\mathbf{A}, \mathbf{B}) = \frac{1}{n} \max_{M \in \mathcal{M}_{n}(\mathbf{A}, \mathbf{B})} \sum_{(i,j) \in M} \frac{|\langle \mathbf{a}_{i}, \mathbf{b}_{j} \rangle|}{\|\mathbf{a}_{i}\|_2 \|\mathbf{b}_{j}\|_2}.
\end{equation*}

The optimal matching $M^*$ that achieves this maximum is typically found using the Hungarian algorithm.
From this general definition, we derive two specific metrics for our evaluations:

1. \textbf{Pairwise Dictionary Mean Correlation Coefficient (PW-MCC)}: When comparing two dictionaries $\smash{\mathbf{A}}$ and $\smash{\mathbf{A}'}$ of learned features, both of size $\smash{d_{\text{sae}}}$, we use $\text{PW-MCC}(\mathbf{A}, \mathbf{A}') = \text{MCC}(\mathbf{A}, \mathbf{A}')$ where $n = d_{\text{sae}}$. A PW-MCC approaching unity signifies robust convergence to highly similar feature dictionaries across independent training runs.

2. \textbf{Ground-Truth MCC (GT-MCC)}: In controlled synthetic environments, where a ground-truth dictionary $\smash{\mathbf{A}_{\text{gt}} \in \mathbb{R}^{m \times d_{\text{gt}}}}$ is known, we use $\text{GT-MCC}({\mathbf{A}}, \mathbf{A}_{\text{gt}}) = \text{MCC}({\mathbf{A}}, \mathbf{A}_{\text{gt}})$ to evaluate the recovery quality of a learned dictionary $\smash{{\mathbf{A}} \in \mathbb{R}^{m \times d_{\text{sae}}}}$, where $n = \min(d_{\text{sae}}, d_{\text{gt}})$. GT-MCC can be used for validating PW-MCC as a proxy for consistency.

Prioritizing consistency, and employing well-defined metrics such as PW-MCC to quantify it, offers several advantages: (i) it provides an objective measure of run-to-run stability for this key notion of feature equivalence; (ii) it facilitates equitable comparisons across methods and settings; and (iii) it incentivizes the development of techniques that yield more reliable features. In the following sections, we provide evidence from theory, synthetic experiments, and real-world applications to support our position and illustrate both the attainability and the challenges of achieving high feature consistency.

\section{Evidence from Theoretical Analysis and Synthetic Experiments}
\label{sec:evidence_synthetic}

\subsection{Theoretical Foundations for Feature Consistency}
\label{sec:theoretical_foundations}

SAEs learn to represent input data $\mathbf{X} \in \mathbb{R}^{m \times n}$ through a dictionary $\mathbf{A} \in \mathbb{R}^{m \times d_{\text{sae}}}$ and corresponding sparse activations $\mathbf{F} \in \mathbb{R}^{d_{\text{sae}} \times n}$, such that $\mathbf{X} \approx \mathbf{A}\mathbf{F}$. Previous work often dismisses non-invertible dictionaries as non-consistent~\citep{o2024compute, joshi2025identifiable}, particularly in the overcomplete regime of dictionary learning where $d_{\text{sae}} > m$. However, this overlooks the natural sparsity present in real signals. Drawing inspiration from sparse dictionary learning literature, we show how sparsity enables feature consistency guarantees even in overcomplete settings.
We build our analysis on the \textbf{spark condition}~\citep{hillar2015when, donoho2003optimally}, which precisely characterizes when unique sparse representations exist:

\begin{definition}[Spark condition]\label{def:spark_condition}
A dictionary $\mathbf{A}\in\mathbb{R}^{m\times d_{\text{sae}}}$ satisfies the spark condition at sparsity level $k$ if for any two $k$-sparse vectors $\mathbf{f},\mathbf{f}'\in\mathbb{R}^{d_{\text{sae}}}$, the equality $\mathbf{A}\mathbf{f} = \mathbf{A}\mathbf{f}'$ implies that $\mathbf{f}=\mathbf{f}'$.
\end{definition}
This condition ensures that distinct $k$-sparse vectors produce distinct outputs when transformed by the dictionary $\mathbf{A}$. Equivalently, 
it provides \emph{injectivity of the linear map $\mathbf{A}$ on the set of $k$-sparse vectors} $\Sigma_k := \{\mathbf{f} \in \mathbb{R}^{d_{\text{sae}}} : \|\mathbf{f}\|_0 \leq k\}$. This is precisely the algebraic property needed for uniqueness of sparse representations.
We leverage the following result from \citep{hillar2015when}:

\begin{theorem}[Adapted from \citep{hillar2015when}]\label{thm:hillar} 
Fix sparsity level $k$. There exists a witness set of $\smash{n=k\binom{d_{\text{sae}}}{k}^{2}}$ $k$-sparse vectors $\smash{\mathbf{f}_1,\dots,\mathbf{f}_n \in \Sigma_k}$ such that for \emph{any} pair of dictionaries $\smash{\mathbf{A},\mathbf{A}' \in \mathbb{R}^{m\times d_{\text{sae}}}}$ satisfying the spark condition, the factorizations
$
\smash{
\mathbf{X}=[\mathbf{A}\mathbf{f}_1,\dots,\mathbf{A}\mathbf{f}_n]
\quad\text{and}\quad
\mathbf{X}=[\mathbf{A}'\mathbf{f}'_1,\dots,\mathbf{A}'\mathbf{f}'_n]
}
$
with $k$-sparse codes must coincide up to a permutation and scaling of columns:
$\smash{\mathbf{A}' = \mathbf{A}\mathbf{P}\mathbf{D}}$ and $\smash{\mathbf{F}' = \mathbf{D}^{-1}\mathbf{P}^{\top}\mathbf{F}}$ for some permutation matrix $\mathbf{P}$ and diagonal invertible $\mathbf{D}$.
\end{theorem}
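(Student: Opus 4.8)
The plan is to construct an explicit witness set whose support patterns exhaust all $k$-subsets with enough multiplicity and whose nonzero entries sit in general position, and then to recover $\mathbf{A}$ from $\mathbf{A}'$ in two stages: first match, for every $k$-subset of column indices, the associated $k$-dimensional column span; then upgrade this matching of subspaces to a matching of individual columns up to scaling. Throughout write $d := d_{\text{sae}}$. The first move is to reduce to a statement about $\mathbf{A}$ alone: since both $\mathbf{A},\mathbf{A}'$ satisfy Definition~\ref{def:spark_condition}, every column $\mathbf{x}_i = \mathbf{A}\mathbf{f}_i = \mathbf{A}'\mathbf{f}'_i$ has a \emph{unique} $k$-sparse preimage under each dictionary, so $\mathbf{F}$ is determined by $(\mathbf{X},\mathbf{A})$ and $\mathbf{F}'$ by $(\mathbf{X},\mathbf{A}')$; hence it suffices to prove $\mathbf{A}' = \mathbf{A}\mathbf{P}\mathbf{D}$, after which $\mathbf{F}' = \mathbf{D}^{-1}\mathbf{P}^{\top}\mathbf{F}$ follows by applying injectivity of $\mathbf{A}$ on $\Sigma_k$ to the $k$-sparse vectors $\mathbf{f}_i$ and $\mathbf{P}\mathbf{D}\mathbf{f}'_i$.

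For the witness set, for each of the $\binom{d}{k}$ index subsets $S$ I would include a block $\mathcal{F}_S$ of $k\binom{d}{k}$ vectors supported on $S$ whose restrictions to the coordinates of $S$ form a Vandermonde-type family (entries $t_\ell^{0},\dots,t_\ell^{k-1}$ for fixed distinct nonzero scalars $t_\ell$), so that any $k$ members of $\mathcal{F}_S$ are linearly independent. This choice is explicit and references no dictionary, and the total is $n = k\binom{d}{k}^{2}$. Because $\mathbf{A}$ restricted to the coordinates in $S$ is injective (a consequence of the spark condition), the images of $\mathcal{F}_S$ span the $k$-dimensional subspace $V_S := \operatorname{span}(\mathbf{a}_j : j\in S)$ and remain in general position inside it, with the analogous statement for $\mathbf{A}'$.

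Now the two-stage recovery. Fix $S$. Each image $\mathbf{A}\mathbf{f}$ with $\mathbf{f}\in\mathcal{F}_S$ equals $\mathbf{A}'\mathbf{f}'$ for some $k$-sparse $\mathbf{f}'$ whose support lies in one of the $\binom{d}{k}$ subsets $T$; a pigeonhole over the $k\binom{d}{k}$ members of $\mathcal{F}_S$ forces at least $k$ of them to share a common $T$, and being linearly independent they span $V_S$, so $V_S = V'_T := \operatorname{span}(\mathbf{a}'_j : j\in T)$. This defines a map $\pi$ on $k$-subsets with $V_S = V'_{\pi(S)}$, which is a bijection by running the argument with $\mathbf{A},\mathbf{A}'$ exchanged. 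To pass from subspaces to columns, fix an index $j$: the spark condition (any $\le 2k$ columns independent) gives $\bigcap_{S\ni j} V_S = \mathbb{R}\mathbf{a}_j$, already visible from two supports meeting only at $j$; transporting through $\pi$ and applying the same spark argument to $\mathbf{A}'$ shows the sets $\pi(S)$, $S\ni j$, have a common element $\tau(j)$ with $\bigcap_{S\ni j} V'_{\pi(S)} = \mathbb{R}\mathbf{a}'_{\tau(j)}$. Hence $\mathbf{a}'_{\tau(j)} = \lambda_j\mathbf{a}_j$ for some $\lambda_j\neq 0$; $\tau$ is a bijection by symmetry, so $\mathbf{A}' = \mathbf{A}\mathbf{P}\mathbf{D}$, which together with the first paragraph completes the proof.

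I expect the main obstacle to be making the genericity of the witness set \emph{universal}: a single fixed set of codes must work for every spark-satisfying pair $(\mathbf{A},\mathbf{A}')$ simultaneously, which is why genericity is imposed on the codes themselves and only transferred to images through the injective restrictions of the dictionaries, rather than via any ``almost every dictionary'' genericity argument. A secondary difficulty is the bookkeeping that lands the pigeonhole counts exactly at $k\binom{d}{k}^{2}$ while keeping the subspace-intersection steps valid in borderline regimes where $d$ is small relative to $k$. The conversion of a permutation of $k$-dimensional subspaces into a permutation of columns is where the spark condition does its essential work and is the conceptual core of the argument.
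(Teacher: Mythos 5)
The paper itself contains no proof of Theorem~\ref{thm:hillar}: the result is imported from \citep{hillar2015when} (the appendix only proves the separate claim that the round-trip property implies the spark condition), so the relevant comparison is with Hillar and Sommer's original argument rather than anything in this paper. Your sketch is essentially a correct reconstruction of that style of proof, and its structure is sound: a universal, dictionary-independent witness set enumerating all $\binom{d}{k}$ supports with $k\binom{d}{k}$ codes per support in general position (Vandermonde blocks, so any $k$ of them are independent); a pigeonhole over the $\binom{d}{k}$ possible supports of the $\mathbf{A}'$-codes, yielding $k$ independent images inside a common $V'_T$ and hence $V_S=V'_{\pi(S)}$ (both spans are $k$-dimensional because the spark condition makes any set of at most $2k$ columns independent); and the passage from matched subspaces to matched columns via $V_{S_1}\cap V_{S_2}=\mathbb{R}\mathbf{a}_j$ for $S_1\cap S_2=\{j\}$, transported through $\pi$ and intersected on the $\mathbf{A}'$ side, where independence of the columns indexed by $T_1\cup T_2$ identifies that intersection with $\operatorname{span}(\mathbf{a}'_i: i\in T_1\cap T_2)$, and one-dimensionality forces $|T_1\cap T_2|=1$ and $\mathbf{a}'_{\tau(j)}=\lambda_j\mathbf{a}_j$. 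Two small remarks: you do not actually need $\pi$ to be a bijection---injectivity of $j\mapsto\tau(j)$ already follows because distinct columns of $\mathbf{A}$ are non-proportional under the spark condition, and an injective self-map of a finite index set is the desired permutation; and your opening reduction is right, since once $\mathbf{A}'=\mathbf{A}\mathbf{P}\mathbf{D}$, applying injectivity of $\mathbf{A}$ on $\Sigma_k$ to $\mathbf{f}_i$ and $\mathbf{P}\mathbf{D}\mathbf{f}'_i$ forces $\mathbf{F}'=\mathbf{D}^{-1}\mathbf{P}^{\top}\mathbf{F}$.

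The one hypothesis you are implicitly using is $d_{\text{sae}}\ge 2k-1$, needed to find two $k$-supports meeting exactly in $\{j\}$; you flag this yourself. It is a genuine restriction on the literal statement (for $d_{\text{sae}}\le k$ every code is $k$-sparse, the spark condition degenerates to plain injectivity, and any invertible recombination of the dictionary defeats the conclusion), but it is harmless in the regime where the paper invokes the theorem: $\operatorname{spark}(\mathbf{A})>2k$ forces $m\ge 2k$, and overcompleteness gives $d_{\text{sae}}>m\ge 2k$. Within that regime your argument is complete modulo writing out routine linear algebra.
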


\paragraph{Implications for TopK SAE Feature Consistency.}
TopK SAEs achieve feature consistency by satisfying the conditions required for unique sparse factorization. Consider a TopK SAE with encoder $E$ and decoder $\mathbf{A}$ that enforces exactly $k$-sparse activations via $\mathbf{f}\;\mapsto\;\operatorname{TopK}_k(\mathbf{f})$. The training objective simultaneously encourages three key properties: \textbf{(1) Exact $k$-sparsity} by construction of the TopK constraint, which zeros all but the $k$ largest coordinates; \textbf{(2) Zero reconstruction error} by minimizing $\|\mathbf{X}-\mathbf{A}\mathbf{F}\|_F$ on data containing the witness set from Theorem~\ref{thm:hillar}; and \textbf{(3) The spark condition} through what we term the \emph{round-trip property} $E(\mathbf{A}\mathbf{f}) = \mathbf{f}$. As we prove in Appendix~\ref{sec:roundtrip-spark-proof}, this round-trip property directly implies the spark condition. 
When these three conditions hold with data coverage meeting the requirements of Theorem~\ref{thm:hillar}, any two TopK SAEs trained on the same data must learn dictionaries that are identical up to permutation and scaling, establishing \textbf{strong feature consistency}  we introduced in Section~\ref{sec:our_position} even in overcomplete regimes. This explains why TopK SAEs can achieve consistent features: their training objective directly optimizes for the mathematical prerequisites required by the identifiability theorem.

\begin{conclusionbox}
\small
\textbf{Takeaway:} SAEs with $k$-sparsity and minimal reconstruction error satisfy strong feature consistency when the learned dictionary meets the spark condition.
\end{conclusionbox}

\subsection{Synthetic Verification}
To empirically validate our theoretical analysis, we conduct synthetic experiments comparing two representative SAE variants: TopK SAE and Standard SAE. We show that models designed according to our theoretical criteria achieve consistent feature representations.

\begin{figure}[tbp]
    \centering
    \begin{minipage}[t]{0.3234\textwidth}
        \centering
        \includegraphics[width=\textwidth]{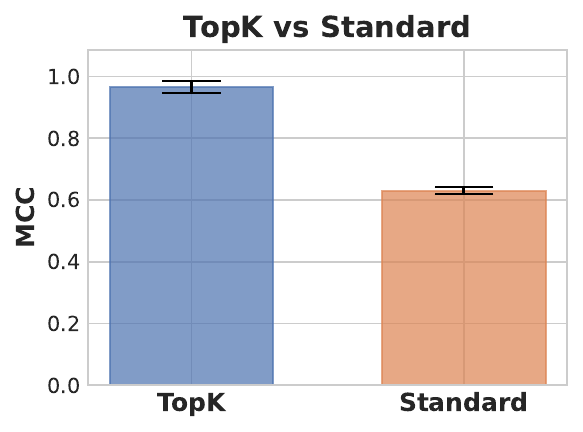}
        \caption{\small TopK SAE is significantly better than Standard SAE (0.97 vs 0.63) in terms of GT-MCC.}
        \label{fig:top_k_vs_standard}
    \end{minipage}
    \hfill
    \begin{minipage}[t]{0.66\textwidth}
        \centering
        \begin{minipage}[b]{0.49\textwidth}
            \includegraphics[width=\textwidth]{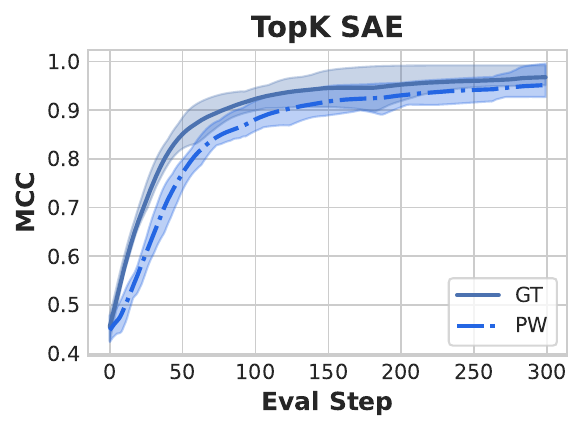}
        \end{minipage}%
        \hfill
        \begin{minipage}[b]{0.49\textwidth}
            \includegraphics[width=\textwidth]{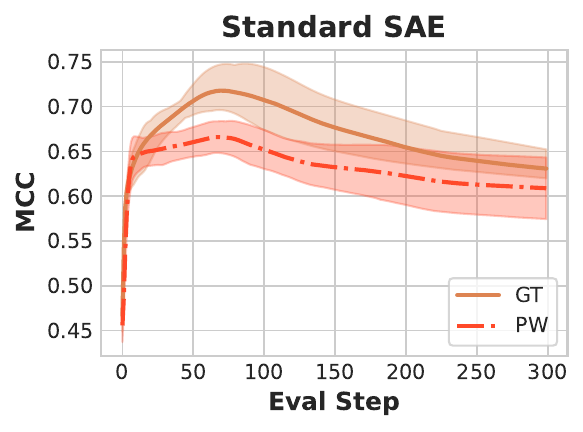}
        \end{minipage}
     \caption{\small GT-MCC and PW-MCC for TopK and Standard SAE. PW-MCC follows the same trend as GT-MCC, both converging to comparable values. Shaded region represents max-min range across seeds.}
        \label{fig:synthetic_sae_curve}
    \end{minipage}
\end{figure}

Following conventions in dictionary learning literature, we generate synthetic data by first sampling a ground-truth feature dictionary $\smash{\mathbf{A}_{\text{gt}} \in \mathbb{R}^{m \times d_{\text{gt}}}}$ from a standard normal distribution. We can represent all data points as $
 \smash{   \mathbf{X} = \mathbf{A}_{\text{gt}}\mathbf{F}_{\text{gt}}}$,
where $\smash{\mathbf{F}_{\text{gt}} \in \mathbb{R}^{d_{\text{gt}} \times n}}$ contains the activations for all $n$ data points. For each individual data sample $\mathbf{x}$, we enforce the $k$-sparse condition by randomly selecting $k$ features and setting their values to independent Gaussian samples: $
    \mathbf{x} = \mathbf{A}_{\text{gt}}\,\mathbf{f}_{\text{gt}}(\mathbf{x}),
$
where $\mathbf{f}_{\text{gt}}(\mathbf{x}) \in \mathbb{R}^{d_{\text{gt}}}$ represents a single column of $\mathbf{F}_{\text{gt}}$ corresponding to data point $\mathbf{x}$ and contains at most $k$ non-zero entries. 
In this synthetic setting ($\smash{m=8, d_{\text{gt}} = 16, k=3, n=5e4}$) we can evaluate the estimated feature dictionary $\mathbf{A}$ against the ground-truth $\mathbf{A}_{\text{gt}}$ using GT-MCC. We also conduct additional experiments by training multiple SAEs (5 seeds) with identical data and model architecture but different weight initializations, comparing the PW-MCC curves with the GT-MCC curves.
Figure~\ref{fig:top_k_vs_standard} presents the final MCC evaluation results, showing that TopK SAE achieves significantly better consistency than Standard SAE, which confirms our analysis in Theorem~\ref{thm:hillar}. More importantly, Figure~\ref{fig:synthetic_sae_curve} demonstrates that the empirical PW-MCC values follow the same trend as GT-MCC, achieving comparable final values, suggesting that PW-MCC serves as an effective alternative to GT-MCC when ground truth dictionaries are unavailable. We refer to this setting as the \textbf{matched regime}, where the empirical dictionary size \(d_{\text{SAE}}\) matches \(d_{\text{gt}}\). In all experiments for TopK SAE, the empirical sparsity value $k$ used in during training matches the ground truth sparsity. See Appendix~\ref{app:misspecifying_k} for the extended analysis when $k$ is misspecified.

\begin{conclusionbox}
\small
\textbf{Takeaway:} We observe that Pairwise MCC converges to GT-MCC and strong feature consistency is achieved with TopK SAE in synthetic matched settings.
\end{conclusionbox}

\subsection{A Synthetic Model Organism for Analyzing Feature Consistency}
\label{sec:model_organism_consistency_analysis}

\looseness=-1
While TopK SAEs can achieve high feature consistency under idealized matched-capacity scenarios, real-world data introduces substantial complexities that degrade this ideal. To show how these complexities affect feature consistency, we develop and analyze a synthetic model organism, progressively introducing realistic data characteristics. This allows us to observe how metrics like PW-MCC respond to these challenges providing insights into their continued diagnostic utility.

\textbf{Consistency and Global Capacity Regimes.}
Fundamental challenges to feature consistency arise even before considering heterogeneous ground-truth distributions, stemming from the relationship between the SAE's dictionary width $d_{\text{sae}}$ and ground-truth dictionary width $d_{\text{gt}}$. Using the linear generative model ($\mathbf{X} = \mathbf{A}_{\text{gt}}\mathbf{F}_{\text{gt}} $) earlier, where ground-truth $\mathbf{f}(\mathbf{x})$ are $k$-sparse ($k=8$ in our experiments) and when all ground-truth features are uniformly sampled, we observe distinct behaviors. 

In a \textbf{globally redundant regime} ($d_{\text{sae}} > d_{\text{gt}}$), where the SAE has more dictionary features than the ground truth (e.g., $d_{\text{sae}}=160, d_{\text{gt}}=80,k=8,n=5e4$), it can achieve high alignment with the ground-truth dictionary (GT-MCC $0.95$). This suggests learned features accurately represent underlying concepts (\autoref{fig:mcc_combined}). However, run-to-run consistency is often significantly lower (PW-MCC $0.77$). This discrepancy arises from \textbf{selection ambiguity}: with excess capacity, multiple learned dictionary vectors can be comparably good matches for a single ground-truth feature, leading to different, yet individually valid, feature sets being learned across runs. The lower PW-MCC here appropriately reflects this reduced stability in feature selection.  

Conversely, in a \textbf{globally compressive regime} ($d_{\text{sae}} < d_{\text{gt}}$), where the SAE has insufficient capacity (e.g., $d_{\text{sae}}=80, d_{\text{gt}}=800, k=8,n=5e4$), both GT-MCC ($0.75$) and PW-MCC ($0.60$) are diminished due to the inability to represent all true features (\autoref{fig:mcc_combined}).  The parallel decline of both metrics indicates their shared sensitivity to fundamental capacity limitations.
These global capacity mismatches show that consistency in practice might be harder to achieve, and PW-MCC provides a direct measure of this practical stability.

\textbf{Zipfian Feature Frequencies and Non-Uniform Capacity Allocation.}
A primary characteristic of natural language data is the Zipfian (power-law) distribution of underlying feature frequencies (\autoref{fig:synthetic_token_distribution_main_text_pos_v5})—a few features are common, many are rare. We study the effect of this heterogeneity in the globally compressive regime where SAEs operate in, given the vast number of true concepts versus typical dictionary sizes \citep{bricken2023towards_monosemanticity_scaling}.
To model this, we partition our synthetic ground truth features into $\numClusters$ clusters uniformly, each containing $\gtDictWidth$ features, and impose an arbitrary ranking on these clusters. Data points are generated by first sampling a cluster $\clusterIdx$ with probability $\clusterProb$ (following a Zipfian distribution with exponent $\zipfExp$), and then sampling $k$ true features from that cluster.

Post-training analysis of TopK SAEs trained on this data show that SAEs do not allocate their dictionary capacity uniformly across these clusters. Instead, the effective capacity $\saeClusterCapacity$ (number of learned SAE features, matched via Hungarian algorithm, corresponding to ground-truth cluster $\clusterIdx$) is well-approximated by a power law: $\smash{ \saeClusterCapacity = \saeDictWidth \cdot {\clusterProb^{\allocExp}}/{\sum_{j} {p_{j}}^{\allocExp}}}$. Our experiments empirically find $\allocExp \approx 1.4$. Thus, in a globally compressive setting (e.g., $\numClusters=10$, $\gtDictWidth=80$ per cluster, total $d_{\text{gt}} = 800$; $d_{\text{sae}}=80, k=8$), more frequent clusters (higher $\clusterProb$) receive a proportionally larger share of the SAE's limited dictionary representation and, as a result, exhibit higher GT-MCC scores, indicating better feature recovery for more common concepts (\autoref{fig:cluster_metrics_beta1_1_main_text_pos_v5}). This differential ground-truth recovery suggests that run-to-run consistency would similarly depend on the frequency of features, a pattern that feature-level PW-MCC analysis would capture. For additional details see \autoref{app:supplementary}.

\begin{figure}[tbp]
    \centering
    \begin{minipage}[t]{0.48\textwidth}
        \centering
        \begin{minipage}[t]{0.48\textwidth}
            \includegraphics[width=\textwidth]{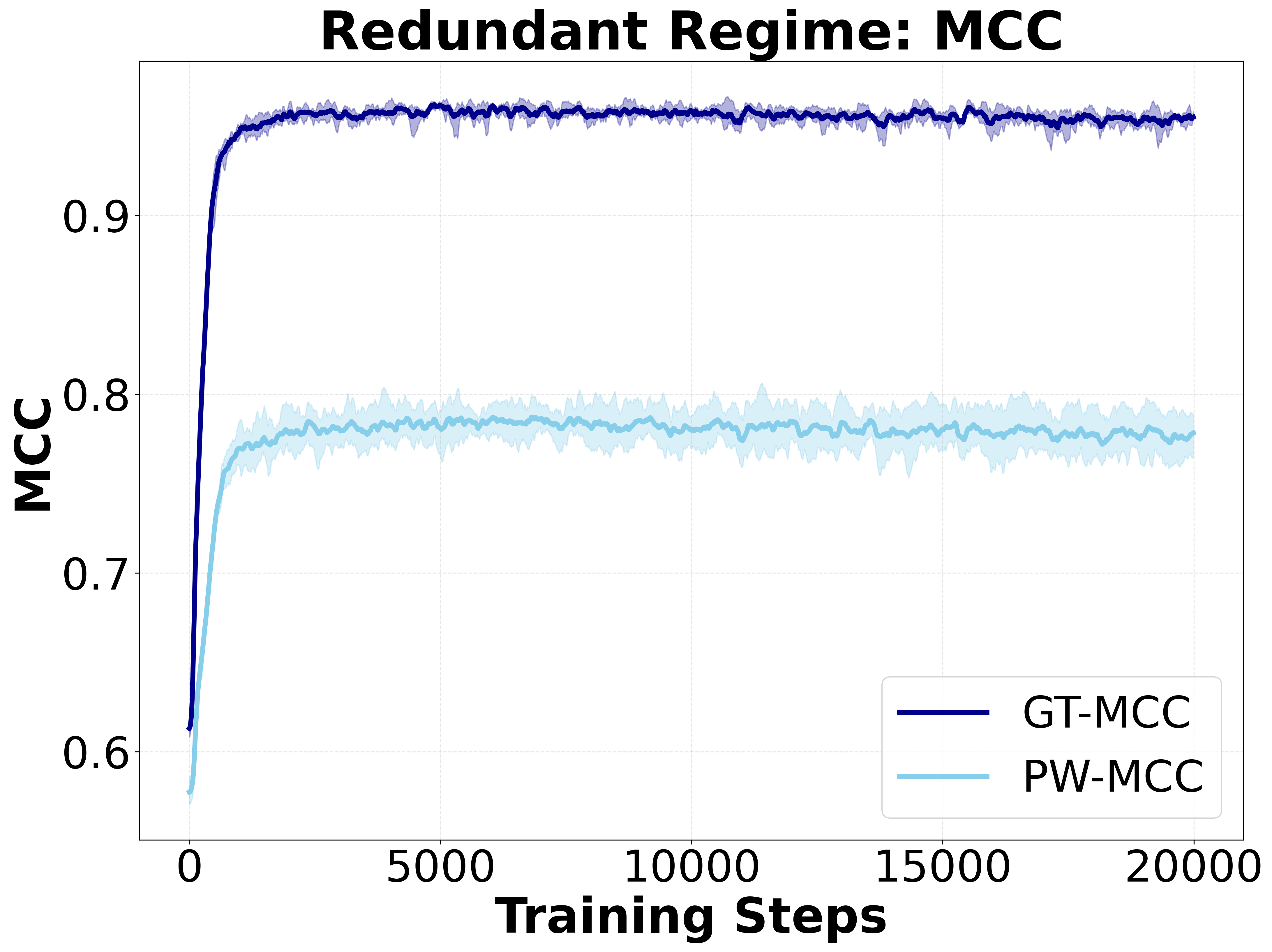}
        \end{minipage}%
        \hfill
        \begin{minipage}[t]{0.48\textwidth}
            \includegraphics[width=\textwidth]{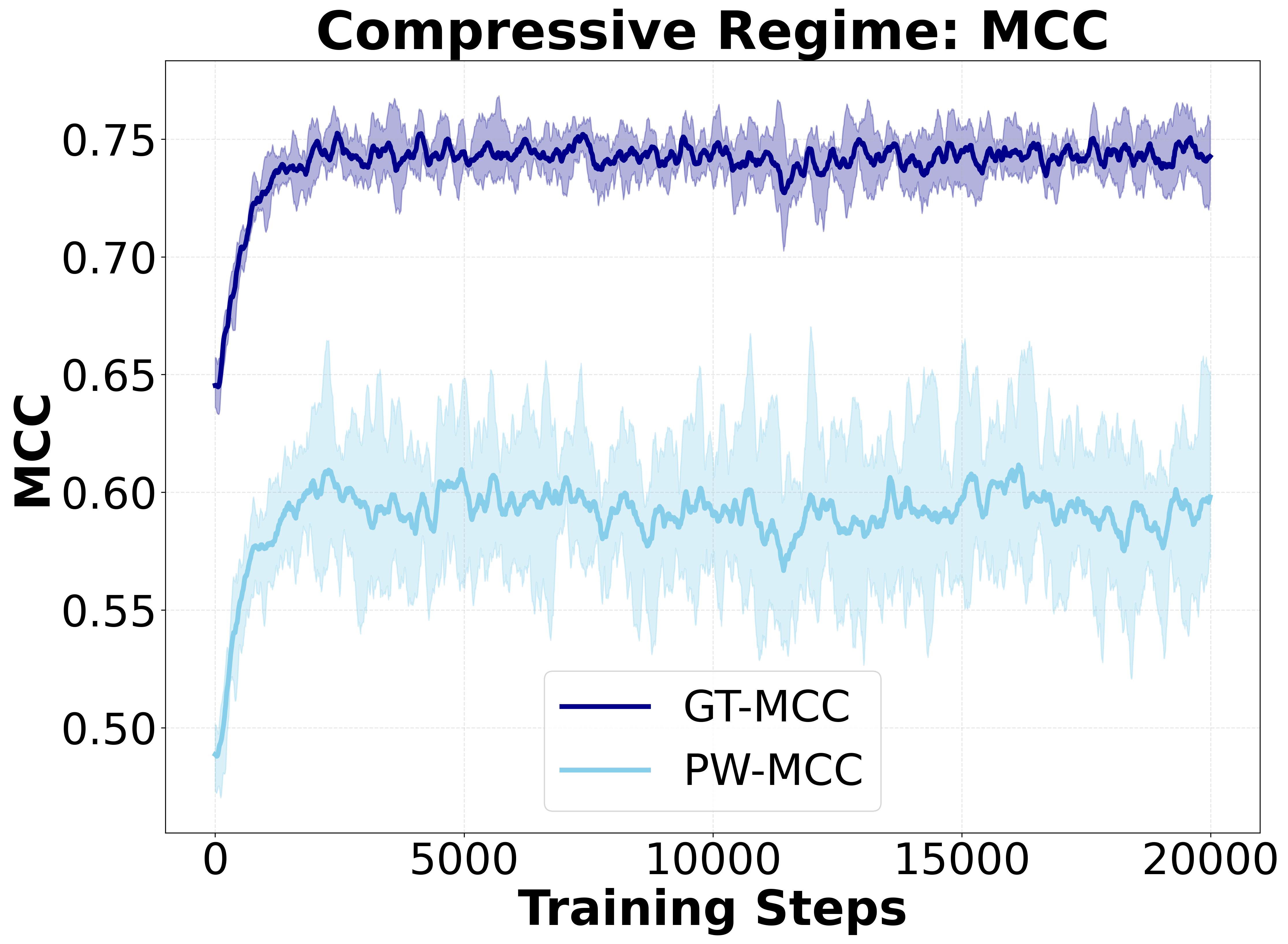}
        \end{minipage}
        \caption{\small Left: Redundant regime with high GT-MCC but lower PW-MCC due to selection ambiguity. Right: Compressive regime with lower GT-MCC and PW-MCC. Max-min range across 5 seeds is shaded.}
        \label{fig:mcc_combined}
    \end{minipage}
    \hfill
    \begin{minipage}[t]{0.48\textwidth}
        \centering
        \begin{minipage}[t]{0.48\textwidth}
            \includegraphics[width=\textwidth]{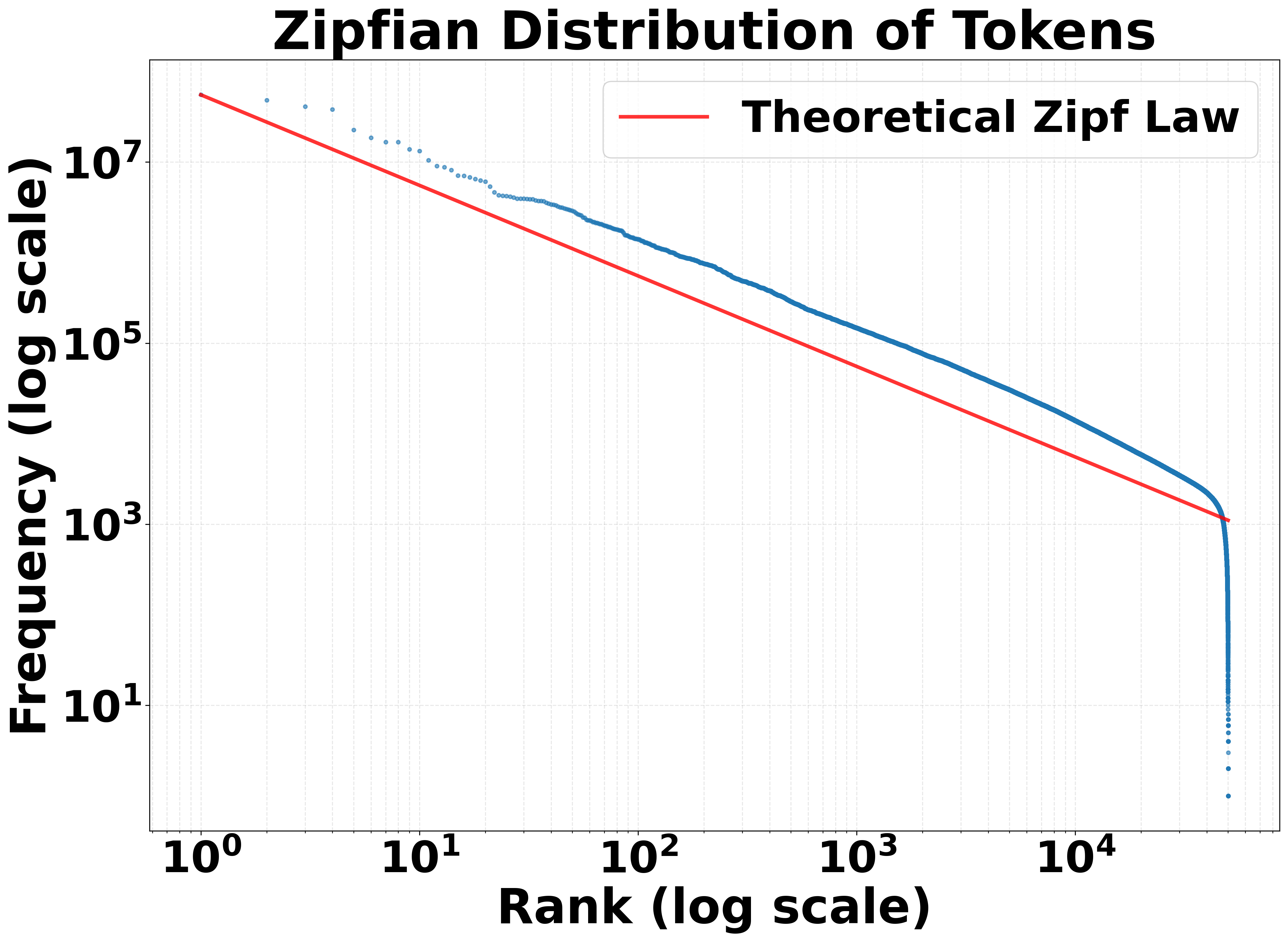}
        \end{minipage}%
        \hfill
        \begin{minipage}[t]{0.48\textwidth}
            \includegraphics[width=\textwidth]{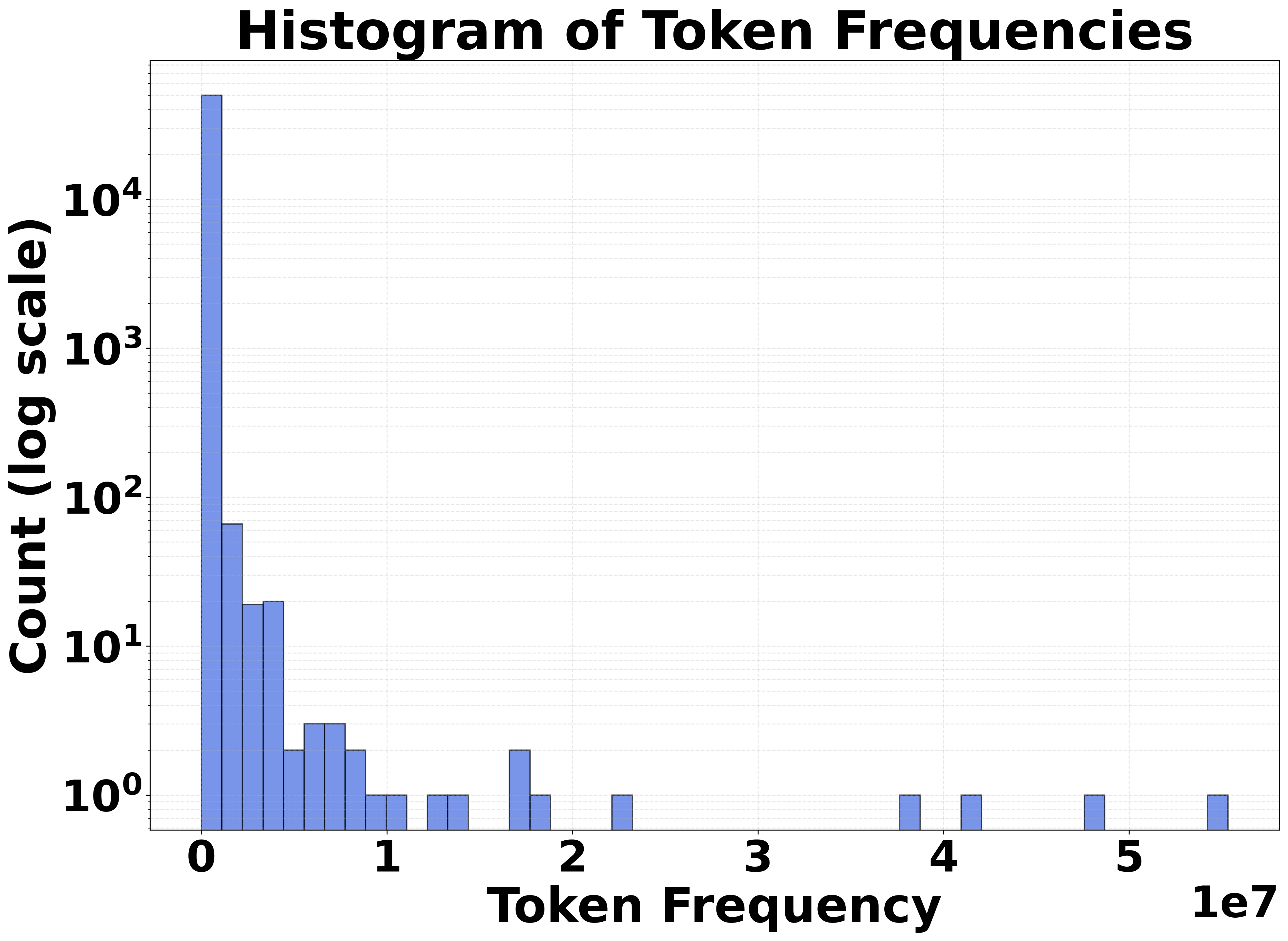}
        \end{minipage}
        \caption{\small Token frequency in 1M tokens from Pile, showing the Zipfian distribution in real data, with a long and sparse tail.}
        \label{fig:synthetic_token_distribution_main_text_pos_v5}
    \end{minipage}
\end{figure}

\begin{figure}[tbp]
    \centering
    \begin{minipage}[t]{0.49\textwidth}
        \centering
    \includegraphics[height=3cm]{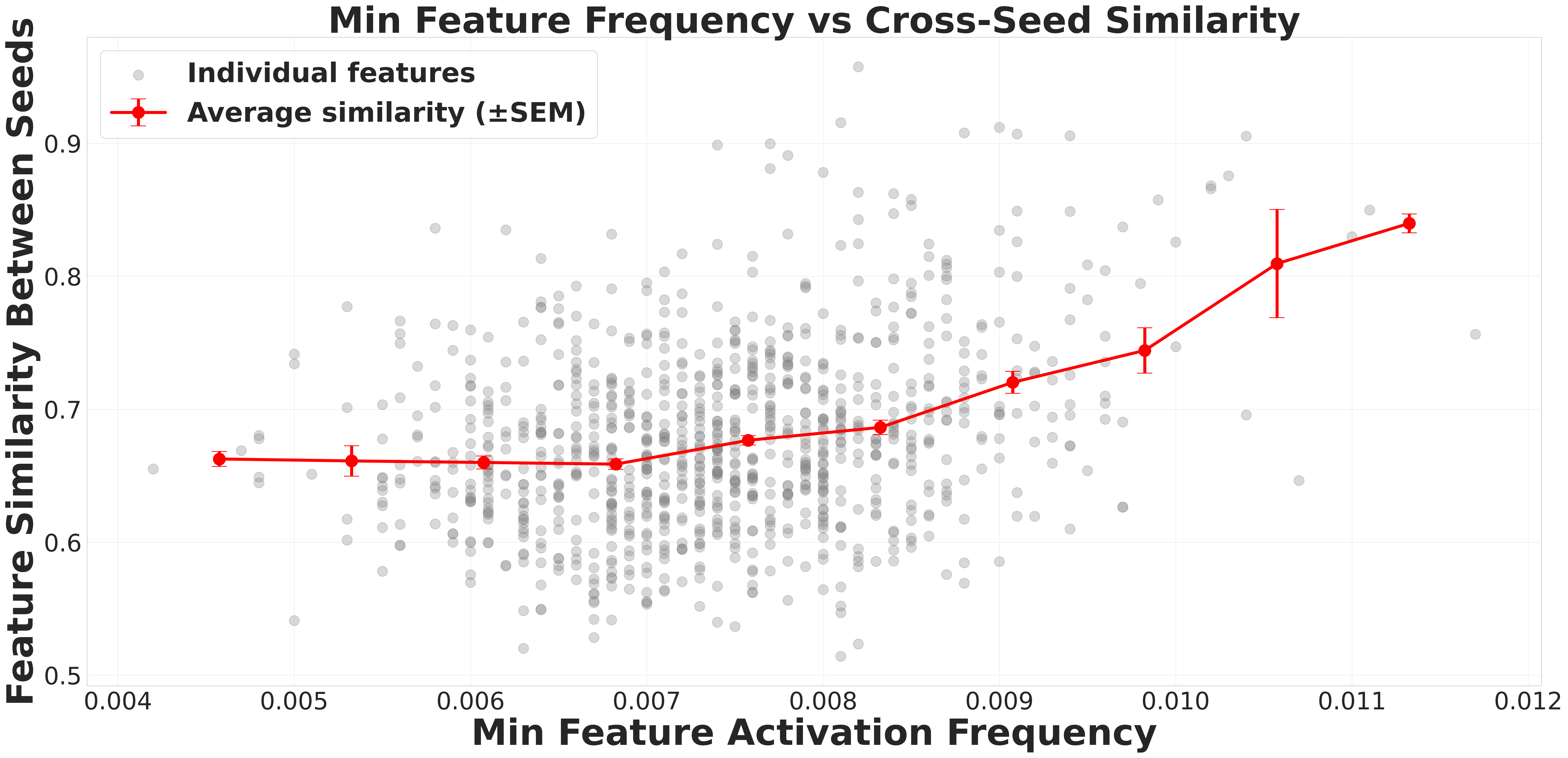}
    \caption{\small Min activation frequency between matched feature pairs vs. pairwise similarity. Data from two-phase Zipfian model ($d_{\text{gt}}=5000$, $d_{\text{sae}}=1000$). Feature-level similarity captures the influence of local consistency regimes across the frequency spectrum.}
    \label{fig:two_phase_1000_main_text_pos_v5}
        
    \end{minipage}
    \hfill
    \begin{minipage}[t]{0.49\textwidth}
        \centering
        \includegraphics[height=3cm]{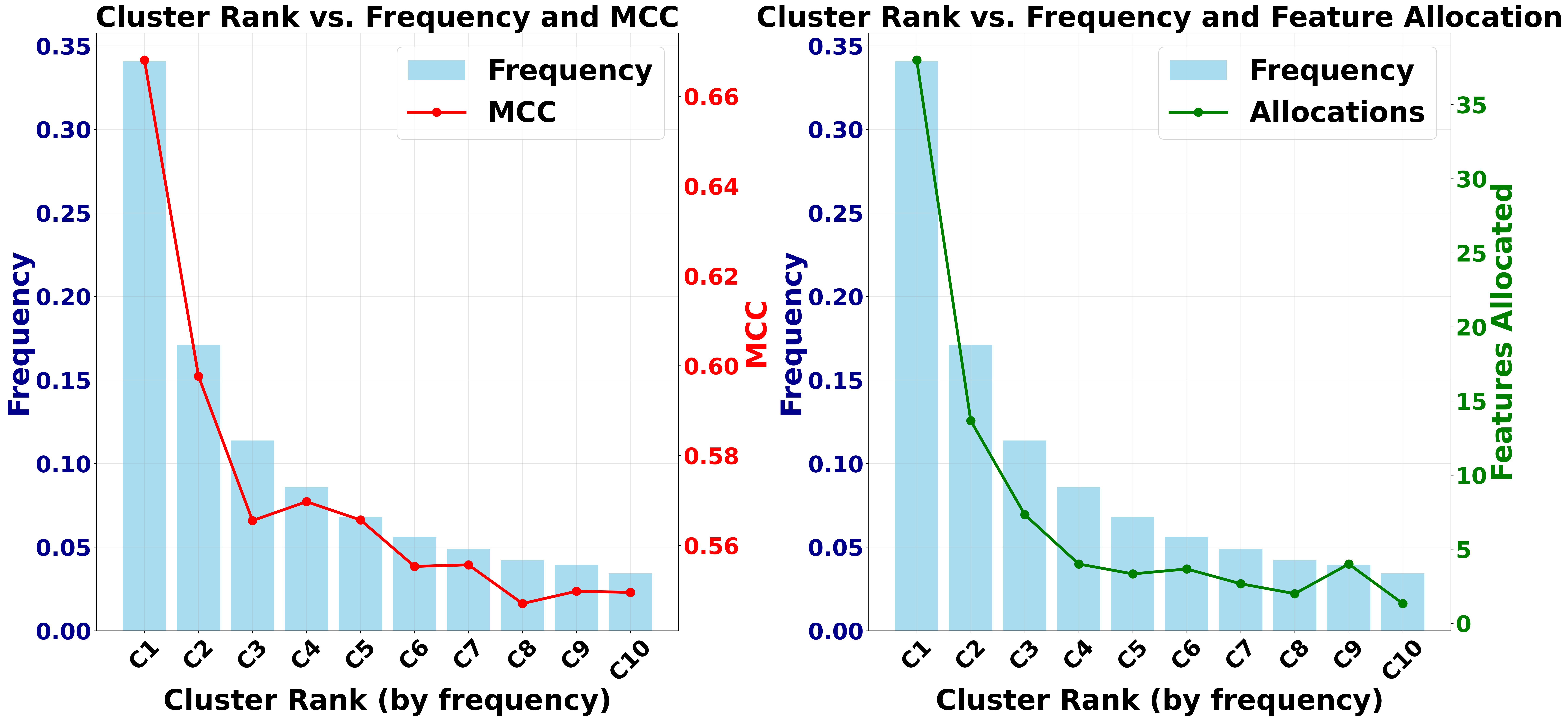}
        \caption{\small Globally Compressive Zipfian Model ($\zipfExp=1$, 10 clusters, $d_{gt}=800, d_{SAE}=80$) shows cluster frequency-dependent GT-MCC (red line, left y-axis) and SAE feature allocation (green line, right y-axis).}
        \label{fig:cluster_metrics_beta1_1_main_text_pos_v5}
    \end{minipage}
\end{figure}

\textbf{Emergence of Local Identifiability Regimes and Frequency-Dependent Consistency.}
This non-uniform capacity allocation driven by Zipfian frequencies means that different ground-truth clusters experience varied effective representational capacity within the same SAE. We define a \textbf{local redundancy factor} for each ground-truth cluster $\clusterIdx$ (containing $\gtDictWidth$ true features) as $\localRedundancyFactor := \saeClusterCapacity / \gtDictWidth$. This factor characterizes three distinct \textbf{local identifiability regimes}:
 \textit{Locally Redundant} ($\localRedundancyFactor > 1$) where the SAE has allocated more features to this cluster than true underlying features, risking selection ambiguity for this cluster's concepts; \textit{Locally Matched} ($\localRedundancyFactor \approx 1$) where the allocated capacity approximately matches the cluster's complexity; and \textit{Locally Compressive} ($\localRedundancyFactor < 1$) where insufficient capacity allocated for the cluster prevents full recovery.
These local regimes coexist within a single, even globally compressive SAE. Frequent clusters may become locally redundant or matched, while rare clusters inevitably remain locally compressive. This coexistence translates to frequency-dependent consistency (PW-MCC) of individual learned features.

\textbf{Probing the Full Spectrum of Consistency: A Two-Phase Zipfian Model.}
To more robustly investigate how these local regimes affect the consistency of individual features across a wide dynamic range, especially for very rare concepts, we further enhance our model organism. As real-world feature distributions exhibit an extremely long and sparse tail, we employed a two-phase feature cluster frequency distribution for 50000 clusters with $d_\text{gt}=400000$: a Mandelbrot-Zipf function ($g(r; s_1, q) = (r+q)^{-s_1}$, with $s_1=1.05, q=5.0$) for common concepts (rank $r < 40,000$), transitioning to a steeper power law ($g(r; s_2) \propto r^{-s_2}$, with $s_2=30.0$) for the long tail (see Appendix~\ref{app:two_phase}). Training a TopK SAE with a relatively ample dictionary width ($d_{\text{sae}}=1000$) on this data reveals a clear positive correlation between the minimum activation frequency of matched feature pairs and their inter-run representational similarity as shown in \autoref{fig:two_phase_1000_main_text_pos_v5}.

This correlation emerges naturally from the interplay of local regimes:
Frequent features are more likely to be in locally redundant or matched regimes. They receive sufficient allocated capacity, leading to stable learning and high GT-MCC. As true feature frequency decreases, the corresponding learned features are more likely to transition into locally compressive regimes. Here, insufficient allocated capacity relative to the conceptual complexity leads to lower inter-run similarity scores. Features in the extreme tail become so deeply locally compressive that they may be learned inconsistently across runs or not at all, resulting in minimal inter-run similarity.
This spectrum of varying stability is effectively quantified by analyzing PW-MCC at the individual feature level.

\begin{conclusionbox}
\small
\textbf{Takeaway:} Our synthetic model organism validates PW-MCC as a robust diagnostic, capturing how global capacity, Zipfian skew, and local identifiability regimes affect feature consistency.
\end{conclusionbox}

\section{Evidence from Applications and Large-Scale Validation}
\label{sec:eval_consistency_real_world}
\subsection{Evaluating Consistency in Real-World Applications}
\label{subsec:eval_consistency_real_world_discussion}
Prevailing practices in SAE evaluation prioritize metrics like reconstruction error (L2 loss), Fraction of Variance Explained, and sparsity (L0/L1) \citep{bricken2023monosemanticity, cunningham2023sparse}. While reconstruction fidelity is one aspect of SAE quality, incorporating feature consistency, quantified by PW-MCC, into the evaluation process offers benefits for both practical model development and the interpretation of learned features.

\textbf{PW-MCC enables more decisive SAE model comparisons and hyperparameter selection where conventional metrics like reconstruction loss prove ambiguous.} This advantage is particularly pronounced when controlling feature sparsity. When tuning the L1 coefficient, lower penalties improve reconstruction loss while potentially degrading feature quality. Similarly, in TopK SAEs where reconstruction loss improves monotonically with $k$, reconstruction loss alone fails to identify optimal sparsity levels for feature quality.
PW-MCC addresses this limitation by identifying the underlying trade-off. Insufficient sparsity from excessively large $k$ or low L1 penalties causes feature selection ambiguity, while excessive sparsity from small $k$ or high L1 penalties leads to inconsistent concept representation. Both extremes degrade GT-MCC and PW-MCC, enabling clear identification of optimal $k$. We demonstrate how PW-MCC guides this selection in practice (Appendix~\ref{app:misspecifying_k}).

\textbf{PW-MCC acts as a justifiable proxy for ground-truth alignment in unsupervised settings}. Its utility stems from observations in our synthetic experiments where PW-MCC strongly correlated with GT-MCC, tracked its progression during training, and served as a practical lower bound. Consequently, low PW-MCC across independent training runs suggests that an SAE is unlikely to converge to a well-defined, \emph{true} feature set. This ability to signal robust feature learning, even without ground truth, underpins its use as a key evaluation metric in the real-world experiments presented next.

\subsection{Training SAEs on LLM Activations: Empirical Consistency Results}
\label{sec:training_saes_on_natural_language}
\begin{figure}[tbp]
    \centering
    \begin{minipage}{0.49\textwidth}
        \centering
        \includegraphics[width=\textwidth]{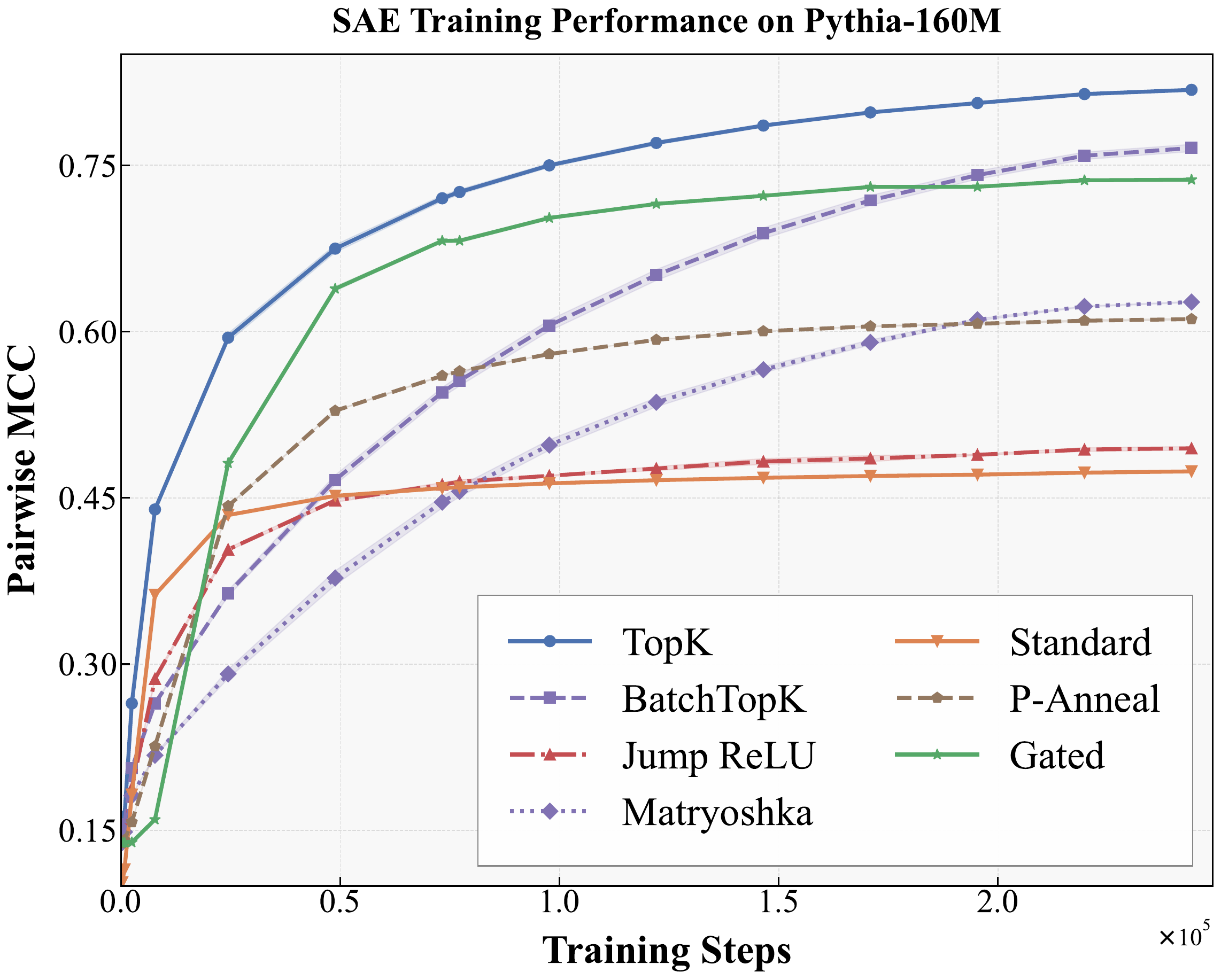}
        \caption{\small PW-MCC vs. train steps for BatchTopK, Gated, P-Anneal, JumpReLU, Standard, TopK, and Matryoshka BatchTopK SAEs on Pythia-160M activations. Higher PW-MCC indicates greater feature consistency.}
        \label{fig:pythia_160M}
    \end{minipage}
    \hfill
    \begin{minipage}{0.49\textwidth}
        \centering
        \includegraphics[width=\textwidth]{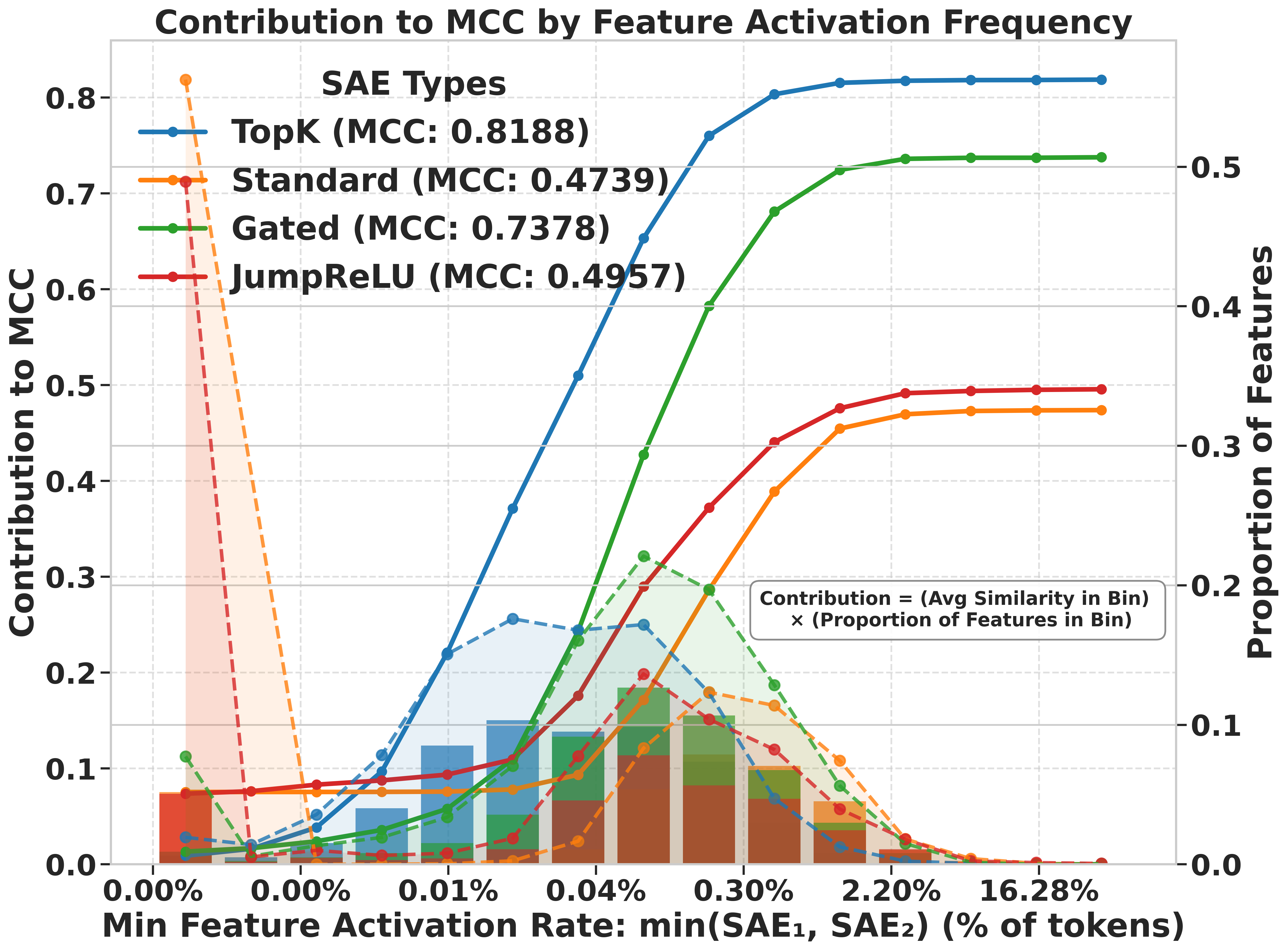}
        \caption{\small  PW-MCC contribution by feature activation frequency for TopK, Standard, Gated, and JumpReLU SAEs. Bars (left axis) show each bin's contribution; solid lines show cumulative contribution. Dashed lines (right axis) show feature distribution across bins.}
        \label{fig:mcc_contributions}
    \end{minipage}
\end{figure}

\begin{table}[b]
\small
    \begin{minipage}{0.48\textwidth}
    \centering
    \caption{\small TopK SAE (Pythia-160M): Higher activation frequency features show stronger mean cosine similarity (and lower variance) between matched pairs from independently trained SAEs, indicating greater consistency for more prevalent features.}
    \small
\begin{tabular}{@{}rrr@{}}
    \toprule
    \textbf{Act freq/1M tokens} & \textbf{Features} & \textbf{Similarity} \\ %
    \midrule
    0.1--2.4 & 127 & 0.514 $\pm$ 0.280 \\
    2.4--54.1 & 2,542 & 0.742 $\pm$ 0.295 \\
    54.1--1.2k & 10,013 & 0.837 $\pm$ 0.209 \\
    1.2k--26.7k & 3,548 & 0.888 $\pm$ 0.157 \\
    26.7k--592.2k & 33 & 0.964 $\pm$ 0.087 \\
    \bottomrule
    \end{tabular}%
    \label{tab:sae_analysis_combined}
    \end{minipage}
    \hfill
    \begin{minipage}{0.49\textwidth}
    \centering
    \caption{\small TopK SAE (Pythia-160M): Semantic similarity scores (GPT Score, 1-10 scale) for automatically generated explanations of matched feature pairs correlate strongly with the features' dictionary vector cosine similarity. GPT-score is averaged over 20 pairs.}
    \small
    \begin{tabular}{@{}crr@{}}
    \toprule
    \textbf{Similarity Range} & \textbf{Feat pairs} & \textbf{GPT Score} \\
    \midrule
    0.0654--0.1128 & 34 & 1.71 \\
    0.1128--0.1947 & 311 & 2.19 \\
    0.1947--0.3359 & 975 & 3.27 \\
    0.3359--0.5795 & 1,423 & 4.12 \\
    0.5795--0.9999 & 13,640 & 8.28 \\
    \bottomrule
    \end{tabular}%
    \label{tab:gpt-scores}
    \end{minipage}
\end{table}

\paragraph{Experimental Setup.}
We train SAEs on Pythia-160M model~\citep{biderman2023pythia}, with width $2^{14}$ on \(500\) million tokens from \texttt{monology/pile-uncopyrighted}~\citep{gao2020pile}, using residual stream activations from layer 8. For each SAE, we performed a hyperparameter sweep, selecting the configuration that yielded the highest final PW-MCC across three independent training runs. Further details on the training setup and hyperparameters as well as results for Gemma-2-2B are provided in \autoref{app:real-world}.

\paragraph{Overall Consistency and Training Dynamics.}
\autoref{fig:pythia_160M} shows the evolution of PW-MCC during training for several SAE architectures. We observe the steady increase in PW-MCC over training steps, indicating that as SAEs learn, their feature dictionaries converge and PW-MCC captures the emergence of consistent representations. Among the evaluated architectures, TopK and BatchTopK SAEs achieved the highest PW-MCC scores. The PW-MCC for some architectures, like BatchTopK, has not fully saturated by $2.5 \times 10^5$ training steps, indicating that longer training might yield even higher consistency. The curves also reveal interesting dynamics; for instance, some methods (e.g., BatchTopK) may start with lower consistency but exhibit faster improvement, eventually surpassing others (e.g., Gated SAE), suggesting that different SAEs impose varied structural assumptions~\citep{hindupur2025projecting}.

\paragraph{Consistency Across the Feature Frequency Spectrum.}
The insights from our synthetic model organism, particularly the relationship between feature frequency and consistency, are reflected in these real-data experiments. \autoref{tab:sae_analysis_combined} quantifies this: features with higher activation frequencies exhibit markedly stronger inter-run similarity. For example, the rarest features show an average similarity of $0.514$, while the most frequent features achieve a much higher $0.964$. This trend is broadly observed across architectures (see Appendix~\ref{sec:pythia_results}). This confirms that frequently occurring concepts are generally learned more stably, and PW-MCC reveals this spectrum of consistency. \\
\autoref{fig:mcc_contributions} further dissects this relationship by showing the contribution of different feature frequency bins to the overall PW-MCC. For the TopK SAE, we observe a relatively symmetric contribution from features across a wide range of activation frequencies, with few dead features. In contrast, the Standard SAE exhibits a larger proportion of features in the lowest frequency bins which contribute minimally to the cumulative PW-MCC, effectively pulling down its overall consistency. Gated SAEs perform well, approaching TopK SAEs but with a slightly larger tail of less active, less consistent features. PW-MCC thus enables a nuanced understanding of how different SAEs utilize their dictionary and achieve consistency across the frequency spectrum.

\paragraph{Correlation with Semantic Similarity.}
\looseness=-1
We find that feature consistency is related to learning semantically related concepts. \autoref{tab:gpt-scores} shows that when we matched features between two independent SAE runs, binned these pairs by their cosine similarity, used an automated interpretation pipeline (details in Appendix~\ref{app:qualitative}) to generate explanations for each feature in a pair, and used an LLM to rate the semantic similarity of these two explanations (GPT Score), the results show a strong positive correlation. Feature pairs with high dictionary vector similarity receive high semantic similarity scores from the LLM, while pairs with low vector similarity are judged as semantically divergent. This validates that PW-MCC and feature-level similarity are indicative of genuine stability in the learned semantic representations, making them valuable for ensuring the reliability of interpretability findings.

\begin{conclusionbox}
\small
\textbf{Takeaway:} High PW-MCC is achievable on LLM activations with TopK SAEs. PW-MCC tracks feature consistency, reveals frequency-dependent consistency, and correlates with feature semantic similarity.
\end{conclusionbox}

\section{Alternative Views}
\label{sec:alternative_views}

While we advocate for prioritizing feature consistency in SAEs, we acknowledge and address potential counterarguments from the community.

\textbf{Some argue that SAE feature consistency is fundamentally unachievable, viewing features merely as a useful, pragmatic decomposition}\citep{paulo2025sparse}. This view is bolstered by findings that multiple, incompatible mechanistic explanations can coexist for the same model behavior \citep{meloux2025everything, makelov2023subspace}, questioning the existence of a single, canonical feature set.
While perfect universality for every feature on real data is indeed challenging, our work demonstrates that \textit{high levels of consistency are attainable} with appropriate methods and evaluation (e.g., TopK SAEs achieving PW-MCC $\approx 0.80$; Sections~\ref{sec:training_saes_on_natural_language}). A pragmatic decomposition gains significant scientific utility when its components are demonstrably stable. The focus, therefore, should be on understanding, maximizing, and characterizing this stability.

\looseness=-1
\textbf{Another perspective is that \emph{sufficiently good} interpretability can be achieved without demanding perfect feature consistency, and an excessive focus on stability might stifle exploratory research}\citep{lipton2018mythos, freiesleben2024scientific}.
Initial exploration using single-run features certainly has value. However, for claims aspiring to scientific robustness—such as those underpinning causality~\citep{marks2025sparse, geiger2023causal}, safety verification~\citep{abdaljalil2025safe,harle2024scar}, or canonical understanding~\citep{o2024steering, olah2022mechanistic}---sufficiently good stability must be quantitatively defined and verified. The current degree of instability in many applications is often underestimated \citep{paulo2025sparse}. We advocate for establishing measurable baselines for consistency to add rigor for cumulative progress.

\textbf{It is also suggested that the pursuit of low-level feature stability might divert from the arguably more important goal of understanding higher-level conceptual abstractions or circuits}~\citep{olah2020zoom}.
We contend, however, that reliable higher-level understanding requires robust lower-level foundations. If the fundamental feature vocabulary is ill-defined or shifts between runs, any circuits or mechanisms built upon them become inherently suspect and difficult to validate \citep{olah2018building}. Stable features are important, dependable anchors for trustworthy compositional analyses and the mapping of learned circuits.

\textbf{Finally, there's a view that the field will organically converge on more consistent SAE methods without explicit mandates for consistency benchmarking.}
While scientific fields do self-correct over time, feature instability remains a significant and often under-reported issue \citep{fel2025archetypal}, even in widely-used methods. An active, concerted push for prioritizing consistency, supported by standardized metrics (like PW-MCC) and benchmarks, can substantially accelerate progress, guiding the community towards more scientifically sound practices.

\section{Conclusion and Call to Action}
\label{sec:call_to_action}

\looseness=-1
Prioritizing feature consistency in SAEs is important for advancing MI towards a more robust engineering science. This requires a shift in how we develop and evaluate feature extraction methods. \textbf{We call upon the community to routinely report quantitative consistency scores} (e.g., PW-MCC), ideally contextualized by feature frequency, alongside standard metrics, enabling meaningful comparisons. \textbf{Furthermore, we propose developing standardized benchmarks for consistency}, such as challenging synthetic model organisms with known ground-truth features and data heterogeneity. \textbf{Finally, focused research is needed to deeply understand the determinants of consistency}, including the interplay between SAE architecture, optimization, data characteristics, and evaluation metrics.

Our work highlights several fertile avenues for future research: (a) designing SAEs for robust consistency across diverse LLM activation statistics (e.g., early vs. late layers) and developing adaptive sparsity mechanisms responsive to local data properties; (b) improving consistency for less frequent yet potentially critical features, and exploring techniques to target specific parts of the feature spectrum; (c) defining broader notions of feature equivalence beyond strong feature consistency (e.g., functional, subspace alignment) and corresponding consistency metrics; (d) understanding how the SAE encoder's amortization gap \citep{o2024compute} influences dictionary stability and whether encoder improvements or explicit consistency regularizers can enhance it; (e) establishing stronger theoretical guarantees for the consistency of features from modern SAEs under realistic data assumptions.
Addressing these challenges and embracing a research culture that values and quantifies feature consistency will be pivotal in building a more reliable and cumulative science of MI.

\bibliographystyle{plain} %
\bibliography{references}   %

\newpage
\appendix

\section{Additional Related Work}
\subsection{Sparse Autoencoders for Mechanistic Interpretability}

This section provides further context on the specific SAE architectures evaluated in our work, complementing the broader discussion of SAEs in Section \ref{sec:related_work}.

SAEs aim to learn overcomplete dictionaries that can decompose high-dimensional neural network activations into sparser, potentially more interpretable feature representations. The core principle involves training an autoencoder to reconstruct an input activation $\mathbf{x}$ while simultaneously encouraging the latent representation $f(\mathbf{x})$ to be sparse. Various SAE architectures implement different mechanisms to achieve this sparsity objective. The key architectures employed in our experiments are described below.

\paragraph{Standard SAE.}
The architecture we refer to as \emph{Standard SAE} is an L1-penalized ReLU SAE that incorporates several contemporary training practices aimed at improving stability and reducing the incidence of inactive (dead) features \citep{bricken2023monosemanticity, marks2024dictionary_learning}. A distinguishing characteristic of this variant is the application of the L1 penalty to feature activations $f(\mathbf{x})$ after they have been scaled by the L2 norm of their corresponding decoder dictionary vectors: $\lambda_1 \sum_j |f_j(\mathbf{x})| \cdot \|\mathbf{a}_j\|_2$, where $\mathbf{a}_j$ is the $j$-th column of the decoder matrix. Unlike some earlier L1 SAEs that explicitly constrain decoder column norms to unity during optimization, this approach omits such a constraint, integrating the decoder norm directly into the sparsity term. We use Adam optimization and gradient clipping.

\paragraph{TopK SAE.}
TopK SAEs~\citep{gao2024scaling} enforce sparsity structurally, rather than through a continuous penalty term. For each input, only the $k$ features with the highest pre-activation values (typically after a ReLU non-linearity) are selected to be active, while all other feature activations are set to zero. The integer $k$ directly determines the L0 norm of the feature activation vector. This design obviates the need for tuning an L1 coefficient but introduces $k$ as a crucial hyperparameter. We do not incorporate additional auxiliary loss terms designed to prevent feature death in this work. 

\paragraph{BatchTopK SAE.}
The BatchTopK SAE architecture \citep{bussmann2024batchtopk} adapts the TopK principle by enforcing the $k$-sparsity constraint on average across a batch of inputs, rather than strictly on a per-sample basis. This is achieved by learning a global activation threshold that is dynamically adjusted during training (using an Exponential Moving Average of feature pre-activations) to ensure that, averaged over a batch, approximately $k$ features are active per input sample. This allows for greater variability in per-sample sparsity while maintaining a target average L0 norm. 

\paragraph{Gated SAE.}
Gated SAEs \citep{rajamanoharan2024improving} are designed to decouple the decision of whether a feature activates from the magnitude of that activation. They employ two distinct pathways for processing the input: a \emph{gate} pathway, which produces values (near 0 or 1 via an L1 or similar sparsity penalty on the gate outputs) that determine if each feature should be active, and a \emph{magnitude} pathway, which computes the strength of each feature if it is gated on. The final feature activation is then the element-wise product of the outputs from these two pathways. The rationale behind this design is to allow features to activate with strong magnitudes when relevant, without these magnitudes being directly suppressed by the primary sparsity-inducing penalty, as that penalty is instead applied to the gating mechanism.

\paragraph{P-Anneal SAE.}
This SAE variant~\citep{karvonen2024measuring} modifies L1-penalized SAEs by employing a dynamic sparsity penalty based on an $L_p^p$-norm. In this approach, the exponent $p$ in the sparsity term $\lambda_s \|f(\mathbf{x})\|_{p_s}^{p_s}$ is annealed during the training process. Training typically commences with $p_s=1$ (equivalent to L1 minimization, which offers a convex optimization landscape) and $p_s$ is progressively decreased towards a target value $p_{end} < 1$ (e.g., $p_{end}=0.2$ in the original work). This annealing schedule aims to first guide the optimization towards a good region using the L1 penalty, and then gradually shift towards a non-convex objective that more closely approximates L0 sparsity, potentially yielding sparser solutions. To ensure the effective strength of the sparsity penalty remains relatively consistent as $p_s$ changes, the scaling coefficient $\lambda_s$ is also adaptively adjusted during training based on statistics derived from recent batches of feature activations.

\paragraph{JumpReLU SAE.}
JumpReLU SAEs \citep{rajamanoharan2024jumping} employ a JumpReLU  activation function which uses per-feature learnable thresholds, $\theta_j$. For an input language model activation $x \in \mathbb{R}^n$, the encoder computes pre-activations $\pi_j(x) = (W_{\text{enc}} x + b_{\text{enc}})_j$ for each feature $j$. The feature activation $f_j(x)$ is then given by $f_j(x) = \text{JumpReLU}_{\theta_j}(\pi_j(x)) = \pi_j(x) H(\pi_j(x) - \theta_j)$, where $H$ is the Heaviside step function and $\theta_j > 0$ is the learned threshold for feature $j$. Sparsity in the feature representation $f(x)$  is encouraged by an $L_0$ penalty on the feature activations: for instance, using a loss term like $\lambda (\|f(x)\|_0 / L^{\text{target}}_0 - 1)^2$ to drive the average number of active features towards a predefined target $L^{\text{target}}_0$. The non-differentiable nature of both the JumpReLU (with respect to $\theta_j$) and the L0 penalty is handled during training using Straight-Through Estimators. This architecture allows the model to learn distinct activation sensitivities for different features, as each $\theta_j$ can be optimized independently.

\paragraph{Matryoshka BatchTopK SAE.}
Matryoshka SAEs~\citep{bussmann2025learning} introduce a hierarchical structure to the learned dictionary. In this paradigm, multiple, nested dictionaries of progressively increasing sizes are trained simultaneously within a single model. Features are ordered or grouped, and the training objective is designed to encourage more general or broadly important features to be learned by the smaller, inner dictionaries (analogous to the inner dolls in a Matryoshka set). More numerous or specialized features are then captured by the larger, outer dictionaries. This hierarchical approach aims to learn features at multiple levels of granularity and can offer computational efficiencies at inference time if a smaller, inner dictionary provides sufficient representational power for a given task. The Matryoshka BatchTopK variant evaluated in our study combines this hierarchical dictionary organization with the BatchTopK mechanism for selecting active features.

Each architecture comes with its own set of hyperparameters, computational considerations, and characteristic effects on the learned feature space.

\subsection{Extended Discussion on Dictionary Learning Identifiability}
The feature consistency challenges observed in SAEs can be understood through the theoretical lens of dictionary learning identifiability. Dictionary learning identifiability addresses a fundamental question: under what conditions can we guarantee that a learning algorithm will recover the true underlying dictionary (or an equivalent version up to permutation and scaling) from observed data? This question directly parallels our inquiry into when SAEs can consistently learn the same features across different initializations.
Dictionary learning can be formalized as the problem of finding a dictionary matrix $\mathbf{A} \in \mathbb{R}^{m \times d}$ and a sparse coefficient matrix $\mathbf{F} \in \mathbb{R}^{d \times n}$ such that $\mathbf{X} \approx \mathbf{A}\mathbf{F}$, where $\mathbf{X} \in \mathbb{R}^{m \times n}$ represents observed data. In the overcomplete setting ($d > m$), which is most relevant to SAEs, the problem becomes particularly challenging because infinitely many solutions can potentially fit the data equally well.

Several lines of theoretical work establish conditions under which overcomplete dictionary recovery is possible. For example, as discussed in our paper, the Spark condition introduced by \citep{donoho2003optimally} states that when spark($\mathbf{A}$) is sufficiently large relative to the sparsity level, unique recovery becomes possible. Specifically, a unique sparse representation is guaranteed when spark($\mathbf{A}$) $> 2k$, where $k$ is the sparsity level.
Building on this foundation, \citep{sun2024global} recently established more comprehensive identifiability results for overcomplete dictionary learning. Their work introduces conditions under which global identifiability holds, showing that the identifiability of dictionaries depends on both the structure of the dictionary itself and the generative mechanism for coefficient vectors.
A key insight from \citep{sun2024global} is that traditional dictionary identifiability frameworks rely on verifying two conditions: (1) coefficients are sufficiently diverse to span the full space of possibilities, and (2) dictionaries satisfy appropriate structural conditions such as the Spark condition. When both conditions hold, the dictionary can be uniquely determined up to permutation and scaling—exactly the type of consistency we seek in SAE features.
The Restricted Isometry Property (RIP) provides another important set of conditions. A dictionary matrix $\mathbf{A}$ satisfies RIP of order $k$ with constant $\delta_k$ if $(1-\delta_k)|\mathbf{x}|_2^2 \leq |\mathbf{A}\mathbf{x}|_2^2 \leq (1+\delta_k)|\mathbf{x}|_2^2$ for all $k$-sparse vectors $\mathbf{x}$. When RIP holds with sufficiently small $\delta_k$, consistent recovery becomes possible even in overcomplete regimes \citep{arora2014new}. In our discussion, we opted to use the Spark condition due to its clearer connections to the training objectives of SAEs and the simplicity for implementing the condition in the learning algorithm.

We also draw inspiration from the independent component analysis (ICA) literature for defining our evaluation metric MCC, as dictionary learning has deep connections to ICA, particularly in its overcomplete form. ICA assumes that observed signals are linear mixtures of statistically independent source signals and aims to recover both the mixing matrix and the source signals \citep{hyvarinen2000independent}, an objective shared with dictionary learning for finding the dictionary and sparse coefficient matrix~\citep{wang2024identifiability}.

\section{Formal Definitions of Feature Consistency}
\label{app:formal_consistency_definitions}

This appendix provides more formal definitions for the concepts of $\mathcal{T}$-Feature Consistency and Strong Feature Consistency, briefly introduced in Section~\ref{sec:our_position}. These definitions help to precisely articulate what it means for two sets of learned features to be considered \emph{equivalent.}

\begin{definition}[$\mathcal{T}$-Feature Consistency]
\label{def:general_feature_consistency_appendix}
Let $\mathbf{A}$ and $\mathbf{A}'$ be two dictionaries (matrices whose columns are feature vectors), each containing $d$ features, learned from the same dataset $\mathcal{D}$ using the same algorithm and hyperparameters but with different random initializations. These dictionaries are said to be \textbf{$\mathcal{T}$-consistent} ($\mathbf{A} \sim_{\mathcal{T}} \mathbf{A}'$) if there exists a permutation $\sigma \in S_d$ (the set of all permutations of $\{1, \dots, d\}$) and a specified transformation $\mathcal{T}$ such that for all feature indices $i \in \{1, \dots, d\}$:
\begin{equation*}
    \mathbf{a}_i = \mathcal{T}(\mathbf{a}_{\sigma(i)}'),
\end{equation*}
where $\mathbf{a}_j^{(k)}$ denotes the $j$-th feature vector (column) of dictionary $\mathbf{A}^{(k)}$.
\end{definition}

The transformation $\mathcal{T}$ can take various forms. For instance, in some contexts, $\mathcal{T}$ might represent a more complex function if features are not considered atomic or have internal structure. However, for dictionary learning in sparse autoencoders, where features are typically represented by individual vectors (dictionary atoms), a more specific and common notion of equivalence is based on permutation and scaling.

\begin{definition}[Strong Feature Consistency]
\label{def:strong_feature_consistency_appendix}
The dictionaries $\mathbf{A}$ and $\mathbf{A}'$ from Definition~\ref{def:general_feature_consistency_appendix} are said to exhibit \textbf{Strong Feature Consistency} if the transformation $\mathcal{T}$ corresponds to an individual, per-feature scaling. That is, there exists a permutation $\sigma \in S_d$ and a set of non-zero scaling factors $\lambda_1, \dots, \lambda_d \in \mathbb{R} \setminus \{0\}$ such that for all $i \in \{1, \dots, d\}$:
\begin{equation*}
    \mathbf{a}_i = \lambda_i \mathbf{a}_{\sigma(i)}'.
\end{equation*}
\end{definition}

This definition implies that each feature learned in one run has a one-to-one correspondent in the other run that points in the same (or exactly opposite, if $\lambda_i < 0$) direction, differing only in magnitude. If feature vectors are constrained to have unit $\ell_2$-norm (either by explicit normalization during training or as a post-processing step before comparison), the scaling factors $\lambda_i$ would effectively become $\pm 1$. The Mean Correlation Coefficient (MCC) metrics used in this paper—namely PW-MCC and GT-MCC---are designed to measure this Strong Feature Consistency. The use of cosine similarity $|\langle \mathbf{u}, \mathbf{v} \rangle| / (\|\mathbf{u}\|_2 \|\mathbf{v}\|_2)$ inherently accounts for differences in magnitude (norm), and the absolute value handles the sign ambiguity (features pointing in opposite directions are still considered perfectly correlated in direction).

We prioritize Strong Feature Consistency---alignment up to permutation and scaling---as a foundational and empirically tractable starting point. This notion directly connects to identifiability results in dictionary learning and allows for straightforward quantification using metrics like MCC. While other, broader notions of consistency, such as functional equivalence (where features achieve similar outcomes despite different dictionary vectors) or subspace alignment, are undoubtedly important and represent valuable avenues for future research, establishing robust vector-level consistency is a critical first step.

\section{How the Round-Trip Condition Guarantees Spark in SAEs}
\label{sec:roundtrip-spark-proof}
This section proves that the round-trip property directly implies the spark condition for TopK SAEs. The argument is purely algebraic.

\subsection{Setting and Notation}
Throughout this section, we fix a sparsity level $k \geq 1$ and denote by $
\Sigma_k := \{\mathbf{f} \in \mathbb{R}^d : \|\mathbf{f}\|_0 \leq k\}
$
the set of $k$-sparse vectors.

\paragraph{SAE Components.} Let $\mathbf{A} \in \mathbb{R}^{m \times d}$ be the decoder (dictionary) learned by a TopK SAE, and let $\mathbf{E}: \mathbb{R}^m \to \Sigma_k$ be its deterministic TopK encoder. The encoder $\mathbf{E}$ selects the $k$ largest magnitude inner products $|\langle \mathbf{a}_j, \mathbf{x} \rangle|$ and returns their signed values, with ties broken lexicographically to ensure $\mathbf{E}$ is a deterministic function.

\paragraph{Round-Trip Property.} We assume that the encoder-decoder pair $(\mathbf{E}, \mathbf{A})$ satisfies the round-trip property if:
\begin{equation}
\label{eq:roundtrip}
\forall \mathbf{f} \in \Sigma_k, \quad \mathbf{E}(\mathbf{A}\mathbf{f}) = \mathbf{f}.
\end{equation}

\paragraph{$k$-Injectivity and Spark.} A dictionary $\mathbf{A}$ is \emph{$k$-injective} if $\forall \mathbf{f}, \mathbf{f}' \in \Sigma_k$, $\mathbf{A}\mathbf{f} = \mathbf{A}\mathbf{f}'$ implies $\mathbf{f} = \mathbf{f}'$. This is equivalent to the spark condition $\operatorname{spark}(\mathbf{A}) > 2k$, where $\operatorname{spark}(\mathbf{A})$ is the size of the smallest linearly dependent column set of $\mathbf{A}$~\citep{donoho2003optimally}:
\[
\text{$\mathbf{A}$ is $k$-injective} \quad \Longleftrightarrow \quad \operatorname{spark}(\mathbf{A}) > 2k.
\]

\subsection{Key Decomposition Lemma}
The following lemma provides the crucial technical tool for our main result:

\begin{lemma}[Two-Vector Decomposition]
\label{lem:two-vector}
Let $\mathbf{h} \in \mathbb{R}^d \setminus \{\mathbf{0}\}$ with $\|\mathbf{h}\|_0 \leq 2k$. There exist distinct vectors $\mathbf{f}, \mathbf{f}' \in \Sigma_k$ with disjoint supports such that $\mathbf{h} = \mathbf{f} - \mathbf{f}'$. Consequently, if $\mathbf{A}\mathbf{h} = \mathbf{0}$, then $\mathbf{A}\mathbf{f} = \mathbf{A}\mathbf{f}'$.
\end{lemma}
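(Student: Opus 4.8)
\textbf{Proof strategy for Lemma~\ref{lem:two-vector}.}
The plan is to construct the two vectors $\mathbf{f}$ and $\mathbf{f}'$ explicitly by splitting $\mathbf{h}$ into its ``positive'' and ``negative'' coordinates, then to verify the required properties directly. Let $S = \operatorname{supp}(\mathbf{h})$, so $|S| \leq 2k$ by hypothesis. Partition $S$ into $S^+ = \{j \in S : h_j > 0\}$ and $S^- = \{j \in S : h_j < 0\}$; these are disjoint and their union is $S$ since $\mathbf{h}$ has no zero entries on its support by definition. Now here is the subtle point: a naive choice $\mathbf{f} = \mathbf{h}|_{S^+}$, $\mathbf{f}' = -\mathbf{h}|_{S^-}$ would give $\mathbf{h} = \mathbf{f} - \mathbf{f}'$ with disjoint supports, but we have no control over $|S^+|$ or $|S^-|$ individually --- one of them could be as large as $2k$, violating $k$-sparsity. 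The fix is to redistribute: since $|S^+| + |S^-| \leq 2k$, we can choose a partition of $S$ into two blocks $T, T'$ each of size at most $k$ (e.g.\ take any $T \subseteq S$ with $|T| = \min(|S|, k)$ and $T' = S \setminus T$, noting $|T'| = |S| - |T| \leq 2k - k = k$ when $|S| > k$, and $|T'| = 0$ otherwise).

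With the size-balanced partition in hand, set $\mathbf{f} := \mathbf{h}|_T + \mathbf{g}$ and $\mathbf{f}' := -\mathbf{h}|_{T'} + \mathbf{g}$ for a suitable common shift $\mathbf{g}$ --- but actually an even cleaner route avoids shifts entirely if I am willing to relax ``disjoint supports'' is \emph{not} what we want, we \emph{do} want disjoint supports. So I return to the direct construction: define $\mathbf{f}$ by $f_j = h_j$ for $j \in T$ and $f_j = 0$ otherwise; define $\mathbf{f}'$ by $f'_j = -h_j$ for $j \in T'$ and $f'_j = 0$ otherwise. Then $\mathbf{f}, \mathbf{f}' \in \Sigma_k$ because $|T|, |T'| \leq k$; their supports are contained in $T$ and $T'$ respectively, hence disjoint; and for every coordinate $j$, $(\mathbf{f} - \mathbf{f}')_j$ equals $h_j$ if $j \in T$, equals $h_j$ if $j \in T'$, and equals $0$ if $j \notin S$ --- so $\mathbf{f} - \mathbf{f}' = \mathbf{h}$. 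Distinctness of $\mathbf{f}$ and $\mathbf{f}'$ follows since $\mathbf{h} \neq \mathbf{0}$ forces $\mathbf{f} \neq \mathbf{f}'$.

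For the final consequence, suppose $\mathbf{A}\mathbf{h} = \mathbf{0}$. Then by linearity $\mathbf{A}\mathbf{f} - \mathbf{A}\mathbf{f}' = \mathbf{A}(\mathbf{f} - \mathbf{f}') = \mathbf{A}\mathbf{h} = \mathbf{0}$, i.e.\ $\mathbf{A}\mathbf{f} = \mathbf{A}\mathbf{f}'$, which is the claimed collision between two distinct $k$-sparse vectors. I expect the main (though still minor) obstacle to be the bookkeeping in the first paragraph: one must be careful that the chosen partition genuinely respects the size bound $k$ in all cases, including the degenerate cases $|S| \leq k$ (where $T' = \varnothing$ and $\mathbf{f}' = \mathbf{0}$, which is still a legitimate element of $\Sigma_k$) and $|S| = 2k$ (the tight case). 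Everything after that is a one-line linearity argument. This lemma will presumably be used in the main theorem by contrapositive: if $\mathbf{A}$ fails the spark condition $\operatorname{spark}(\mathbf{A}) > 2k$, there is a nonzero $\mathbf{h}$ with $\|\mathbf{h}\|_0 \leq 2k$ and $\mathbf{A}\mathbf{h} = \mathbf{0}$, producing $\mathbf{f} \neq \mathbf{f}'$ in $\Sigma_k$ with $\mathbf{A}\mathbf{f} = \mathbf{A}\mathbf{f}'$; combined with the round-trip property~\eqref{eq:roundtrip} this yields $\mathbf{f} = \mathbf{E}(\mathbf{A}\mathbf{f}) = \mathbf{E}(\mathbf{A}\mathbf{f}') = \mathbf{f}'$, a contradiction.
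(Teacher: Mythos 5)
Your proposal is correct and uses essentially the same construction as the paper: partition the support of $\mathbf{h}$ (of size at most $2k$) into two blocks each of size at most $k$, place $\mathbf{h}$'s entries on one block into $\mathbf{f}$ and their negatives on the other block into $\mathbf{f}'$, and conclude by linearity. Your extra care with the degenerate case $|S|\le k$ (where $\mathbf{f}'=\mathbf{0}$) and the intended use in the contradiction argument match the paper's treatment, so no changes are needed.
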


\begin{proof}
Let $S = \operatorname{supp}(\mathbf{h})$, so $|S| \leq 2k$. We can partition $S$ into two disjoint sets $S_1, S_2$ such that $|S_1|, |S_2| \leq k$. This is always possible since $|S| \leq 2k$.

Define vectors $\mathbf{f}, \mathbf{f}' \in \mathbb{R}^d$ by:
\[
\mathbf{f}_j := \begin{cases}
\mathbf{h}_j & \text{if } j \in S_1 \\
0 & \text{if } j \notin S_1
\end{cases}, \quad
\mathbf{f}'_j := \begin{cases}
-\mathbf{h}_j & \text{if } j \in S_2 \\
0 & \text{if } j \notin S_2
\end{cases}.
\]

By construction:
\begin{enumerate}
    \item $\mathbf{f}, \mathbf{f}' \in \Sigma_k$ since $\operatorname{supp}(\mathbf{f}) \subseteq S_1$ and $\operatorname{supp}(\mathbf{f}') \subseteq S_2$ with $|S_1|, |S_2| \leq k$
    \item $\operatorname{supp}(\mathbf{f}) \cap \operatorname{supp}(\mathbf{f}') = S_1 \cap S_2 = \emptyset$ (disjoint supports)
    \item $\mathbf{f} \neq \mathbf{f}'$ since $\mathbf{h} \neq \mathbf{0}$ implies at least one of $S_1, S_2$ is non-empty
    \item $\mathbf{h} = \mathbf{f} - \mathbf{f}'$ by direct verification on each coordinate
\end{enumerate}

If $\mathbf{A}\mathbf{h} = \mathbf{0}$, then $\mathbf{A}(\mathbf{f} - \mathbf{f}') = \mathbf{0}$, which immediately gives $\mathbf{A}\mathbf{f} = \mathbf{A}\mathbf{f}'$.
\end{proof}

\subsection{Main Result}

\begin{theorem}[Round-Trip Implies $k$-Injectivity]
\label{thm:roundtrip-spark}
If the round-trip property \eqref{eq:roundtrip} holds, then $\operatorname{spark}(\mathbf{A}) > 2k$. Equivalently, $\mathbf{A}$ is $k$-injective.
\end{theorem}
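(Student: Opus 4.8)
The plan is to prove $k$-injectivity of $\mathbf{A}$ directly and then invoke the stated equivalence with the spark condition. The one structural fact I will use repeatedly is that the TopK encoder $\mathbf{E}:\mathbb{R}^m\to\Sigma_k$ is a genuine, single-valued function — this is precisely why ties are broken lexicographically — so it is legitimate to apply $\mathbf{E}$ to both sides of any equality of vectors in $\mathbb{R}^m$. Everything else is a one-line substitution using the round-trip hypothesis.

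First I would establish $k$-injectivity by contradiction. Suppose there are distinct $\mathbf{f},\mathbf{f}'\in\Sigma_k$ with $\mathbf{A}\mathbf{f}=\mathbf{A}\mathbf{f}'$. Applying $\mathbf{E}$ to both sides gives $\mathbf{E}(\mathbf{A}\mathbf{f})=\mathbf{E}(\mathbf{A}\mathbf{f}')$, and the round-trip property \eqref{eq:roundtrip} — applicable since both $\mathbf{f}$ and $\mathbf{f}'$ lie in $\Sigma_k$ — rewrites the left side as $\mathbf{f}$ and the right side as $\mathbf{f}'$, forcing $\mathbf{f}=\mathbf{f}'$, a contradiction. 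Hence $\mathbf{A}$ is $k$-injective. An equivalent route, cleaner if one wants to reason about spark directly, uses Lemma~\ref{lem:two-vector}: if $\operatorname{spark}(\mathbf{A})\le 2k$ there is a nonzero $\mathbf{h}$ with $\|\mathbf{h}\|_0\le 2k$ and $\mathbf{A}\mathbf{h}=\mathbf{0}$; the lemma produces distinct $\mathbf{f},\mathbf{f}'\in\Sigma_k$ with disjoint supports and $\mathbf{A}\mathbf{f}=\mathbf{A}\mathbf{f}'$, and the same one-line encoder argument yields $\mathbf{f}=\mathbf{f}'$, again a contradiction.

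Finally I would close by recalling the standard equivalence already recorded above: $\mathbf{A}$ is $k$-injective if and only if every set of at most $2k$ columns of $\mathbf{A}$ is linearly independent, i.e.\ $\operatorname{spark}(\mathbf{A})>2k$. If a self-contained treatment is wanted, this equivalence is itself immediate — one direction is Lemma~\ref{lem:two-vector}, and the other follows by splitting the support of a nonzero linear dependence among $\le 2k$ columns into two halves of size $\le k$ to produce a collision of $k$-sparse codes. Combining this with the $k$-injectivity just shown gives $\operatorname{spark}(\mathbf{A})>2k$.

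I do not anticipate a genuine obstacle: the argument is purely algebraic and short, with Lemma~\ref{lem:two-vector} having already handled the only nontrivial bookkeeping. The two points needing care are (i) invoking the round-trip hypothesis only on vectors certified to lie in $\Sigma_k$ (guaranteed by the lemma, or by hypothesis in the direct route), and (ii) being explicit that well-definedness of $\mathbf{E}$ is what licenses applying it to both sides of an equation — a subtlety that would matter only if the TopK selection were set-valued on ties, which the lexicographic tie-breaking rules out.
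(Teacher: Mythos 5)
Your proposal is correct and takes essentially the same approach as the paper: the paper also argues by contradiction, using Lemma~\ref{lem:two-vector} to turn a spark violation into a collision $\mathbf{A}\mathbf{f}=\mathbf{A}\mathbf{f}'$ of distinct $k$-sparse codes and then applying the deterministic encoder with the round-trip property to force $\mathbf{f}=\mathbf{f}'$. Your ``alternative route'' is exactly the paper's proof, and your primary route is the same one-line encoder argument merely reorganized through the standard $k$-injectivity/spark equivalence (whose needed direction is again Lemma~\ref{lem:two-vector}).
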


\begin{proof}
We proceed by contradiction. Assume $\operatorname{spark}(\mathbf{A}) \leq 2k$. Then there exists a non-zero vector $\mathbf{h} \in \mathbb{R}^d$ with $\|\mathbf{h}\|_0 \leq 2k$ such that $\mathbf{A}\mathbf{h} = \mathbf{0}$.

By Lemma~\ref{lem:two-vector}, we can decompose $\mathbf{h} = \mathbf{f} - \mathbf{f}'$ where $\mathbf{f}, \mathbf{f}' \in \Sigma_k$ are distinct vectors with disjoint supports, and $\mathbf{A}\mathbf{f} = \mathbf{A}\mathbf{f}'$ (since $\mathbf{A}\mathbf{h} = \mathbf{0}$).

Let $\mathbf{x} := \mathbf{A}\mathbf{f} = \mathbf{A}\mathbf{f}'$ denote the common image. Since $\mathbf{E}$ is a deterministic function, $\mathbf{E}(\mathbf{x})$ is uniquely determined. However, applying the round-trip property \eqref{eq:roundtrip} to both representations:
\[
\mathbf{E}(\mathbf{x}) = \mathbf{E}(\mathbf{A}\mathbf{f}) = \mathbf{f} \quad \text{and} \quad \mathbf{E}(\mathbf{x}) = \mathbf{E}(\mathbf{A}\mathbf{f}') = \mathbf{f}'.
\]

This implies $\mathbf{f} = \mathbf{f}'$, contradicting the fact that $\mathbf{f}$ and $\mathbf{f}'$ are distinct.

Therefore, our assumption $\operatorname{spark}(\mathbf{A}) \leq 2k$ must be false, which means $\operatorname{spark}(\mathbf{A}) > 2k$.
\end{proof}

\subsection{Application to TopK SAE}

\begin{corollary}[Spark Condition for TopK SAEs]
\label{cor:gt-to-learned-spark}
Let $X \subset \mathbb{R}^m$ be a training dataset and suppose a TopK SAE with learned dictionary $\mathbf{A} \in \mathbb{R}^{m \times d}$ and deterministic Top-$k$ encoder $\mathbf{E}: \mathbb{R}^m \to \Sigma_k$ achieves:
\begin{enumerate}
    \item \textbf{Zero reconstruction error}: $\mathbf{A}\mathbf{E}(\mathbf{x}) = \mathbf{x}$ for all $\mathbf{x} \in X$
    \item \textbf{Reachability}: For every $k$-sparse vector $\mathbf{f} \in \Sigma_k$, there exists $\mathbf{x} \in X$ such that $\mathbf{E}(\mathbf{x}) = \mathbf{f}$
\end{enumerate}
Then the learned dictionary $\mathbf{A}$ is $k$-injective: $\operatorname{spark}(\mathbf{A}) > 2k$.
\end{corollary}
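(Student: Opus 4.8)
The plan is to reduce this corollary to Theorem~\ref{thm:roundtrip-spark} by showing that the two stated hypotheses together force the round-trip property~\eqref{eq:roundtrip} to hold on all of $\Sigma_k$. Once that is in hand, the conclusion $\operatorname{spark}(\mathbf{A}) > 2k$ is immediate from Theorem~\ref{thm:roundtrip-spark}, which itself rests on the Two-Vector Decomposition Lemma~\ref{lem:two-vector} and the determinism of the TopK encoder. So no new machinery is needed; the work is just a bookkeeping step connecting the empirically optimized quantities (zero reconstruction, coverage of codes) to the algebraic hypothesis of the identifiability theorem.

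The key step is a three-line chain of equalities. Fix an arbitrary $\mathbf{f} \in \Sigma_k$. By the reachability hypothesis there is some $\mathbf{x} \in X$ with $\mathbf{E}(\mathbf{x}) = \mathbf{f}$. By the zero-reconstruction hypothesis applied to that same $\mathbf{x}$, we have $\mathbf{A}\mathbf{E}(\mathbf{x}) = \mathbf{x}$, i.e. $\mathbf{A}\mathbf{f} = \mathbf{x}$. Applying the encoder $\mathbf{E}$ to both sides of this identity gives $\mathbf{E}(\mathbf{A}\mathbf{f}) = \mathbf{E}(\mathbf{x}) = \mathbf{f}$. Since $\mathbf{f} \in \Sigma_k$ was arbitrary, this is exactly the round-trip property~\eqref{eq:roundtrip}, and Theorem~\ref{thm:roundtrip-spark} then yields that $\mathbf{A}$ is $k$-injective, equivalently $\operatorname{spark}(\mathbf{A}) > 2k$.

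There is essentially no computational obstacle; the only point that warrants explicit care is that applying $\mathbf{E}$ to both sides of $\mathbf{A}\mathbf{f} = \mathbf{x}$ is legitimate, which is precisely why the statement and the preceding development insist that $\mathbf{E}$ is a \emph{deterministic} function (via lexicographic tie-breaking)---otherwise $\mathbf{E}(\mathbf{A}\mathbf{f})$ would not be well-defined as a single value. The genuine content of the result lies not in the proof but in the strength of the hypotheses: the reachability condition asks a (typically finite, or distribution-supported) training set $X$ to realize every one of the uncountably many $k$-sparse support-and-sign patterns as an encoder image, which is an idealization. I would note this in a short remark rather than try to weaken it, since the corollary is deliberately phrased as a clean sufficient condition bridging the trained SAE's objective to the spark condition required by Theorem~\ref{thm:hillar}.
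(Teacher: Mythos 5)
Your proposal is correct and follows exactly the paper's own argument: use reachability to find $\mathbf{x}$ with $\mathbf{E}(\mathbf{x}) = \mathbf{f}$, apply zero reconstruction to get $\mathbf{A}\mathbf{f} = \mathbf{x}$, apply $\mathbf{E}$ to both sides to obtain the round-trip property, and invoke Theorem~\ref{thm:roundtrip-spark}. Your added remarks on determinism of $\mathbf{E}$ and the idealized nature of reachability match the paper's surrounding discussion as well.
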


\begin{proof}
Let $\mathbf{f} \in \Sigma_k$ be arbitrary. By reachability, there exists $\mathbf{x} \in X$ such that $\mathbf{E}(\mathbf{x}) = \mathbf{f}$.

Zero reconstruction error gives:
\begin{align}
\mathbf{A}\mathbf{f} = \mathbf{A}\mathbf{E}(\mathbf{x}) = \mathbf{x}.
\end{align}

Applying the encoder to both sides:
\begin{align}
\mathbf{E}(\mathbf{A}\mathbf{f}) = \mathbf{E}(\mathbf{x}) = \mathbf{f}.
\end{align}

Since this holds for arbitrary $\mathbf{f} \in \Sigma_k$, the round-trip property is satisfied. Theorem~\ref{thm:roundtrip-spark} then immediately implies $\operatorname{spark}(\mathbf{A}) > 2k$.
\end{proof}

\subsection{Implications for Feature Consistency}

\paragraph{Theoretical Guarantee.} Corollary~\ref{cor:gt-to-learned-spark} establishes that when TopK SAEs achieve zero reconstruction error and reachability on their training data, the learned dictionary satisfies the spark condition. This provides a theoretical foundation for feature consistency in TopK SAEs based purely on operational training outcomes.

\paragraph{Practical Interpretation.} The two conditions serve complementary but distinct roles in ensuring the spark guarantee. Zero reconstruction error prevents encoder collapse by forcing the encoder to distinguish among all training inputs---if multiple inputs collapsed to the same sparse code, some could not be perfectly reconstructed by the decoder. Reachability ensures comprehensive coverage by guaranteeing that every possible $k$-sparse code appears in the training dataset $X$. This coverage requirement enables the theoretical result to apply universally across all codes in $\Sigma_k$. In practice, exact reachability cannot be verified on finite datasets, so this condition is approximated through diverse training data that provides broad coverage across the feature space.

\paragraph{Connection to Identifiability.} 
Our result shows that learning a decoding dictionary satisfying the spark condition does not require access to the ground truth, which significantly broadens the scope of traditional identifiability discussions. When the ground truth $\mathbf{A}_{\text{gt}}$ is assumed to satisfy the spark condition, our result naturally recovers standard identifiability guarantees. More importantly, even in the absence of any ground truth information, our result ensures strong consistency across all learned dictionaries. This is especially valuable for practitioners who cannot reliably make assumptions about the data generation process.

\section{Supplementary Analysis of SAEs trained on Synthetic Data}
\label{app:supplementary}

\subsection{Detailed Analysis of Learning Regimes}

\label{app:learning_regimes}

This section provides additional analysis of the redundant and compressive learning regimes introduced in the main text. In all experiments, we maintain a constant TopK sparsity parameter $k=8, n=5e4$.

\subsubsection{Redundant Regime Analysis}
\label{app:redundant_regime}

In the redundant regime, we set the ground truth dimension $d_{gt}=80$ and the SAE dictionary size $d_{sae}=160$, creating a setting where the SAE has twice the capacity needed to represent all ground truth features ($d_{sae} > d_{gt}$). 

\begin{figure}[tb]
    \centering
    \begin{minipage}[t]{0.475\textwidth}
        \centering
        \includegraphics[width=\textwidth]{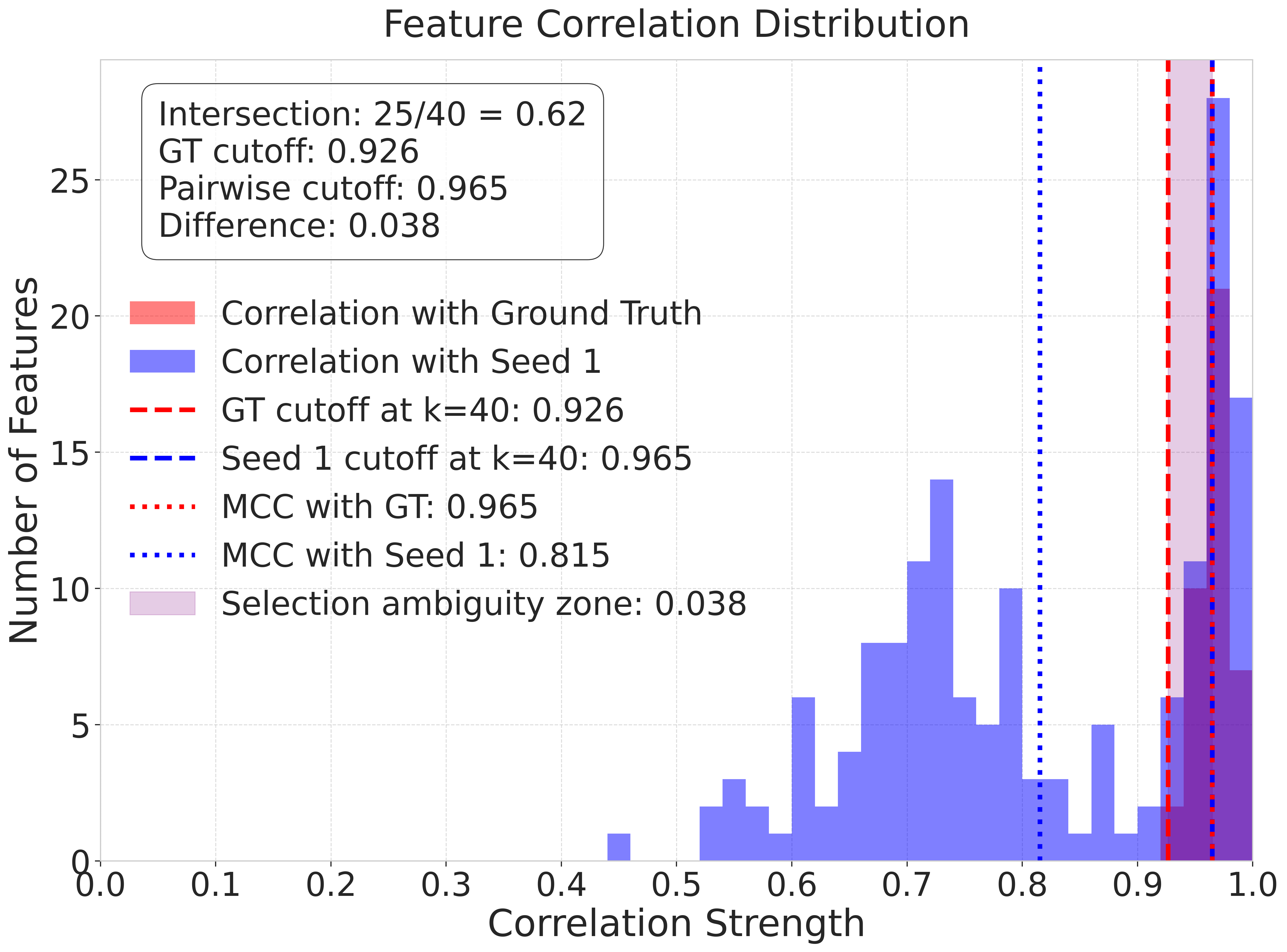}
        \caption{\small Feature Correlation Distribution ($d_{gt}=40, d_{sae}=160, k=8$). Compares similarities of Run 0 features to ground truth (red) and Run 1 features (blue). The substantial overlap in high-similarity regions (purple) demonstrates ambiguity where multiple SAE features are good matches to both ground truth and features learned in other runs, creating selection ambiguity despite high feature quality.}
        \label{fig:sparsedecomp} 
    \end{minipage}%
    \hfill 
    \begin{minipage}[t]{0.475\textwidth}
        \centering
        \includegraphics[width=\textwidth]{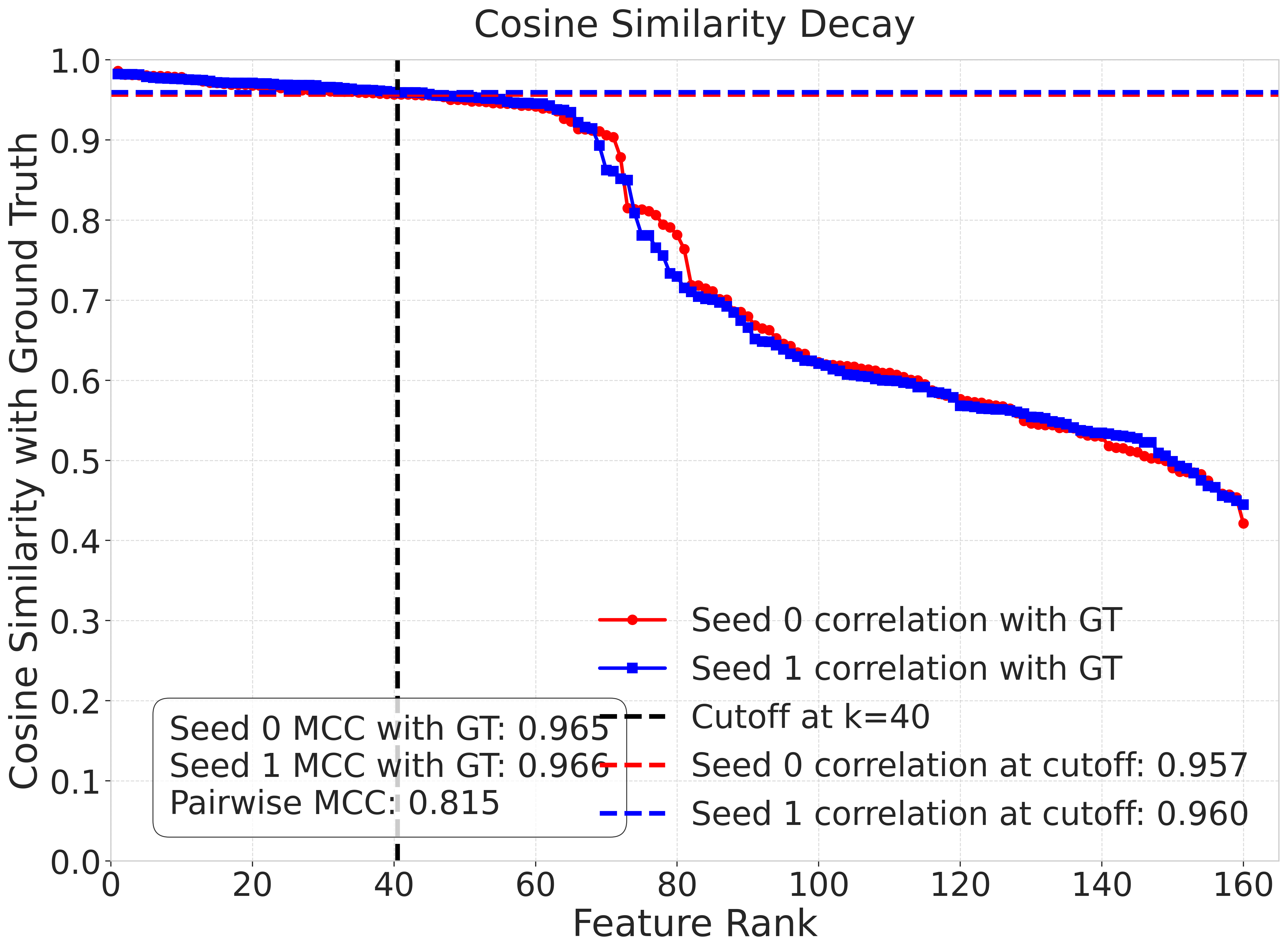}
        \caption{\small Cosine Similarity Decay with Ground Truth ($d_{gt}=40, d_{sae}=160, k=8$). Features are ranked by ground truth similarity for Run 0 (red) and Run 1 (blue). Similarity decays very slowly, remaining high well past rank $d_{gt}=40$, indicating that the SAE learns multiple good representations for each ground truth feature, creating a selection challenge when comparing across runs.}
        \label{fig:rankdecomp} 
    \end{minipage}
\end{figure}

Figures \ref{fig:sparsedecomp} and \ref{fig:rankdecomp} illustrate a key characteristic of the redundant regime: the SAE learns multiple good representations for each ground truth feature. Figure \ref{fig:sparsedecomp} shows substantial overlap between features with high similarity to ground truth and features with high similarity across runs. Figure \ref{fig:rankdecomp} demonstrates that cosine similarity to ground truth decays very slowly, remaining high well beyond the ground truth dimension.

This redundancy creates a fundamental selection ambiguity problem when comparing features across different SAE initializations. The large pool of near-equally good candidates for the top-$d_{gt}$ matches makes the optimal feature matching determined by the Hungarian algorithm highly sensitive to small variations between runs. This dictionary instability persists despite high average MCC scores with the ground truth. In essence, while the SAE learns good representations of the underlying features (as evidenced by high GT-MCC), it lacks a consistent way to select among multiple valid alternatives, leading to different features being selected across runs and consequently comparatively lower pairwise consistency.

As shown in Figure \ref{fig:redundant_gt_mcc}, the Mean GT-MCC (Maximum Correlation Coefficient) reaches high values, indicating strong recovery of ground truth features. However, Figure \ref{fig:redundant_pairwise_mcc} shows that the PW-MCC across different SAE initializations is lower, reflecting the challenge of consistent feature selection despite good ground truth recovery.

\begin{figure}[htbp]
    \centering
    \begin{minipage}{0.48\textwidth}
        \centering
        \includegraphics[width=\textwidth]{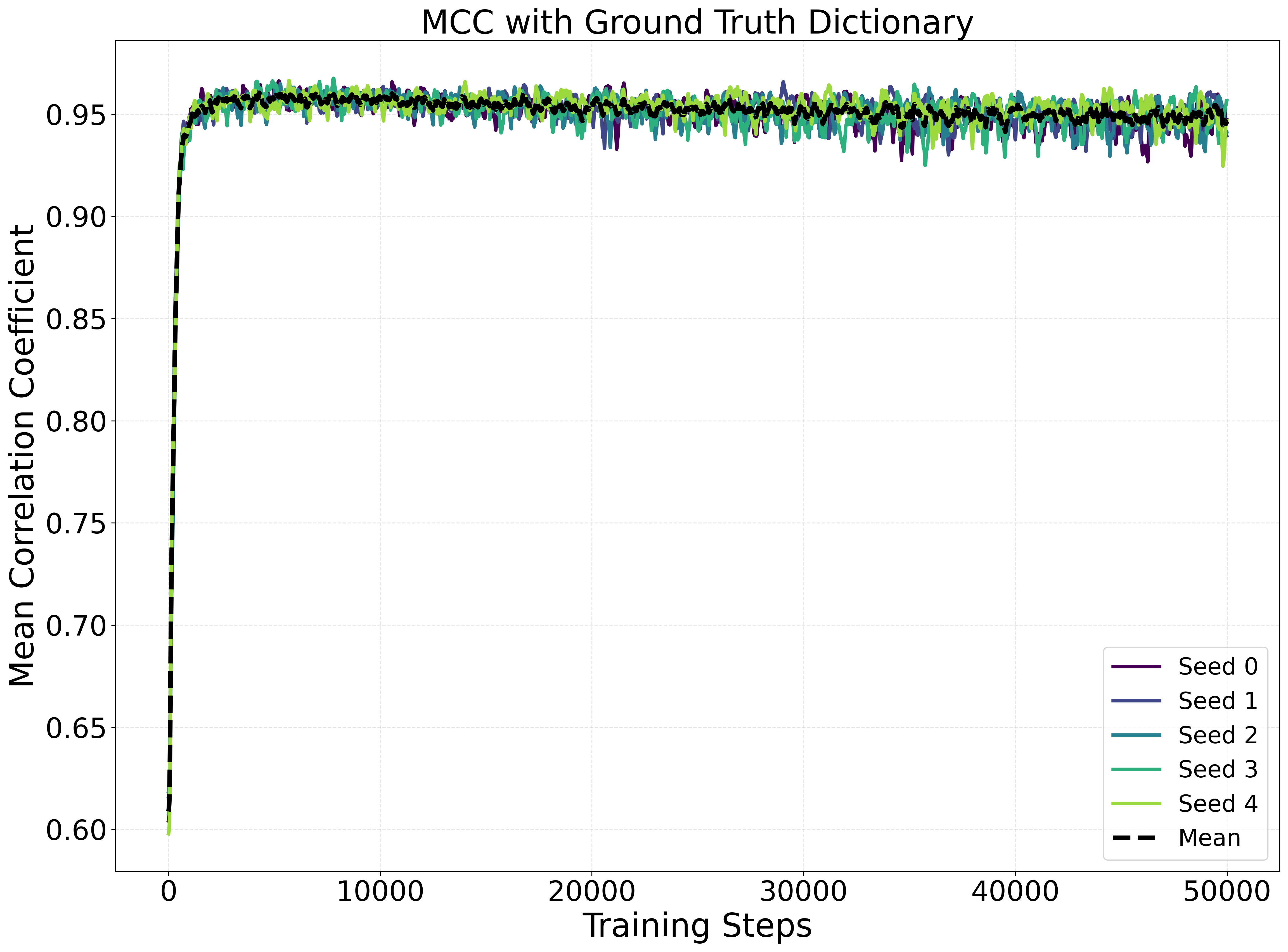}
        \caption{Redundant Regime: TopK SAE Mean GT-MCC (across 5 seeds) vs. Training Steps ($d_{gt}=80, d_{sae}=160, k=8$). The GT-MCC reaches high values, indicating strong recovery of ground truth features.}
        \label{fig:redundant_gt_mcc}
    \end{minipage}
    \hfill
    \begin{minipage}{0.48\textwidth}
        \centering
        \includegraphics[width=\textwidth]{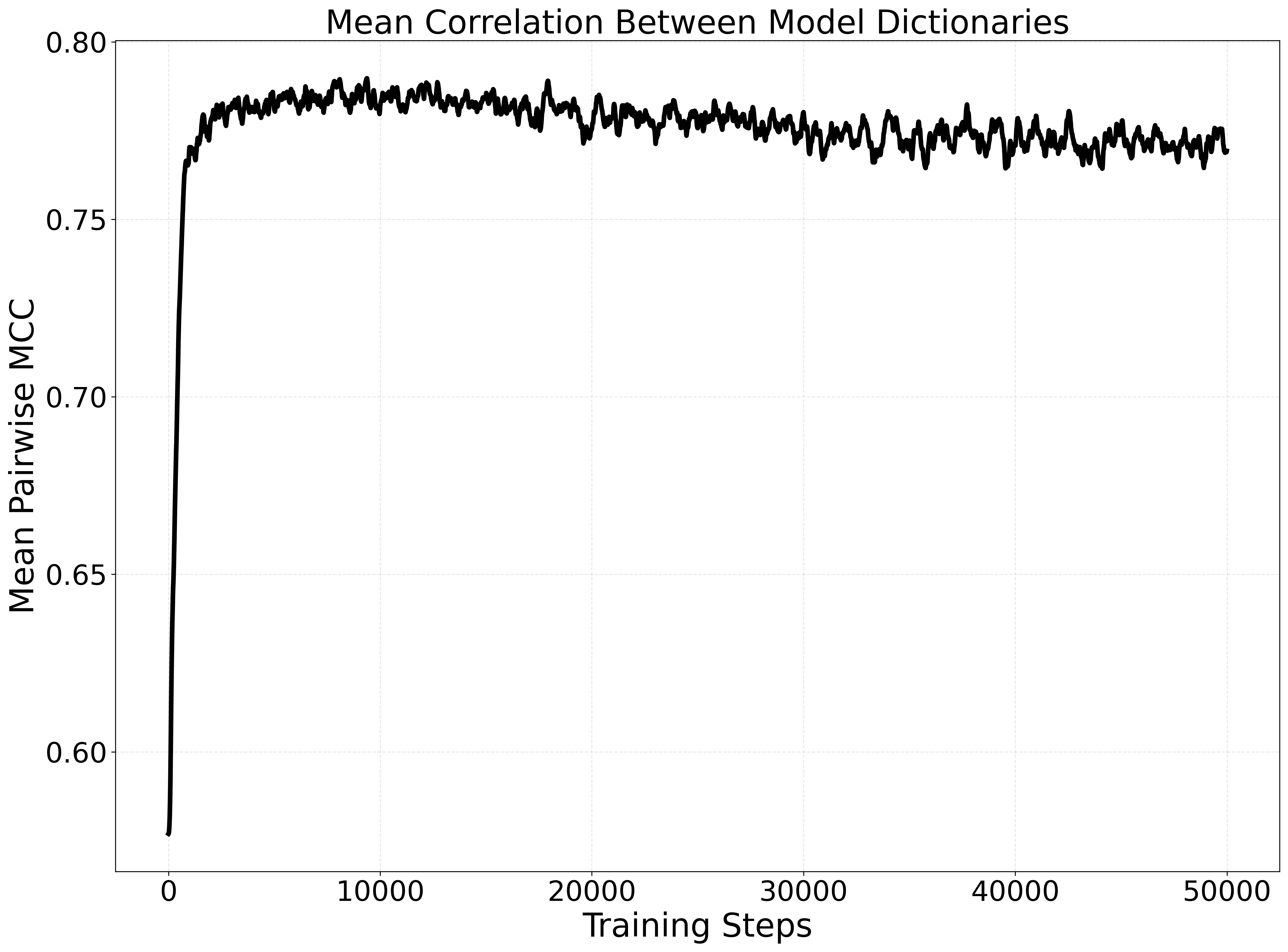}
        \caption{Redundant Regime: TopK SAE Mean PW-MCC (across 5 seeds) vs. Training Steps ($d_{gt}=80, d_{sae}=160, k=8$). The PW-MCC reaches lower values than GT-MCC, reflecting the challenge of feature consistency across different SAE initializations due to selection ambiguity.}
        \label{fig:redundant_pairwise_mcc}
    \end{minipage}
\end{figure}

\paragraph{Intersection Ratio} The Intersection Ratio measures the consistency of feature selection across different training runs by quantifying how often the same learned features that match well to ground truth also match well between different runs. For a pair of runs (run$_1$, run$_2$), we first find $M_{1 \rightarrow GT}$, the optimal matching between dictionaries $A_1$ and $A_{gt}$, and define $I_{1 \rightarrow GT} = \{i \mid \exists j, (i, j) \in M_{1 \rightarrow GT}\}$ as the set of feature indices from Run 1 that successfully match to ground truth features. Next, we find $M_{1 \rightarrow 2}$, the optimal matching between dictionaries $A_1$ and $A_2$, and let $I'_{1 \rightarrow 2}$ be the set of the top $d_{gt}$ feature indices from Run 1 that participate in the highest-scoring similarity pairs $(i,k) \in M_{1 \rightarrow 2}$ (if $d_{sae} < d_{gt}$, we use all $d_{sae}$ indices). The intersection ratio is then computed as $R_{1,2} = \frac{|I_{1 \rightarrow GT} \cap I'_{1 \rightarrow 2}|}{\min(d_{gt}, d_{sae})}$. We report $\mathbb{E}_{i \neq j}[R_{i,j}]$, the expected intersection ratio estimated by averaging over multiple distinct pairs of runs, where higher values indicate reduced selection ambiguity and more consistent feature discovery across training runs.

\begin{figure}[htbp]
\centering
\includegraphics[width=0.48\textwidth]{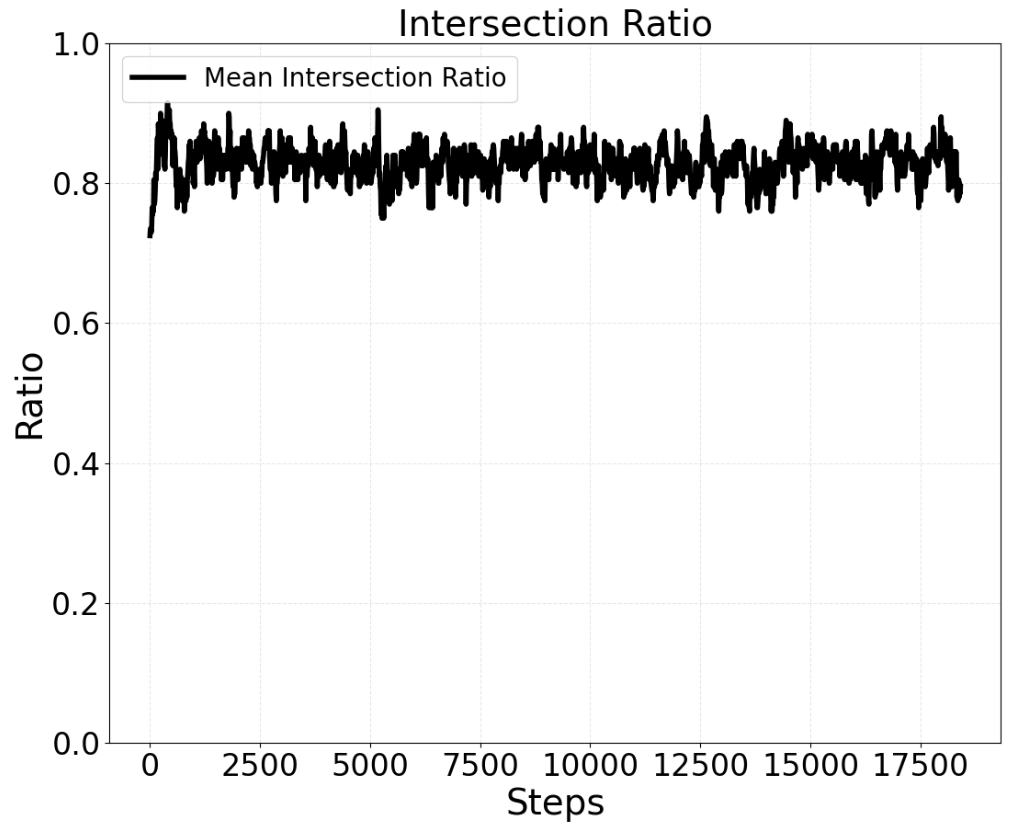}
\caption{Redundant Regime: TopK SAE Mean Intersection Ratio (across 5 seeds) vs. Training Steps ($d_{gt}=80, d_{sae}=160, k=8$). The Intersection Ratio measures the consistency of feature selection indices across different SAE initializations, with higher values indicating more stable feature recovery.}
\label{fig:redundant_intersection_ratio}
\end{figure}

We find that the Intersection Ratio increases over training steps, indicating that SAEs converge toward more consistent feature selection, though perfect consistency remains challenging due to the fundamental ambiguity introduced by excess capacity.

\subsubsection{Compressive Regime Analysis}
\label{app:compressive_regime}

In the compressive regime, we set the ground truth dimension $d_{gt}=800$ and the SAE dictionary size $d_{sae}=80$, creating a setting where the SAE has only one-tenth of the capacity needed to represent all ground truth features ($d_{sae} < d_{gt}$). This capacity limitation forces the SAE to prioritize which features to learn.

\begin{figure}[htbp]
    \centering
    \begin{minipage}{0.48\textwidth}
        \centering
        \includegraphics[width=\textwidth]{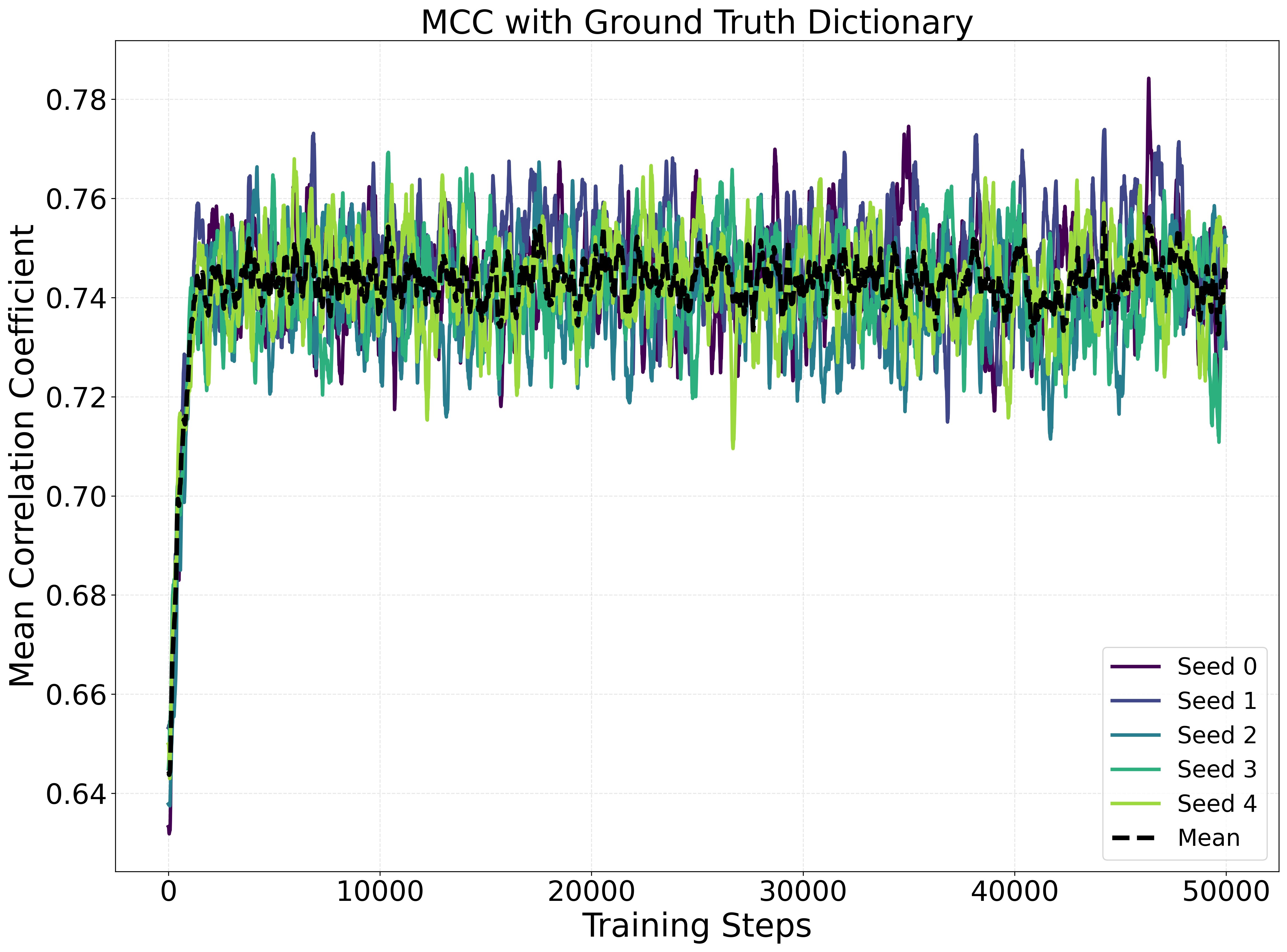}
        \caption{Compressive Regime: TopK SAE Mean GT-MCC (across 5 seeds) vs. Training Steps ($d_{gt}=800, d_{sae}=80, k=8$). The GT-MCC reaches lower values compared to the redundant regime, reflecting the fundamental capacity limitation that prevents complete recovery of all ground truth features.}
        \label{fig:compressive_gt_mcc}
    \end{minipage}
    \hfill
    \begin{minipage}{0.48\textwidth}
        \centering
        \includegraphics[width=\textwidth]{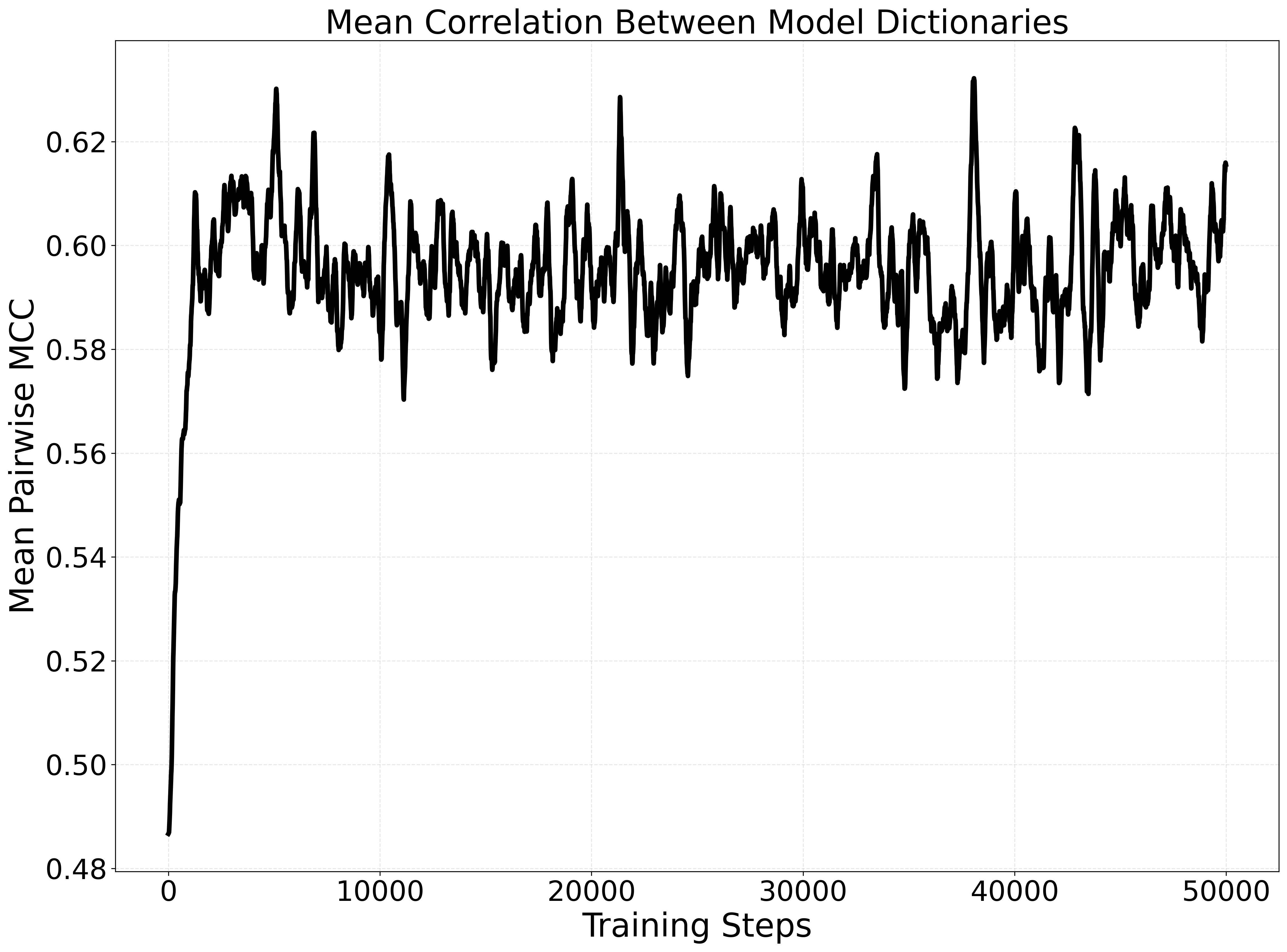}
        \caption{Compressive Regime: TopK SAE Mean PW-MCC (across 5 seeds) vs. Training Steps ($d_{gt}=800, d_{sae}=80, k=8$). The PW-MCC values are also lower compared to the redundant regime, indicating lower overall recovery quality.}
        \label{fig:compressive_pairwise_mcc}
    \end{minipage}
\end{figure}

Figures \ref{fig:compressive_gt_mcc} and \ref{fig:compressive_pairwise_mcc} illustrate the key characteristics of the compressive regime. Unlike the redundant regime, where feature selection ambiguity was the primary challenge, the compressive regime faces a fundamental capacity limitation.

Figure \ref{fig:compressive_gt_mcc} shows that both the PW-MCC and the Mean GT-MCC reach significantly lower values compared to the redundant regime (Figure \ref{fig:redundant_gt_mcc}), reflecting the inability to recover all ground truth features with limited capacity.

\subsection{Uniform Partitioning Experiments}
\label{app:uniform_partitioning}

We analyze how partitioning ground truth features into uniform clusters affects SAE learning dynamics. In these experiments, we maintain a constant total number of ground truth features ($d_{gt}=800$) while varying the number of clusters they are organized into. The dimension of each cluster is $d_{gt}/\text{num\_clusters}$, resulting in fewer features per cluster as the number of clusters increases. The complete hyperparameter settings for these experiments are presented in Table \ref{tab:uniform_params}.

\begin{table}[htb]
\small
    \centering
        \caption{Hyperparameters for Uniform Clustering Experiments}
    \label{tab:uniform_params}
    \begin{tabular}{lr}
    \toprule
    \textbf{Parameter} & \textbf{Value} \\
    \midrule
    TopK sparsity parameter ($k$) & 8 \\
    Activation dimension & 20 \\
    Dictionary size ($d_{sae}$) & 80 \\
    Training examples & 100,000 \\
    Training steps & 20,000 \\
    Learning rate & 0.04 \\
    Learning rate decay factor & 0.1 \\
    Learning rate decay steps & [20,000] \\
    Warmup steps & 1,000 \\
    Minimum learning rate & 1e-05 \\
    L1 coefficient & 0.1 \\
    Batch size & 4,096 \\
    Number of seeds& 3 \\
    Cluster distribution& uniform \\
    Ground truth dimension ($d_{gt}$) & 800 \\
    Number of clusters & varies (1, 10, 50, 100) \\
    Cluster dimensions & $d_{gt}/\text{num\_clusters}$ \\
    \bottomrule
    \end{tabular}
\end{table}

\begin{table}[htb]
    \small
    \centering

    \caption{Effect of uniform partitioning of ground truth vectors. As the number of clusters increases while keeping the total number of ground truth vectors constant, mean PW-MCC shows a weak but consistent increase (0.621 to 0.665), while ground truth MCC remains stable around 0.74.}
    \label{tab:cluster_mcc_effect}
    \begin{tabular}{crrr}
    \toprule
    \textbf{Clusters} & \textbf{Mean GT-MCC} & \textbf{Std GT-MCC} & \textbf{Mean PW-MCC} \\
    \midrule
    1 & 0.742 & 0.006 & 0.621 \\
    10 & 0.747 & 0.002 & 0.634 \\
    50 & 0.740 & 0.002 & 0.651 \\
    100 & 0.746 & 0.007 & 0.665 \\
    \bottomrule
    \end{tabular}

\end{table}

Table \ref{tab:cluster_mcc_effect} presents the results of our uniform partitioning experiments. The key finding is that as we impose additional structure by increasing the number of clusters from 1 to 100 while keeping the total number of ground truth vectors constant, the mean PW-MCC between SAEs increases consistently from 0.621 to 0.665. This suggests that clustered organization of features promotes more consistent feature learning across different SAE initializations. Interestingly, the mean ground truth MCC remains stable around 0.74 across all cluster configurations, indicating that the overall recovery quality of ground truth features is not significantly affected by the clustering structure.

\subsection{Feature Recovery Across Zipf Distributions}
\label{app:zipf_distributions}

This section provides additional experimental results showing how feature recoverability in SAEs varies across different Zipf distributions. We analyzed distributions with exponents $\alpha \in \{1.0, 1.1, 1.5, 2.0\}$ to understand how the skewness of ground truth feature cluster probability affects SAE learning dynamics and feature reproducibility. For all experiments in this section, we set the ground truth dimension $d_{gt}=800$, SAE dictionary size $d_{sae}=80$, and TopK sparsity parameter $k=8$, placing us in the compressive regime where the SAE must learn a compressed representation of the underlying features. The ground truth features are organized into 10 clusters, with 80 ground truth features per cluster, where the probability of each cluster appearing in the data follows a Zipf distribution with varying exponents $\alpha$. Table \ref{tab:zipf_params} provides the complete hyperparameter settings used in these experiments.

\begin{table}[htb]
    \small
    \centering
     \caption{Hyperparameters for Zipf Distribution Experiments}
    \label{tab:zipf_params}
    \begin{tabular}{lr}
    \toprule
    \textbf{Parameter} & \textbf{Value} \\
    \midrule
    TopK sparsity parameter ($k$) & 8 \\
    Activation dimension & 20 \\
    Dictionary size ($d_{sae}$) & 80 \\
    Training examples & 100,000 \\
    Training steps & 20,000 \\
    Learning rate & 0.04 \\
    Learning rate decay factor & 0.1 \\
    Learning rate decay steps & [20,000] \\
    Warmup steps & 1,000 \\
    Minimum learning rate & 1e-05 \\
    L1 coefficient & 0.1 \\
    Batch size & 4,096 \\
    Number of models trained (seeds)& 3 \\
    Number of clusters & 10 \\
    Cluster dimensions & 80 per cluster \\
    Distribution & zipf \\
    Zipf skew ($\alpha$) & varies (1.0, 1.1, 1.5, 2.0) \\
    \bottomrule
    \end{tabular}
\end{table}

\subsubsection{Zipf Distribution with $\alpha=1.0$}
\label{app:zipf_alpha1}

\begin{figure}[htbp]
    \centering
    \begin{minipage}{0.48\textwidth}
        \centering
        \includegraphics[width=\textwidth]{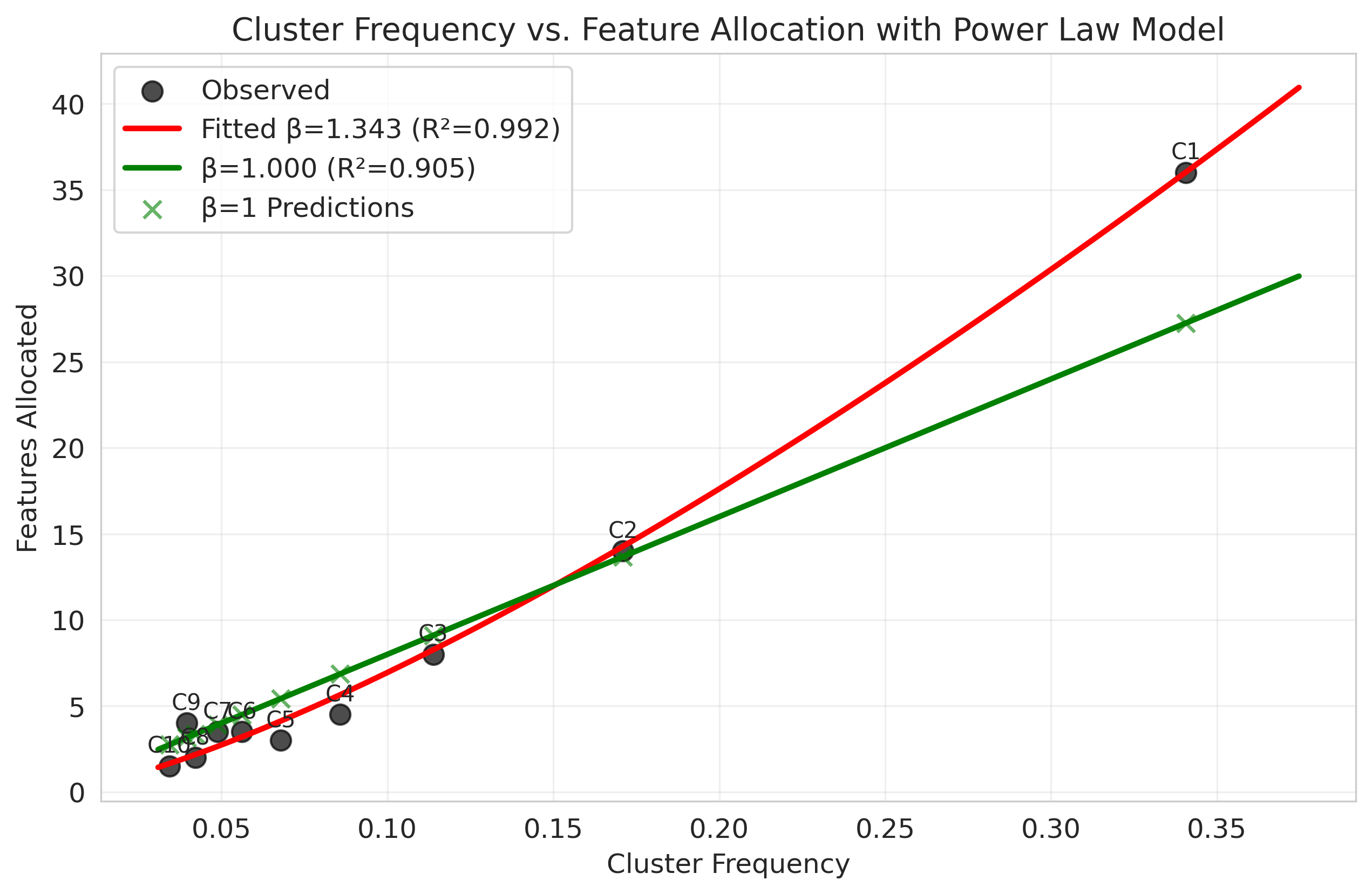}
    \end{minipage}
    \hfill
    \begin{minipage}{0.48\textwidth}
        \centering
        \includegraphics[width=\textwidth]{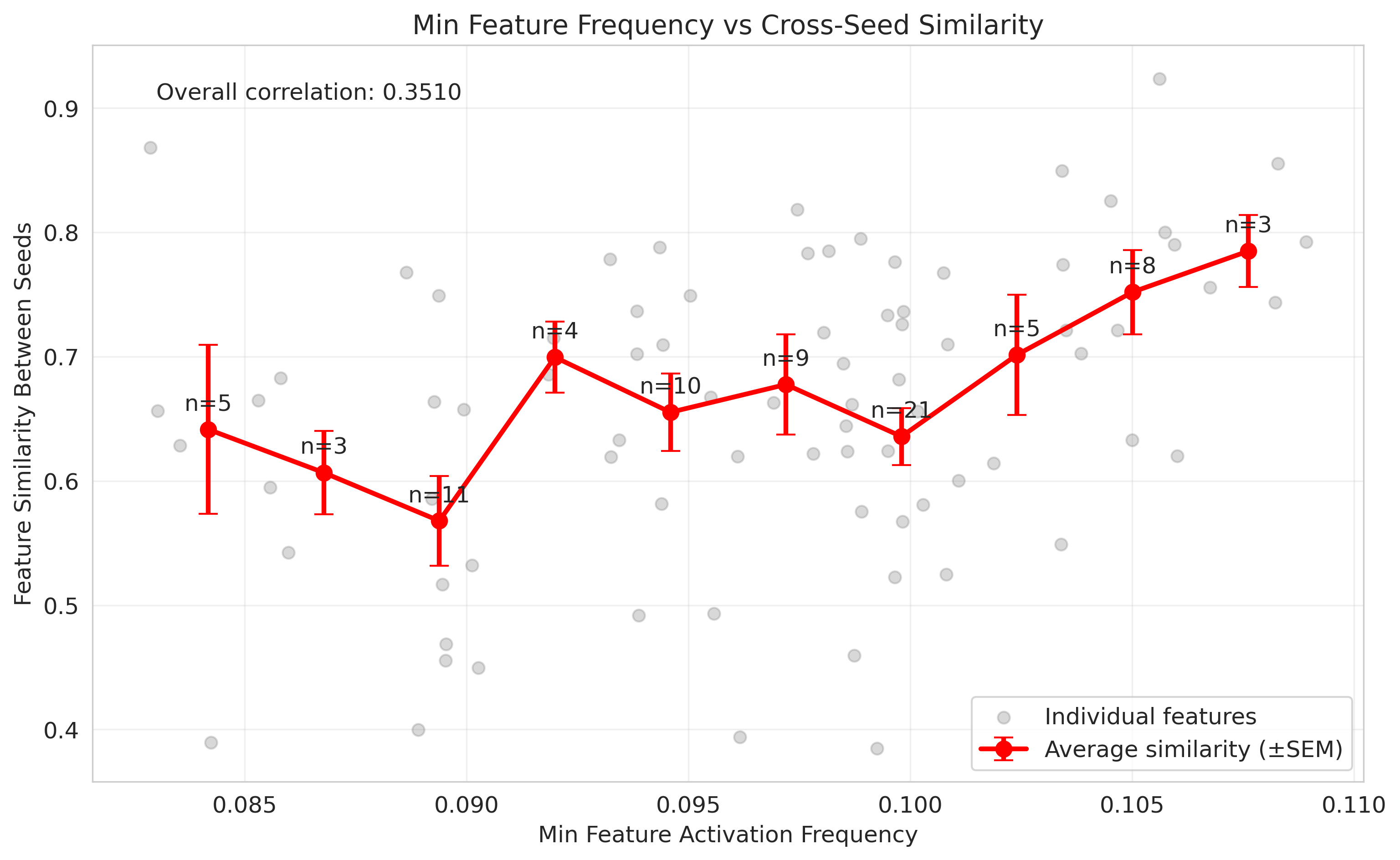}
    \end{minipage}
    \caption{Left: Capacity allocation model for Zipf distribution with $\alpha=1.0$, showing how SAE features are allocated to clusters based on cluster probability. The red curve shows the fitted power law model, following $D_i \propto p_i^{\beta}$ where $\beta \approx 1.343$. Right: Feature similarity between independently trained SAEs as a function of minimum feature activation frequency, with bucketed averages (red) showing a positive trend between activation frequency and feature reproducibility.}
    \label{fig:capacity_similarity_alpha1}
\end{figure}

\begin{figure}[htbp]
    \centering
    \includegraphics[width=0.9\textwidth]{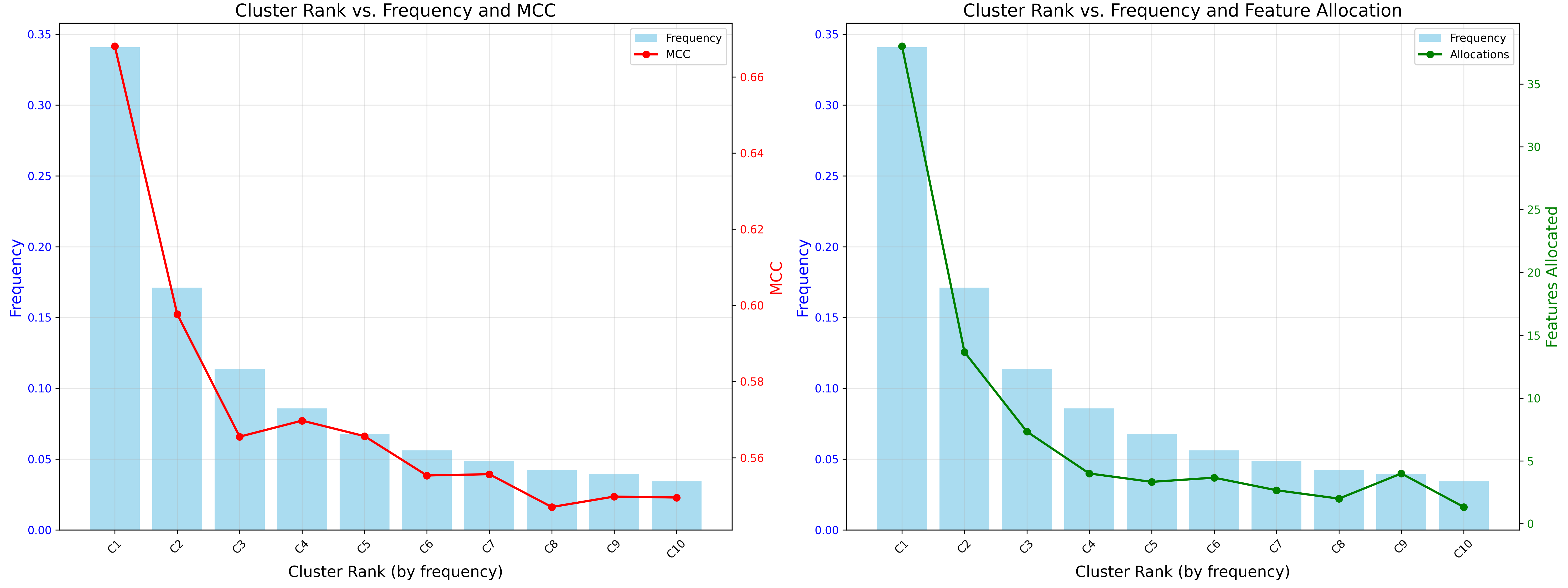}
    \caption{Cluster metrics for Zipf distribution with $\alpha=1.0$. Left: Cluster rank vs. probability (blue bars) and MCC scores (red line), showing how feature recovery quality varies with cluster probability. The MCC scores demonstrate a positive correlation with cluster probability, with lower-ranked (more probable) clusters achieving better feature recovery. Right: Cluster rank vs. probability (blue bars) and feature allocation (green line), demonstrating how the model allocates dictionary features based on cluster probability, with more frequent clusters receiving proportionally more features.}
    \label{fig:cluster_metrics_alpha1}
\end{figure}

\begin{figure}[htbp]
    \centering
    \begin{minipage}{0.48\textwidth}
        \centering
        \includegraphics[width=\textwidth]{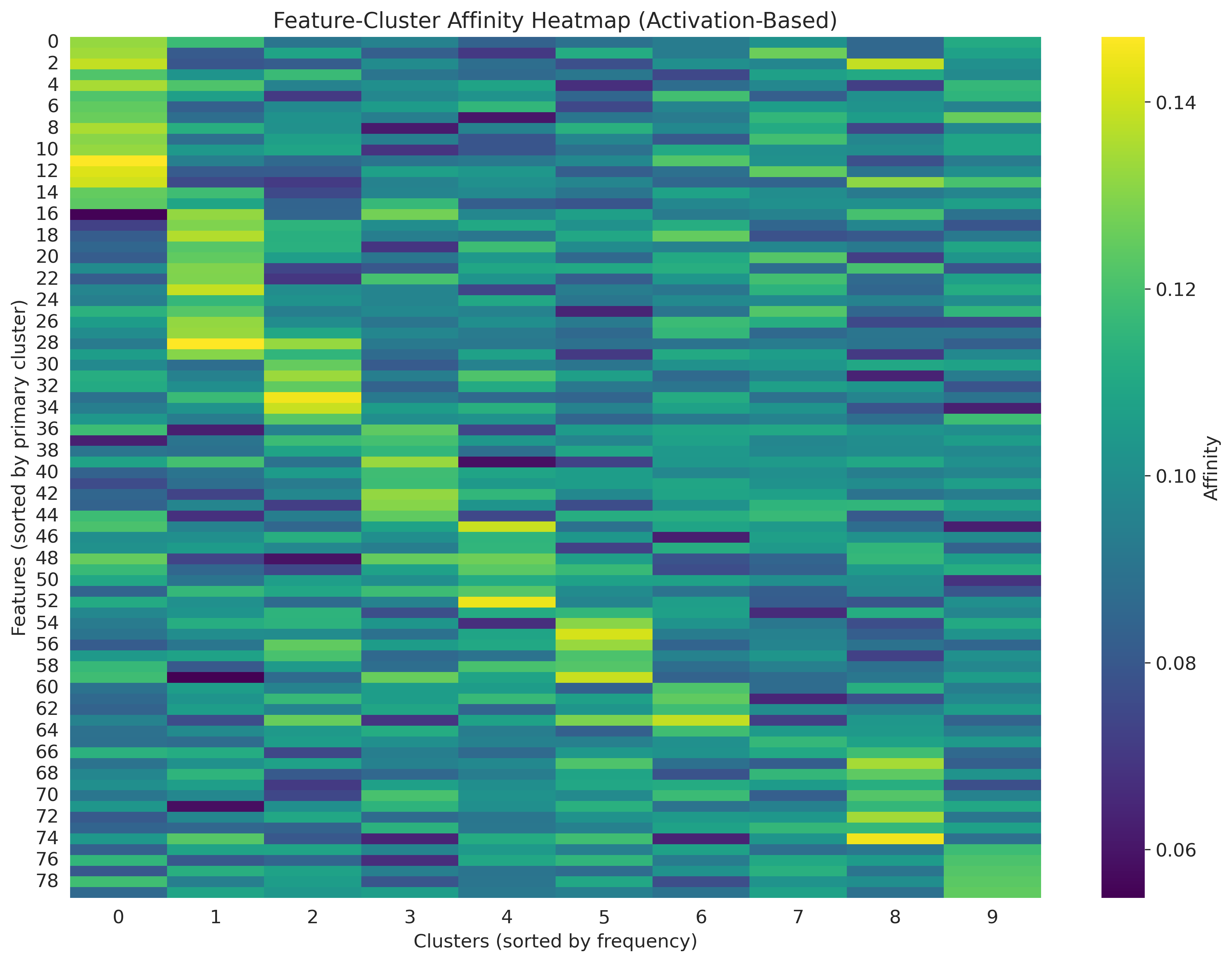}
    \end{minipage}
    \hfill
    \begin{minipage}{0.48\textwidth}
        \centering
        \includegraphics[width=\textwidth]{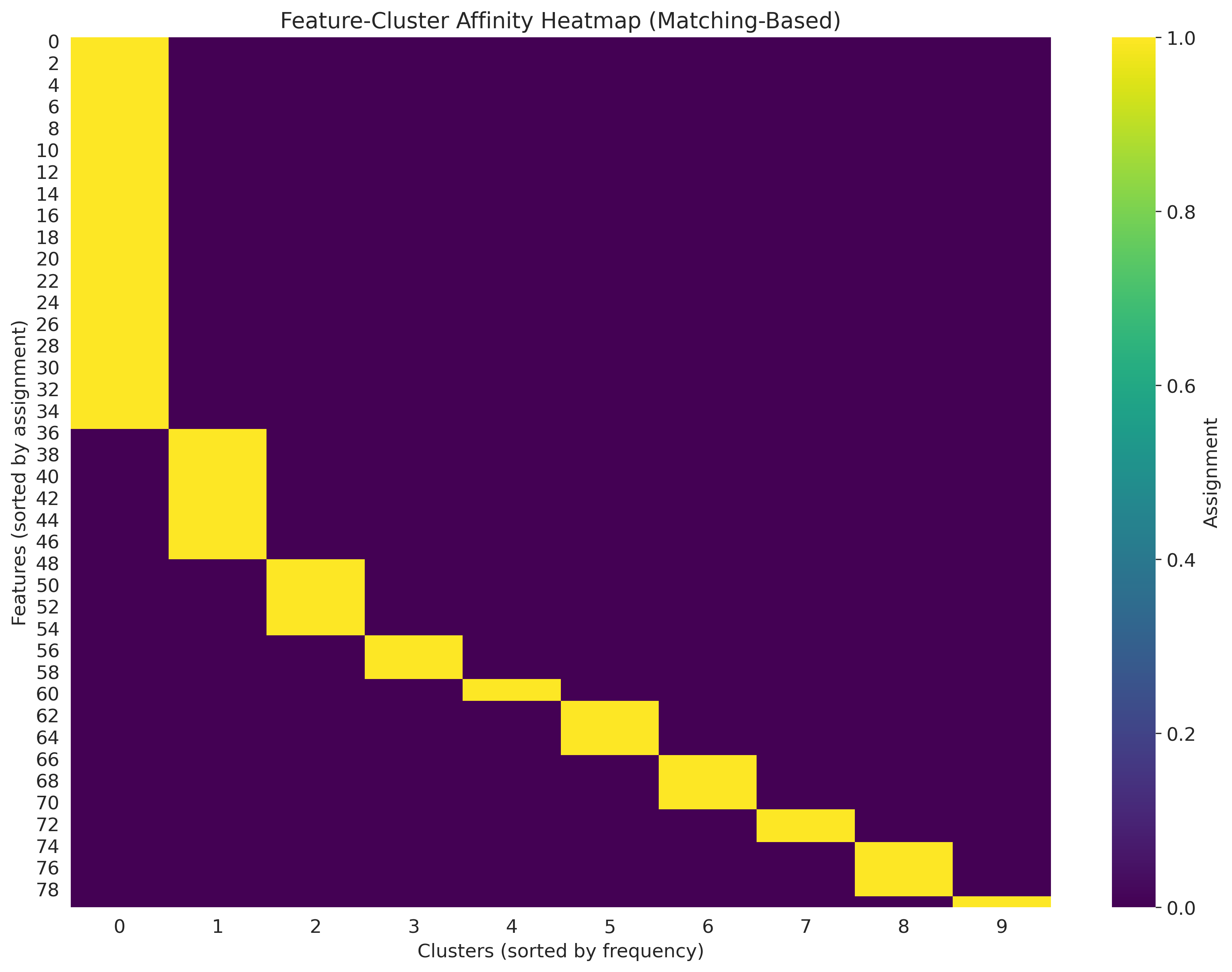}
    \end{minipage}
    \caption{Feature-cluster relationships for Zipf distribution with $\alpha=1.0$. Left: Activation-based affinity heatmap showing how features (y-axis, sorted by primary cluster) are activated by different clusters (x-axis, sorted by probability). Brighter colors indicate stronger activation, showing that more frequent cluster features activate more features. Right: Matching-based affinity heatmap showing global assignment of features to clusters using Hungarian matching, with features on y-axis and clusters on x-axis}
    \label{fig:affinity_alpha1}
\end{figure}

For $\alpha=1.0$, we observe a moderate skew in cluster probabilities with a corresponding power-law allocation of dictionary features. As shown in Figure \ref{fig:capacity_similarity_alpha1} (left), the SAE capacity allocation via Hungarian matching follows $D_i \propto p_i^{\beta}$ with $\beta \approx 1.343$, where $D_i$ is the number of SAE features allocated to cluster $i$ and $p_i$ is the cluster's probability in the data distribution. This superlinear relationship indicates that more probable clusters receive disproportionately more dictionary features. Figure \ref{fig:capacity_similarity_alpha1} (right) demonstrates that features with higher activation frequencies also show greater reproducibility across different SAE initializations, indicating that frequently activated features are more robustly learned.

The cluster metrics in Figure \ref{fig:cluster_metrics_alpha1} further support this relationship, showing that more probable clusters achieve better feature recovery quality as measured by MCC scores. The feature-cluster activation-based affinity map in Figure \ref{fig:affinity_alpha1} reveal that while features tend to specialize for specific clusters, there is some activation overlap, particularly among the most probable clusters. The activation-based affinity (left) displays more diffuse relationships between features and clusters and exhibits less frequency skew compared to the discrete one-to-one assignments established through Hungarian matching (right), which more strongly favors high-probability clusters.

\subsubsection{Zipf Distribution with $\alpha=1.1$}
\label{app:zipf_alpha1_1}

\begin{figure}[htbp]
    \centering
    \begin{minipage}{0.48\textwidth}
        \centering
        \includegraphics[width=\textwidth]{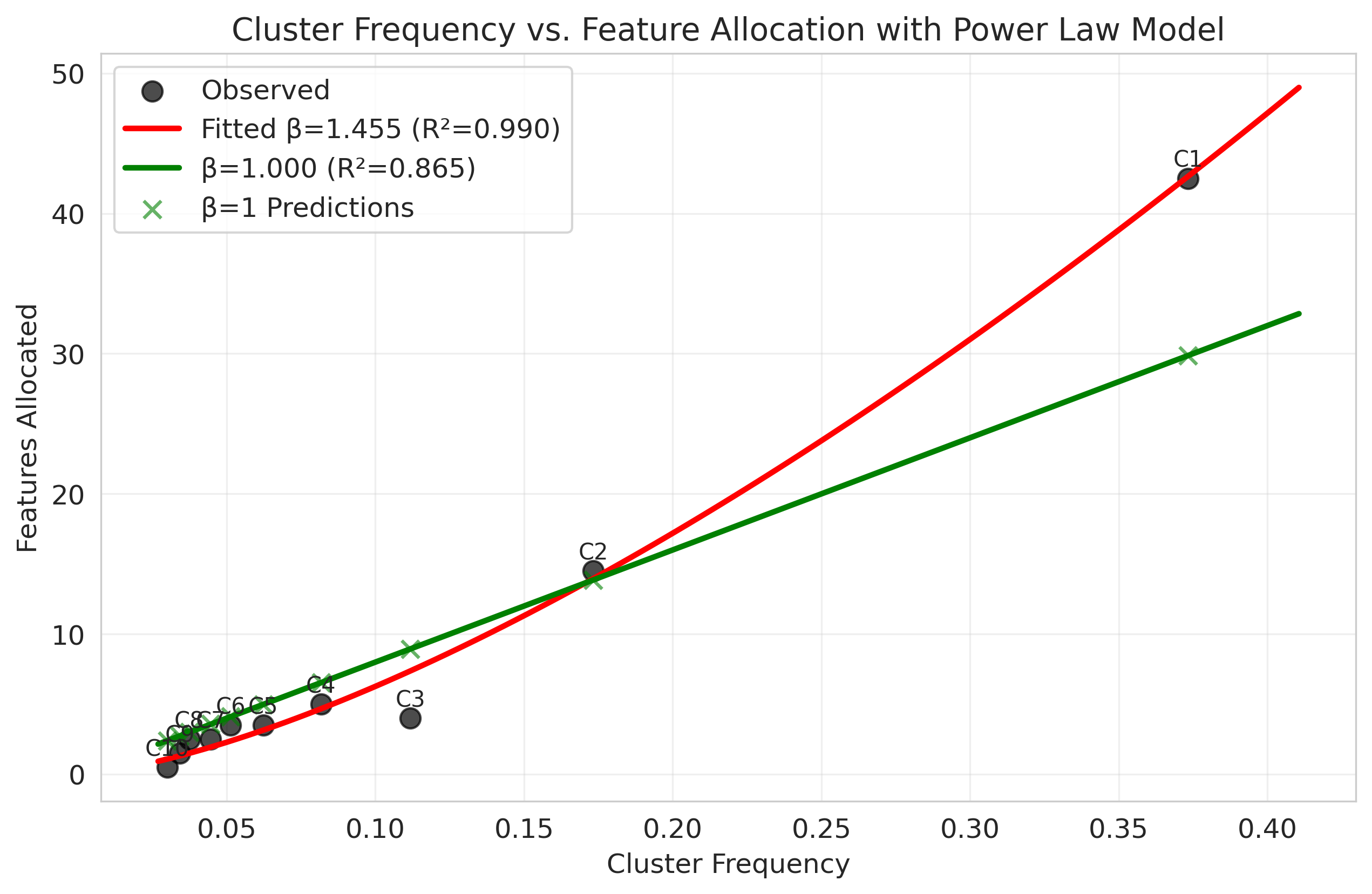}
    \end{minipage}
    \hfill
    \begin{minipage}{0.48\textwidth}
        \centering
        \includegraphics[width=\textwidth]{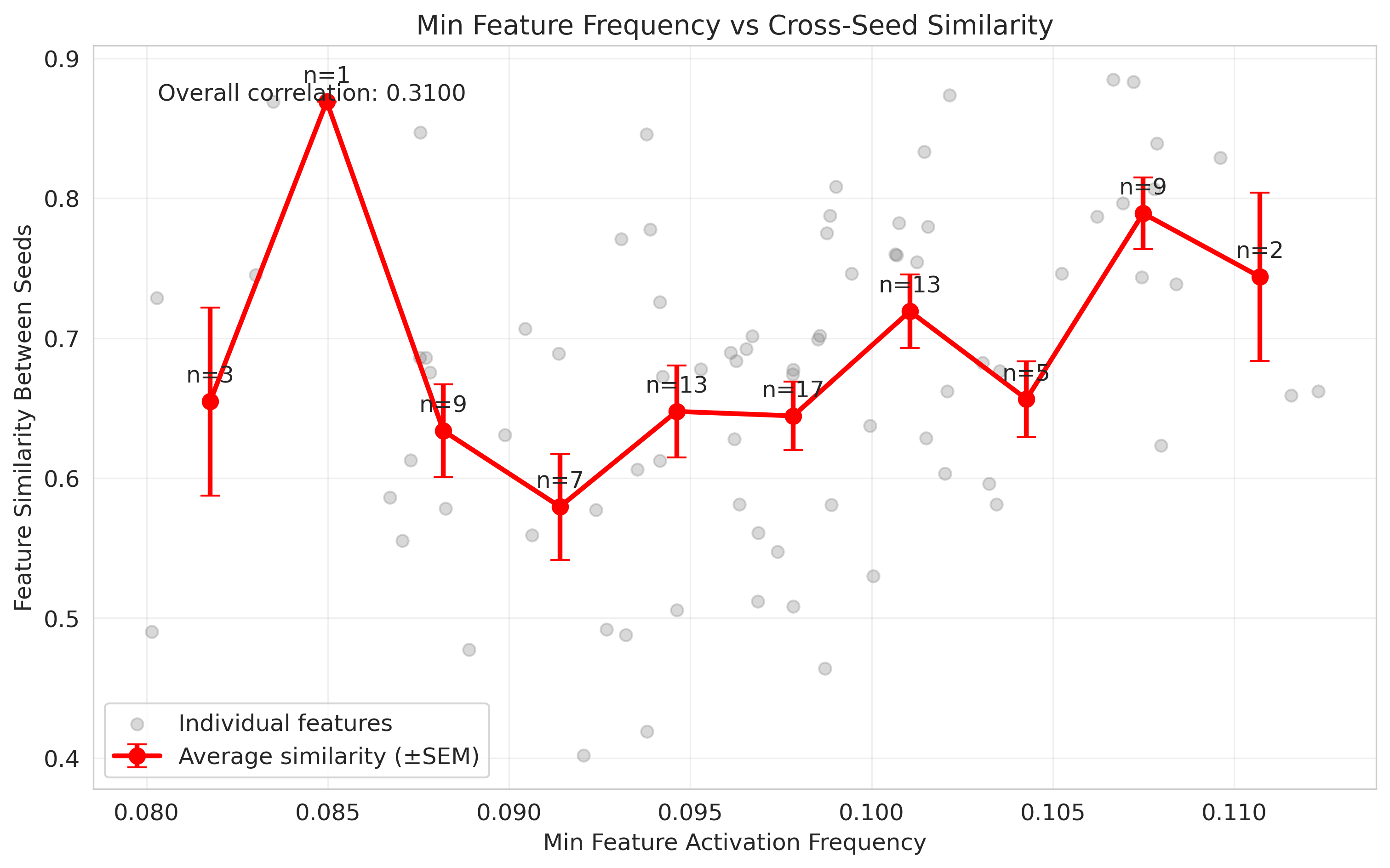}
    \end{minipage}
    \caption{Left: Capacity allocation model for Zipf distribution with $\alpha=1.1$, showing how SAE features are allocated to clusters based on cluster probability. Red curve shows fitted power law model with $D_i \propto p_i^{\beta}$ where $\beta \approx 1.455$. Right: Feature similarity between independently trained SAEs as a function of minimum feature activation frequency, with bucketed averages (red) showing a positive trend between activation frequency and feature reproducibility.}
    \label{fig:capacity_similarity_alpha1_1}
\end{figure}

\begin{figure}[htbp]
    \centering
    \includegraphics[width=0.9\textwidth]{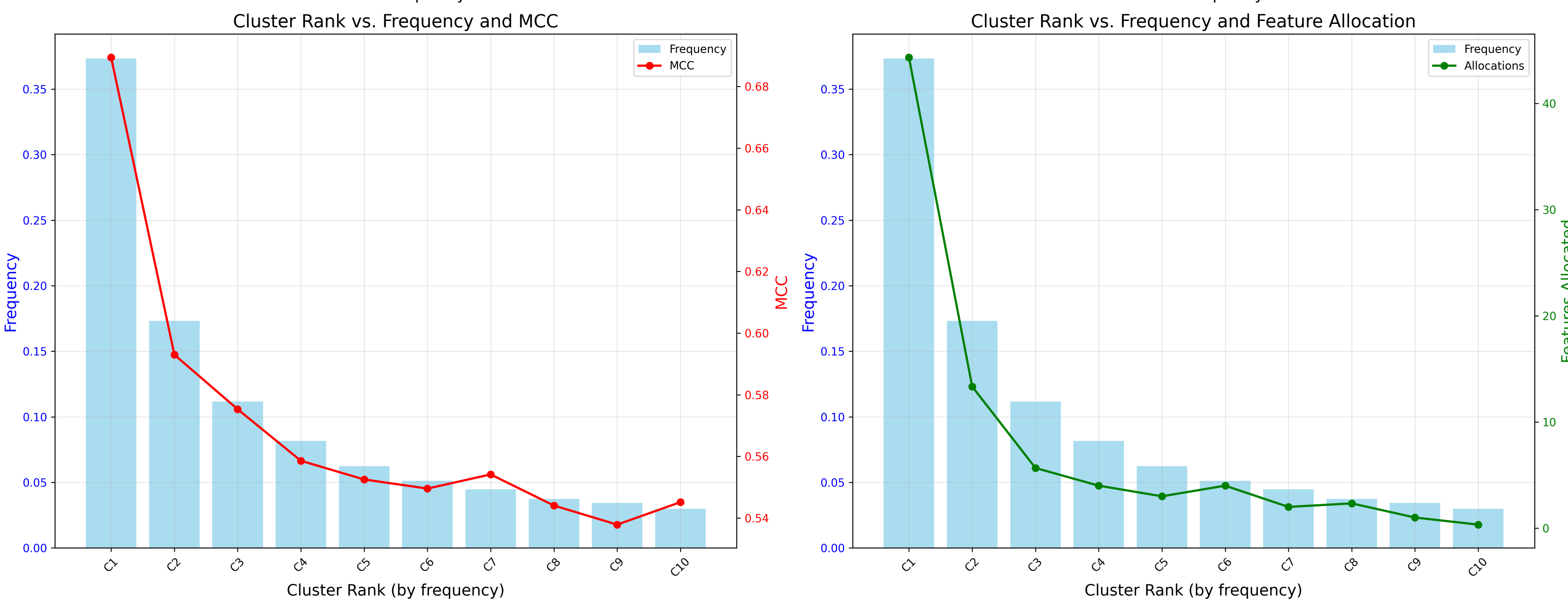}
    \caption{Cluster metrics for Zipf distribution with $\alpha=1.1$. Left: Cluster rank vs. probability (blue bars) and MCC scores (red line), showing a steeper decline in feature recovery quality for less probable clusters compared to $\alpha=1.0$. Right: Cluster rank vs. probability (blue bars) and feature allocation (green line), demonstrating more skewed allocation of dictionary features toward high-probability clusters.}
    \label{fig:cluster_metrics_alpha1_1}
\end{figure}

\begin{figure}[htbp]
    \centering
    \begin{minipage}{0.48\textwidth}
        \centering
        \includegraphics[width=\textwidth]{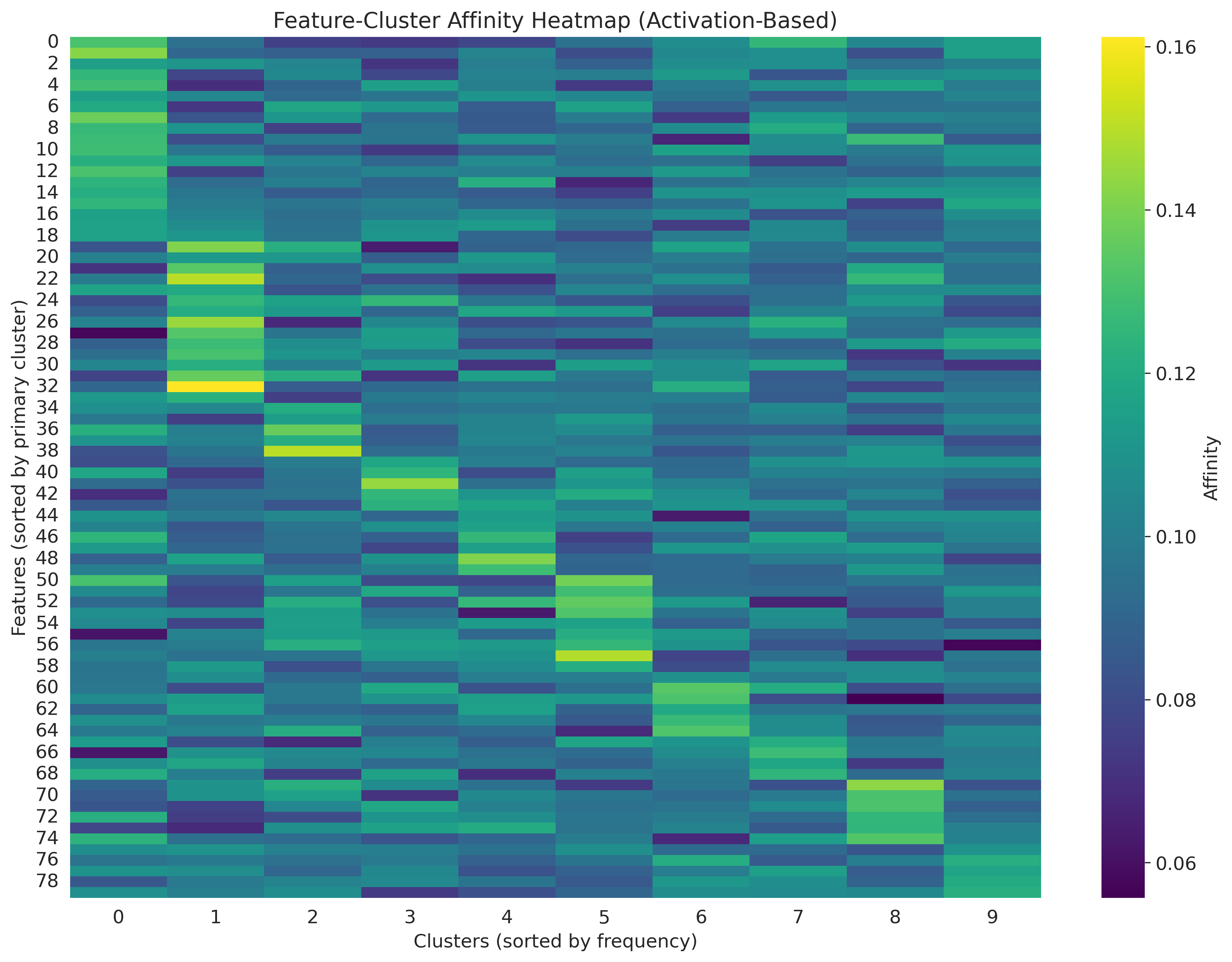}
    \end{minipage}
    \hfill
    \begin{minipage}{0.48\textwidth}
        \centering
        \includegraphics[width=\textwidth]{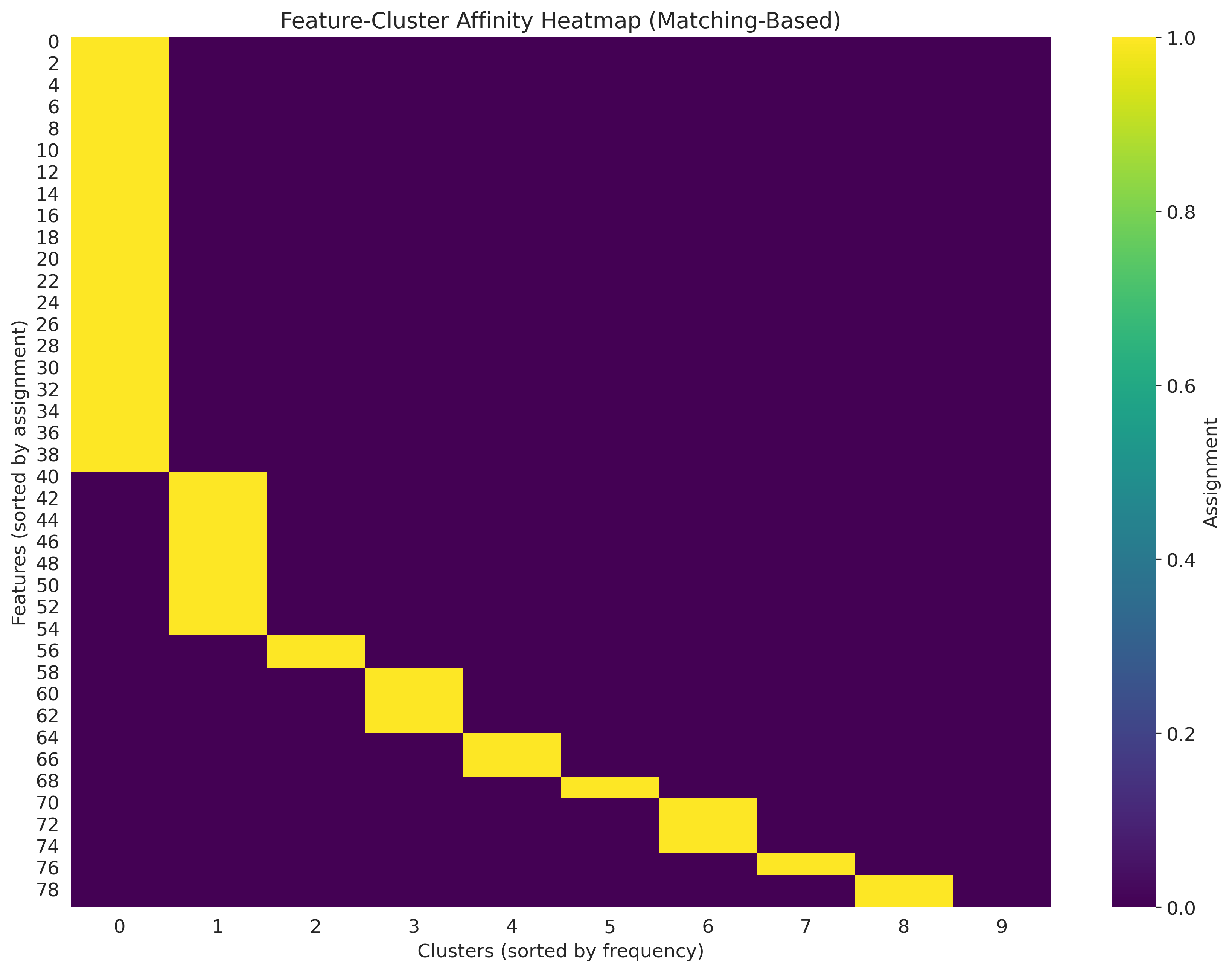}
    \end{minipage}
    \caption{Feature-cluster relationships for Zipf distribution with $\alpha=1.1$. Left: Activation-based affinity heatmap showing stronger feature-to-cluster specialization. compared to $\alpha=1.0$. Right: Matching-based affinity heatmap showing increased skew in feature assignments, with high-probability clusters receiving disproportionately more feature allocations compared to $\alpha=1.0$.}
    \label{fig:affinity_alpha1_1}
\end{figure}

Increasing the exponent to $\alpha=1.1$ creates a slightly more skewed distribution. Comparing Figure \ref{fig:capacity_similarity_alpha1_1} with Figure \ref{fig:capacity_similarity_alpha1}, we observe a steeper power law curve in the capacity allocation model ($D_i \propto p_i^{\beta}$ with $\beta \approx 1.455$), indicating that high-probability clusters now receive an even larger share of the dictionary capacity. We see a similar positive trend between feature activation frequency and feature reproducibility, with no noticeable increase in the trend.

Figure \ref{fig:cluster_metrics_alpha1_1} reveals a more dramatic drop-off in feature recoverability for lower-probability clusters, and Figure \ref{fig:affinity_alpha1_1} shows increased skew in feature allocation, with high-probability clusters receiving proportionally more features than in the $\alpha=1.0$ case. This skew is less pronounced when measured through activation-based affinity (left) compared to the more extreme allocation in the Hungarian matching assignments (right).

\subsubsection{Zipf Distribution with $\alpha=1.5$}
\label{app:zipf_alpha1_5}

\begin{figure}[htbp]
    \centering
    \begin{minipage}{0.48\textwidth}
        \centering
        \includegraphics[width=\textwidth]{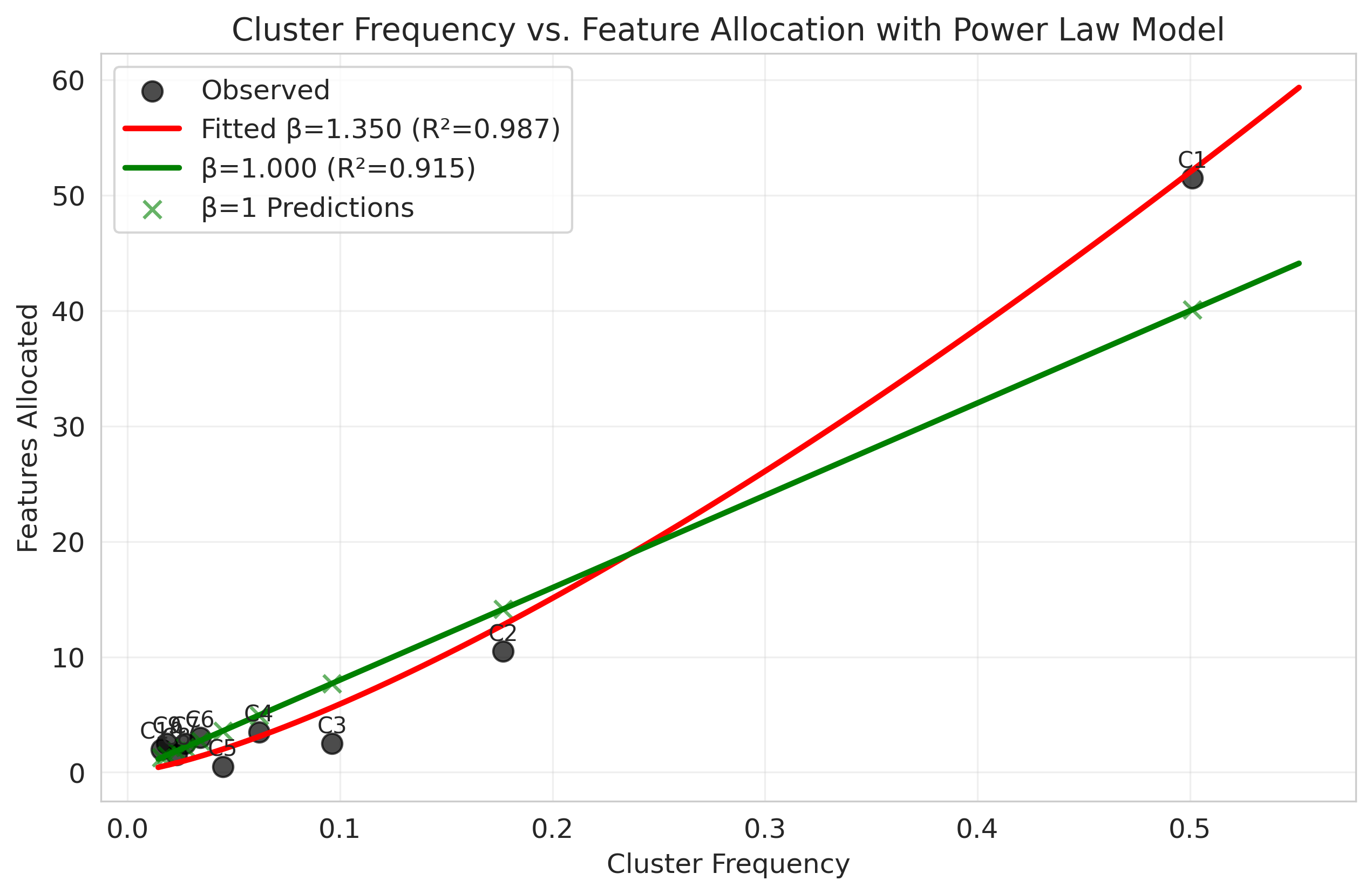}
    \end{minipage}
    \hfill
    \begin{minipage}{0.48\textwidth}
        \centering
        \includegraphics[width=\textwidth]{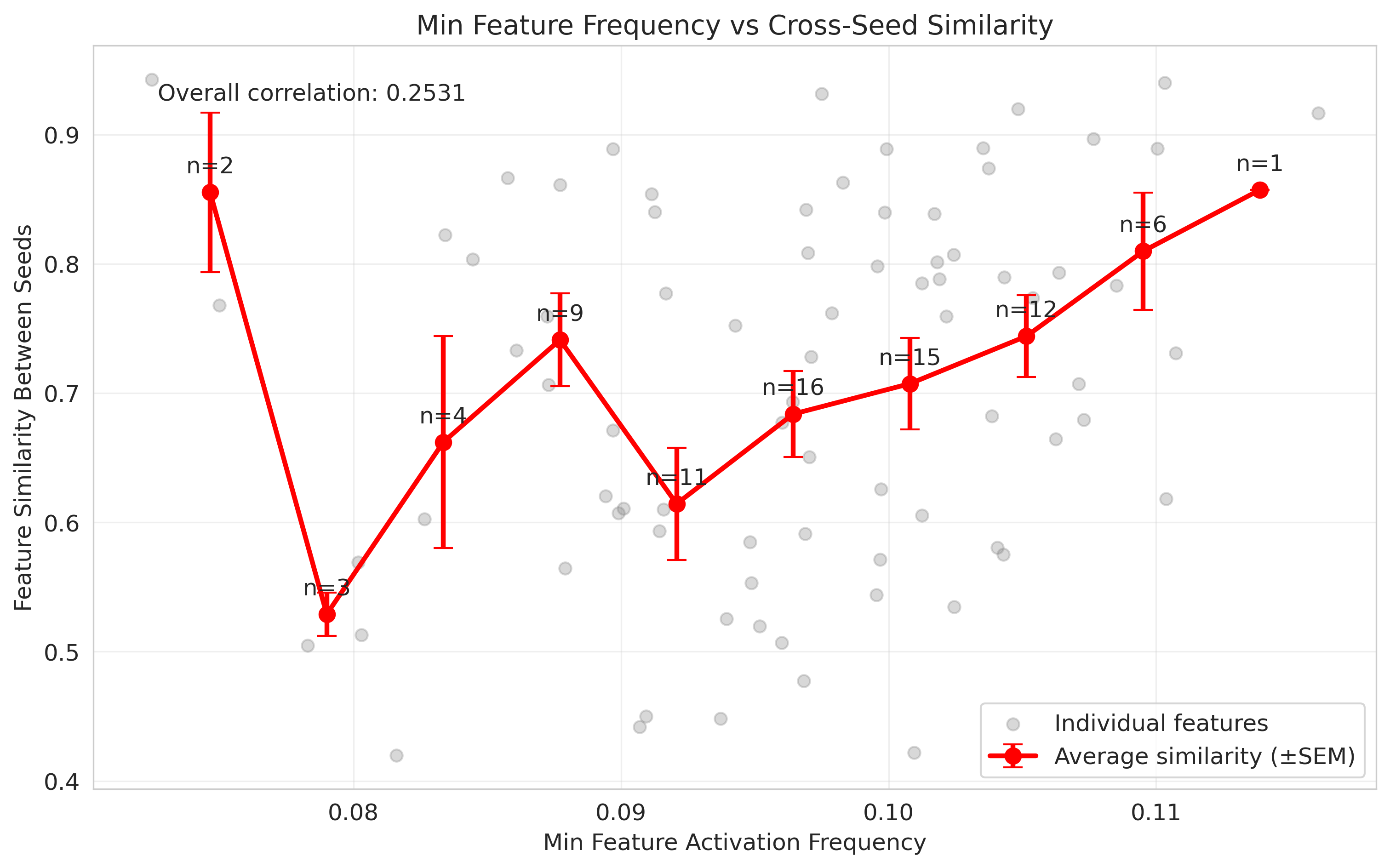}
    \end{minipage}
    \caption{Left: Capacity allocation model for Zipf distribution with $\alpha=1.5$, showing significantly more skewed allocation of SAE features to clusters. Red curve shows fitted power law model with $D_i \propto p_i^{\beta}$ where $\beta \approx 1.35$. Right: Feature similarity between independently trained SAEs as a function of minimum feature activation frequency showing a weak positive trend.}
    \label{fig:capacity_similarity_alpha1_5}
\end{figure}

\begin{figure}[htbp]
    \centering
    \includegraphics[width=0.9\textwidth]{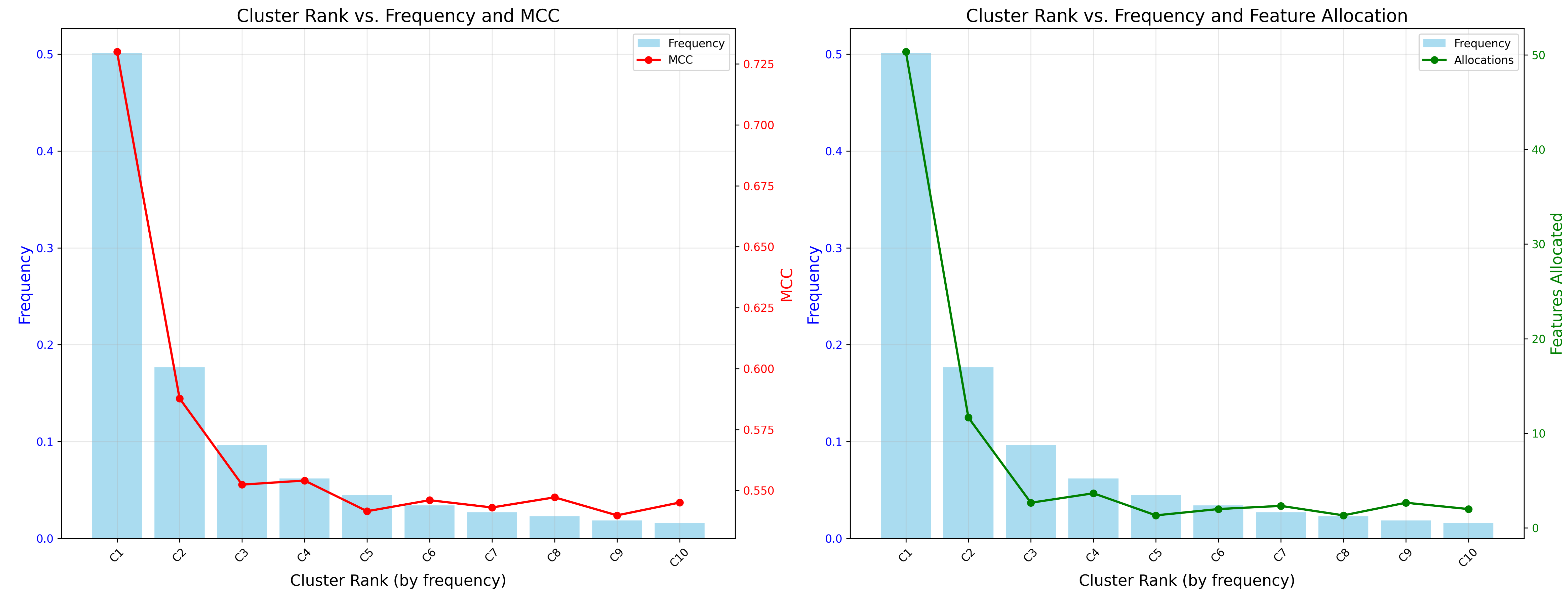}
    \caption{Cluster metrics for Zipf distribution with $\alpha=1.5$. Left: Cluster rank vs. probability (blue bars) and MCC scores (red line), showing a sharp threshold effect where feature recovery quality drops dramatically beyond the highest-probability clusters. Right: Cluster rank vs. probability (blue bars) and feature allocation (green line), demonstrating highly concentrated allocation of dictionary features to the most probable clusters.}
    \label{fig:cluster_metrics_alpha1_5}
\end{figure}

\begin{figure}[htbp]
    \centering
    \begin{minipage}{0.48\textwidth}
        \centering
        \includegraphics[width=\textwidth]{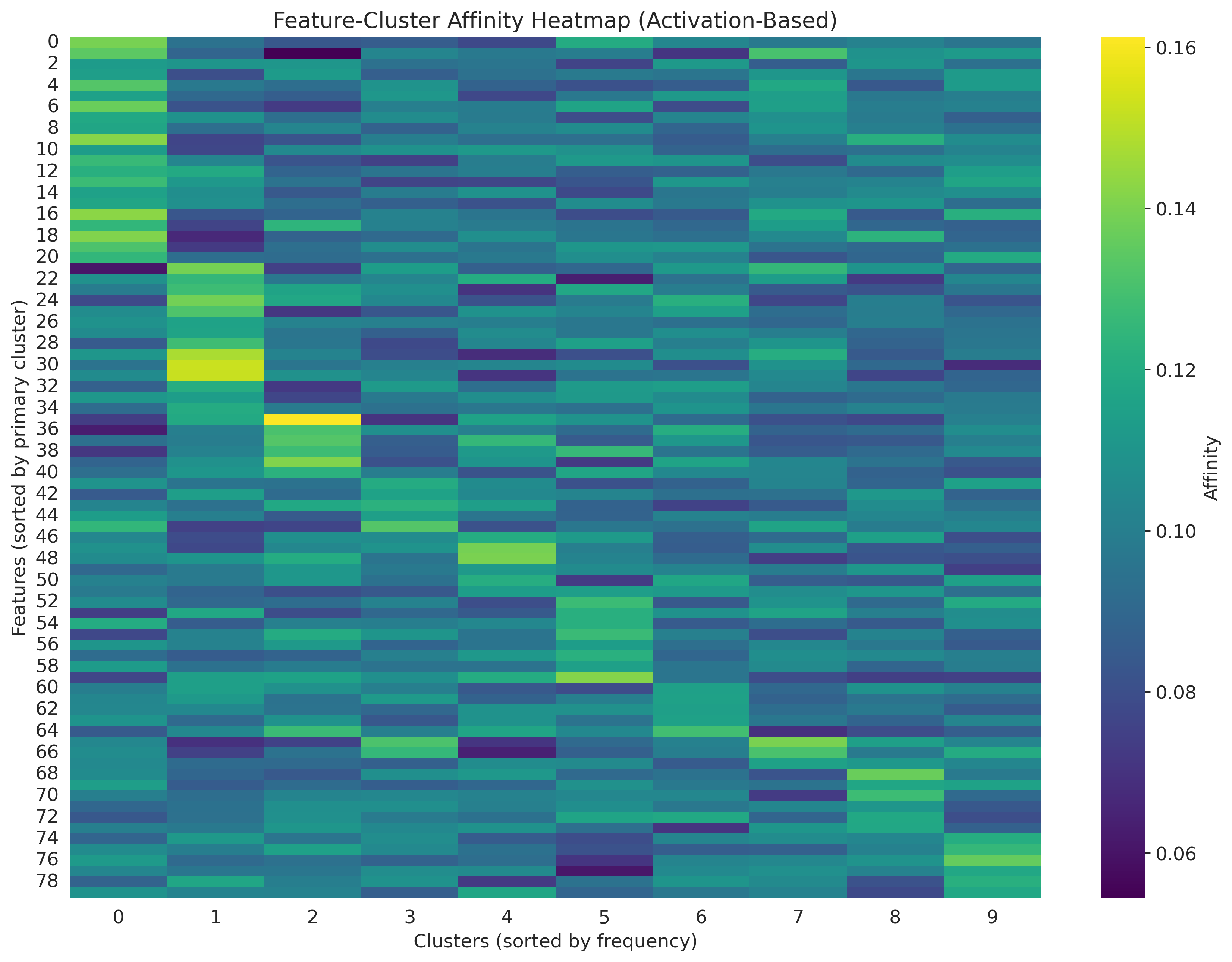}
    \end{minipage}
    \hfill
    \begin{minipage}{0.48\textwidth}
        \centering
        \includegraphics[width=\textwidth]{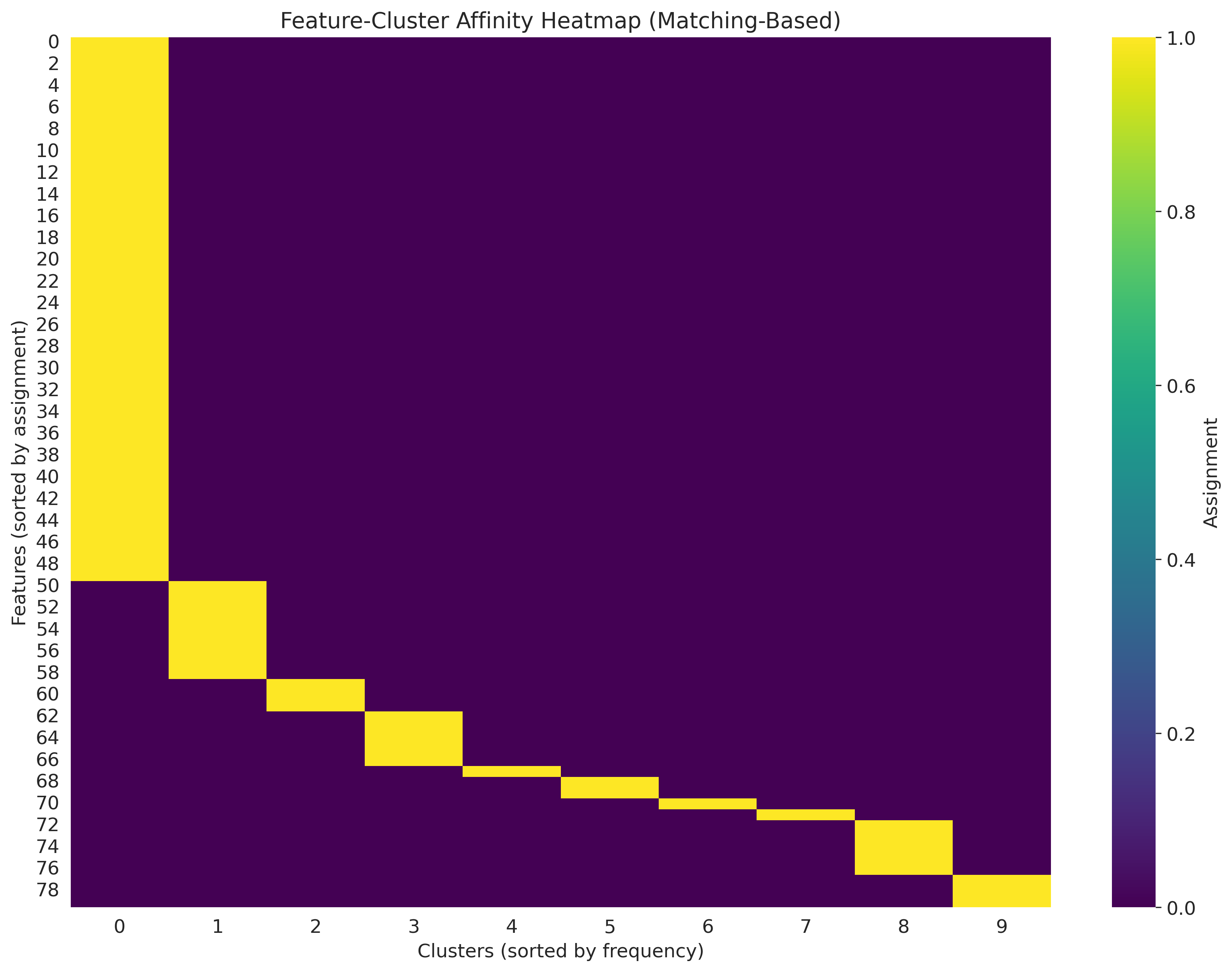}
    \end{minipage}
    \caption{Feature-cluster relationships for Zipf distribution with $\alpha=1.5$. Left: Activation-based affinity heatmap showing high feature specialization with minimal cross-activation. Right: Matching-based affinity heatmap showing strong one-to-one mapping for high-probability clusters but poor assignment for low-probability clusters.}
    \label{fig:affinity_alpha1_5}
\end{figure}

With $\alpha=1.5$, we observe a highly skewed distribution where a small number of probable clusters dominate. Figure \ref{fig:capacity_similarity_alpha1_5} shows that the capacity allocation follows a power law with $D_i \propto p_i^{\beta}$ where $\beta \approx 1.35$, with the majority of dictionary features allocated to the highest-probability clusters. The feature similarity plot shows a positive relationship between the feature activation frequency and feature similarity between independently trained SAEs, but the effect is not much stronger than the $\alpha=1$ or $\alpha=1.1$ case.

The cluster metrics in Figure \ref{fig:cluster_metrics_alpha1_5} show MCC scores dropping precipitously beyond the most probable clusters. The feature-cluster affinity maps in Figure \ref{fig:affinity_alpha1_5} also show highly specialized features based on frequency but the skew when measured through activation-based affinity is less pronounced as compared to Hungarian matching assignments.

\subsubsection{Zipf Distribution with $\alpha=2.0$}
\label{app:zipf_alpha2}

\begin{figure}[htbp]
    \centering
    \begin{minipage}{0.48\textwidth}
        \centering
        \includegraphics[width=\textwidth]{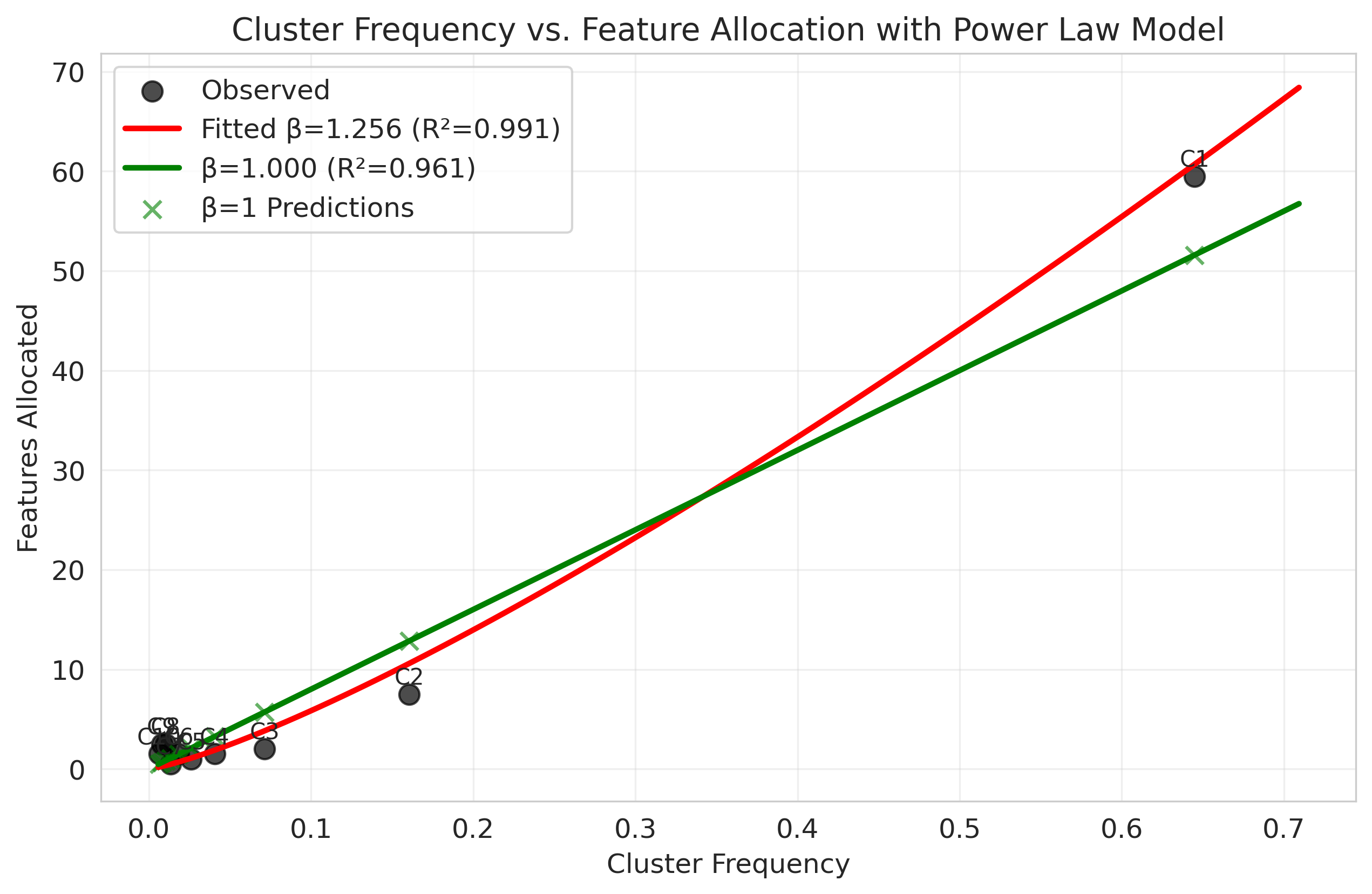}
    \end{minipage}
    \hfill
    \begin{minipage}{0.48\textwidth}
        \centering
        \includegraphics[width=\textwidth]{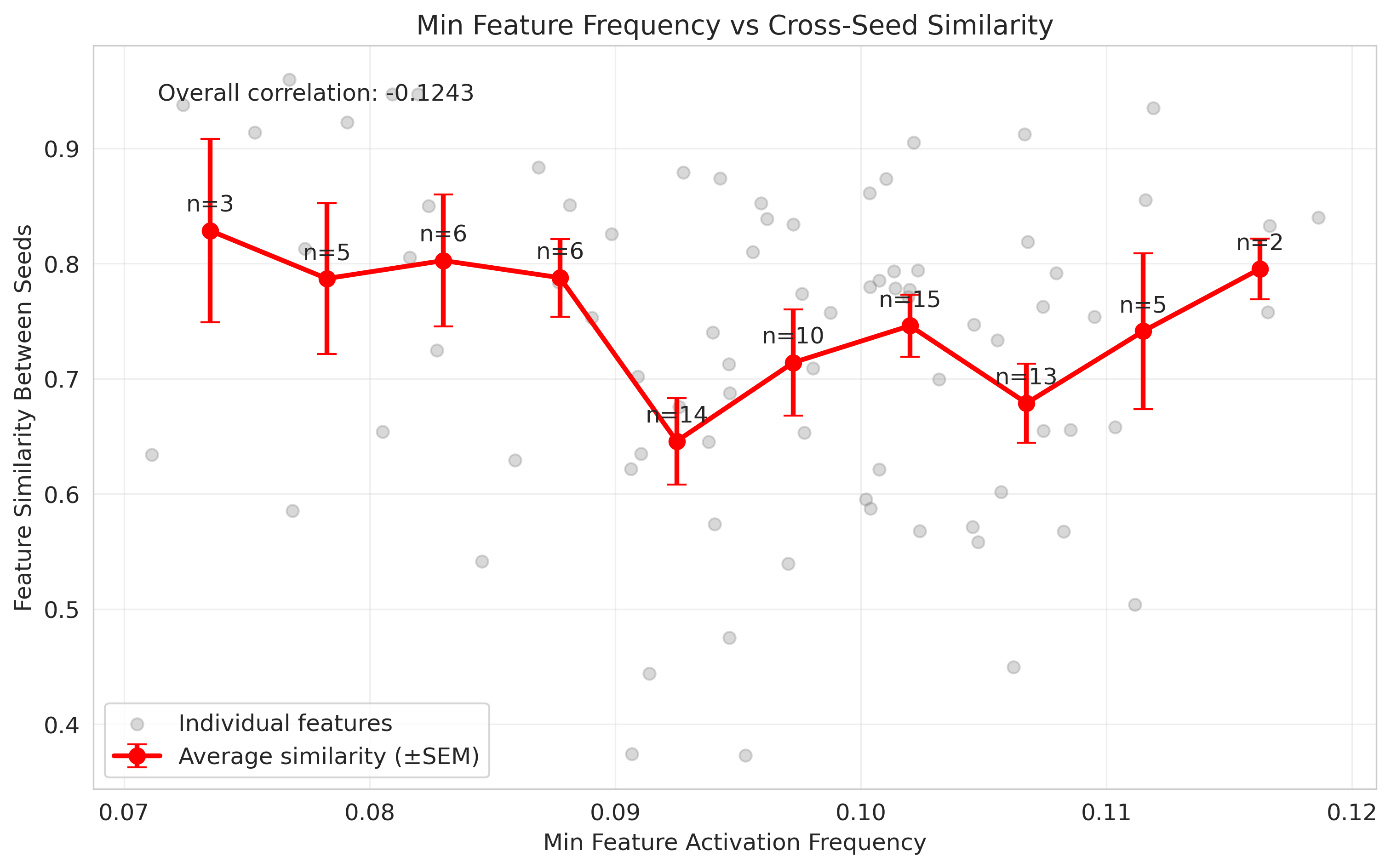}
    \end{minipage}
    \caption{Left: Capacity allocation model for Zipf distribution with $\alpha=2.0$, showing extreme concentration of SAE features to the highest-probability clusters. Red curve shows fitted power law model with $D_i \propto p_i^{\beta}$ where $\beta \approx 1.256$. Right: Feature similarity between independently trained SAEs as a function of minimum feature activation frequency showing a flat to weak positive trend.}
    \label{fig:capacity_similarity_alpha2}
\end{figure}

\begin{figure}[htbp]
    \centering
    \includegraphics[width=0.9\textwidth]{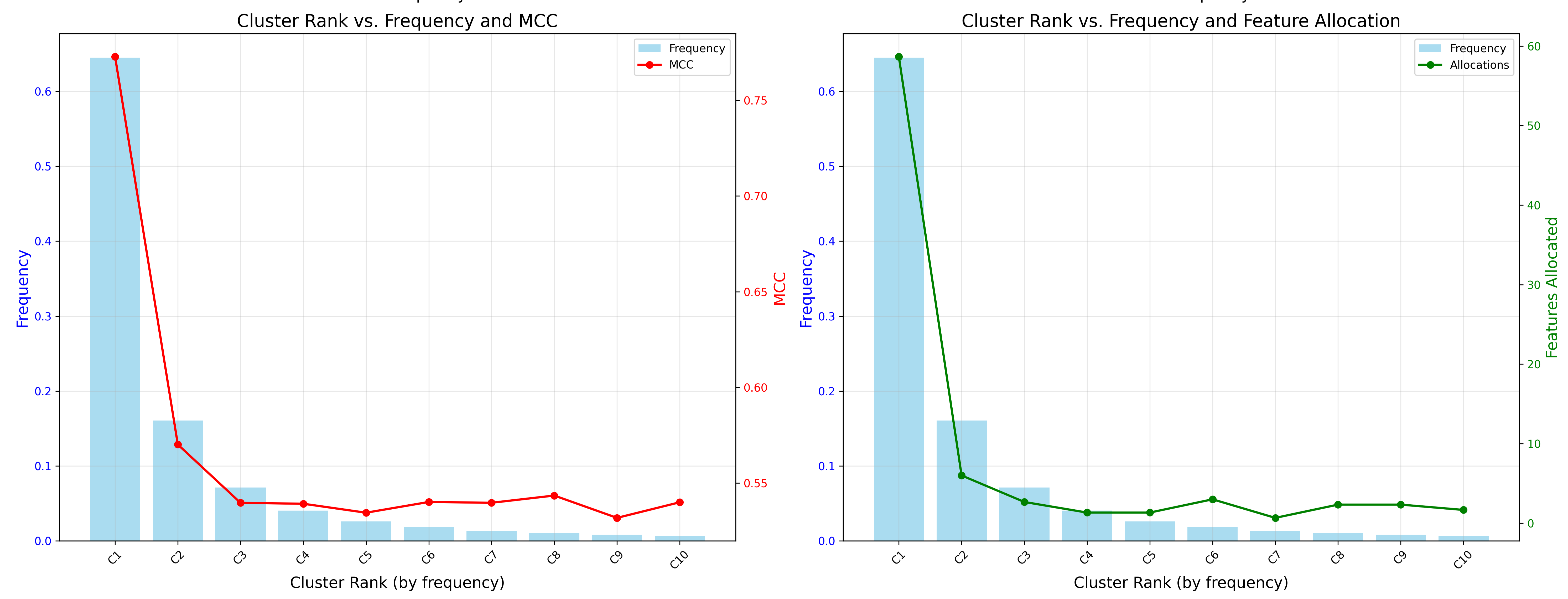}
    \caption{Cluster metrics for Zipf distribution with $\alpha=2.0$. Left: Cluster rank vs. probability (blue bars) and MCC scores (red line), showing that the very highest-probability clusters achieve good feature recovery. Right: Cluster rank vs. probability (blue bars) and feature allocation (green line), demonstrating that dictionary features are almost exclusively allocated to the top clusters, with negligible capacity for the long tail.}
    \label{fig:cluster_metrics_alpha2}
\end{figure}

\begin{figure}[htbp]
    \centering
    \begin{minipage}{0.48\textwidth}
        \centering
        \includegraphics[width=\textwidth]{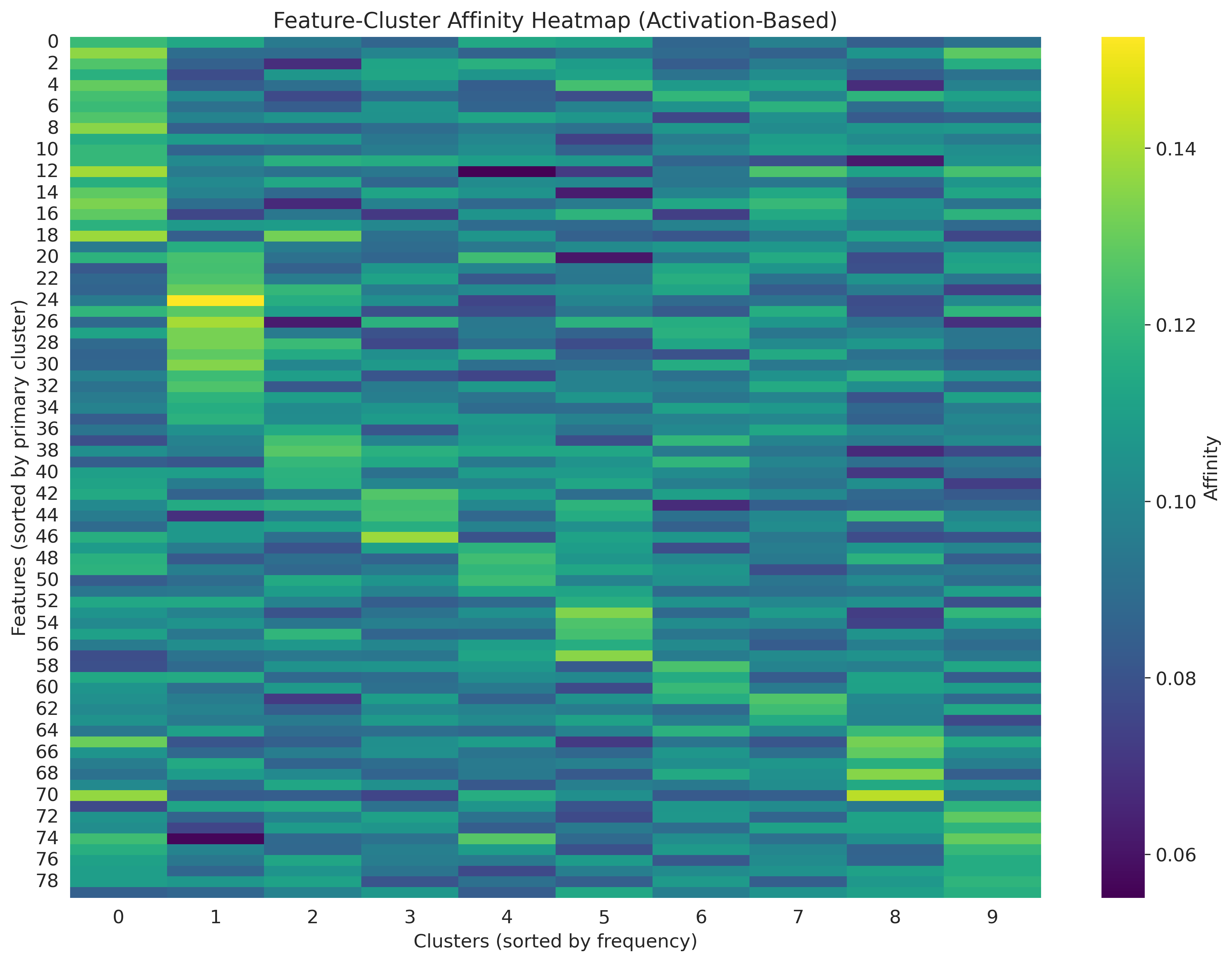}
    \end{minipage}
    \hfill
    \begin{minipage}{0.48\textwidth}
        \centering
        \includegraphics[width=\textwidth]{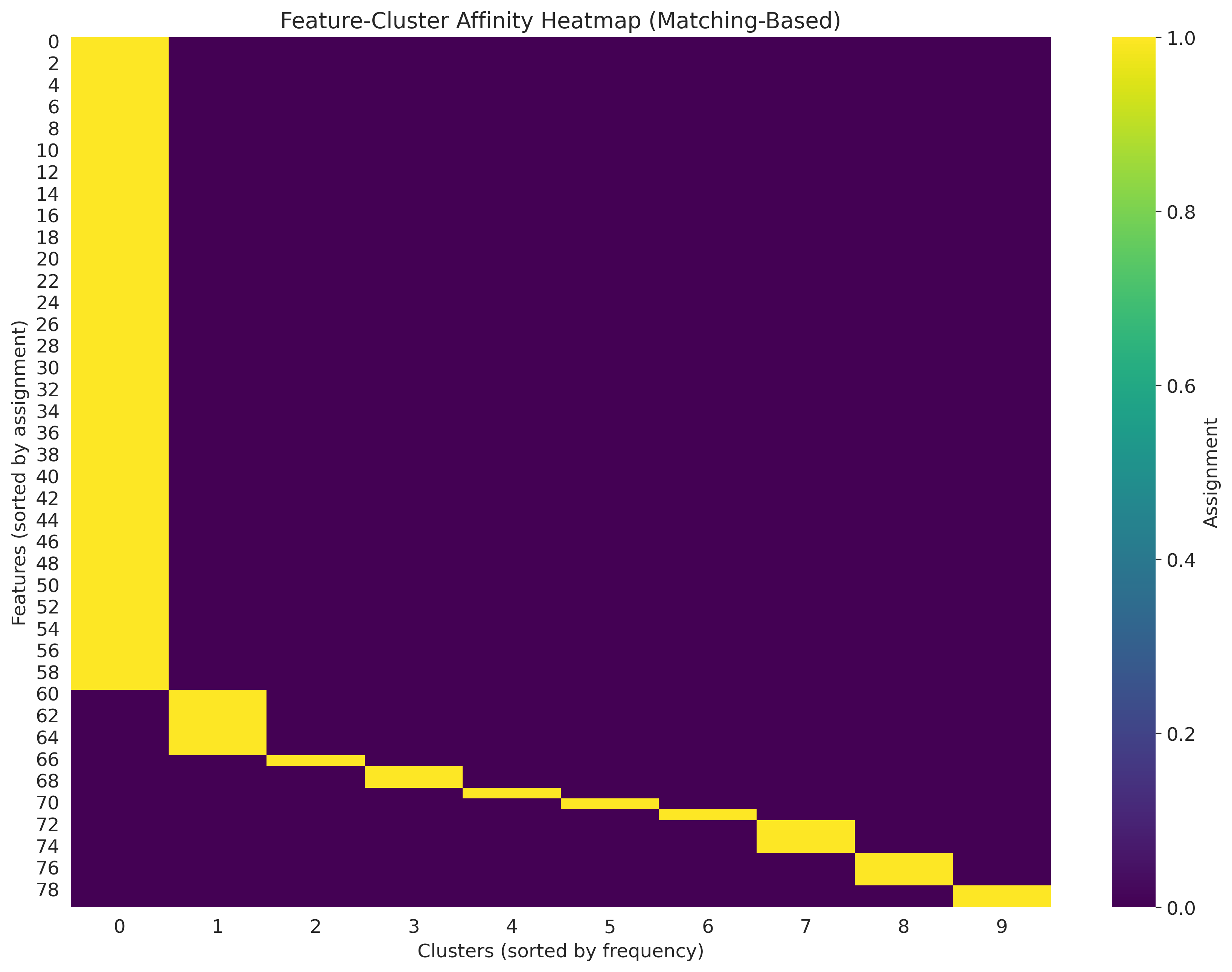}
    \end{minipage}
    \caption{Feature-cluster relationships for Zipf distribution with $\alpha=2.0$. Left: Activation-based affinity heatmap showing specialization to high-probability clusters. Right: Matching-based affinity heatmap showing strong assignment for only the highest-probability clusters, with the majority of clusters receiving minimal or no feature representation.}
    \label{fig:affinity_alpha2}
\end{figure}

At $\alpha=2.0$, we observe an extremely skewed distribution where a handful of clusters dominate the probability distribution. Figure \ref{fig:capacity_similarity_alpha2} shows that dictionary capacity is allocated according to a power law with $D_i \propto p_i^{\beta}$ where $\beta \approx 1.256$, with capacity concentrated in the highest-probability clusters. The feature similarity plot shows a flat to weak positive relationship between the feature activation frequency and feature similarity between independently trained SAEs.

The cluster metrics in Figure \ref{fig:cluster_metrics_alpha2} confirm that only the very top clusters achieve meaningful feature recovery, with the vast majority of clusters poorly represented.

\subsection{Analysis of Dictionary Size Effects in Two-Phase Distributions}
\label{app:two_phase}

To better approximate real language data distributions, we developed a two-phase model that combines different power laws. This section examines how dictionary size affects feature learning in this more realistic distribution. We enhanced our model organism with a two-phase feature cluster frequency distribution combining a Mandelbrot-Zipf function ($g(r; s_1, q) = (r+q)^{-s_1}$, $s_1=1.05$, $q=5.0$) for common concepts ($r < 40,000$) and a steeper power law ($g(r; s_2) \propto r^{-s_2}$, $s_2=30.0$) for the long tail. For all experiments in this section, we set the ground truth dimension $d_{gt}=400000$, with 50000 clusters (each cluster represented by 8 ground truth features on average), and maintain a constant TopK sparsity parameter $k=8$, while varying the SAE dictionary size $d_{sae} \in \{80, 160, 1000, 10000\}$ to analyze the impact of SAE capacity on feature learning. Table \ref{tab:two_phase_params} summarizes the hyperparameters used in these experiments.

\begin{table}[h]
    \small
    \centering
    \caption{Hyperparameters for Two-Phase Distribution Experiments}
    \label{tab:two_phase_params}
    \begin{tabular}{lr}
    \toprule
    \textbf{Parameter} & \textbf{Value} \\
    \midrule
    TopK sparsity parameter ($k$) & 8 \\
    Activation dimension & 20 \\
    Dictionary size ($d_{sae}$) & varies (80, 160, 1000, 10000) \\
    Training examples & 100,000 \\
    Training steps & 20,000 \\
    Learning rate & 0.04 \\
    Learning rate decay factor & 0.1 \\
    Learning rate decay steps & [20,000] \\
    Warmup steps & 1,000 \\
    Minimum learning rate & 1e-05 \\
    L1 coefficient & 0.1 \\
    Batch size & 4,096 \\
    Number of models trained & 3 \\
    Number of clusters & 50,000 \\
    Cluster dimensions & 8 per cluster \\
    First power law exponent ($s_1$) & 1.05 \\
    Second power law exponent ($s_2$) & 30.0 \\
    Transition rank & 40,000 \\
    $q$ parameter & 5.0 \\
    \bottomrule
    \end{tabular}
\end{table}

\begin{figure}[htbp]
    \centering
    \includegraphics[width=0.7\textwidth]{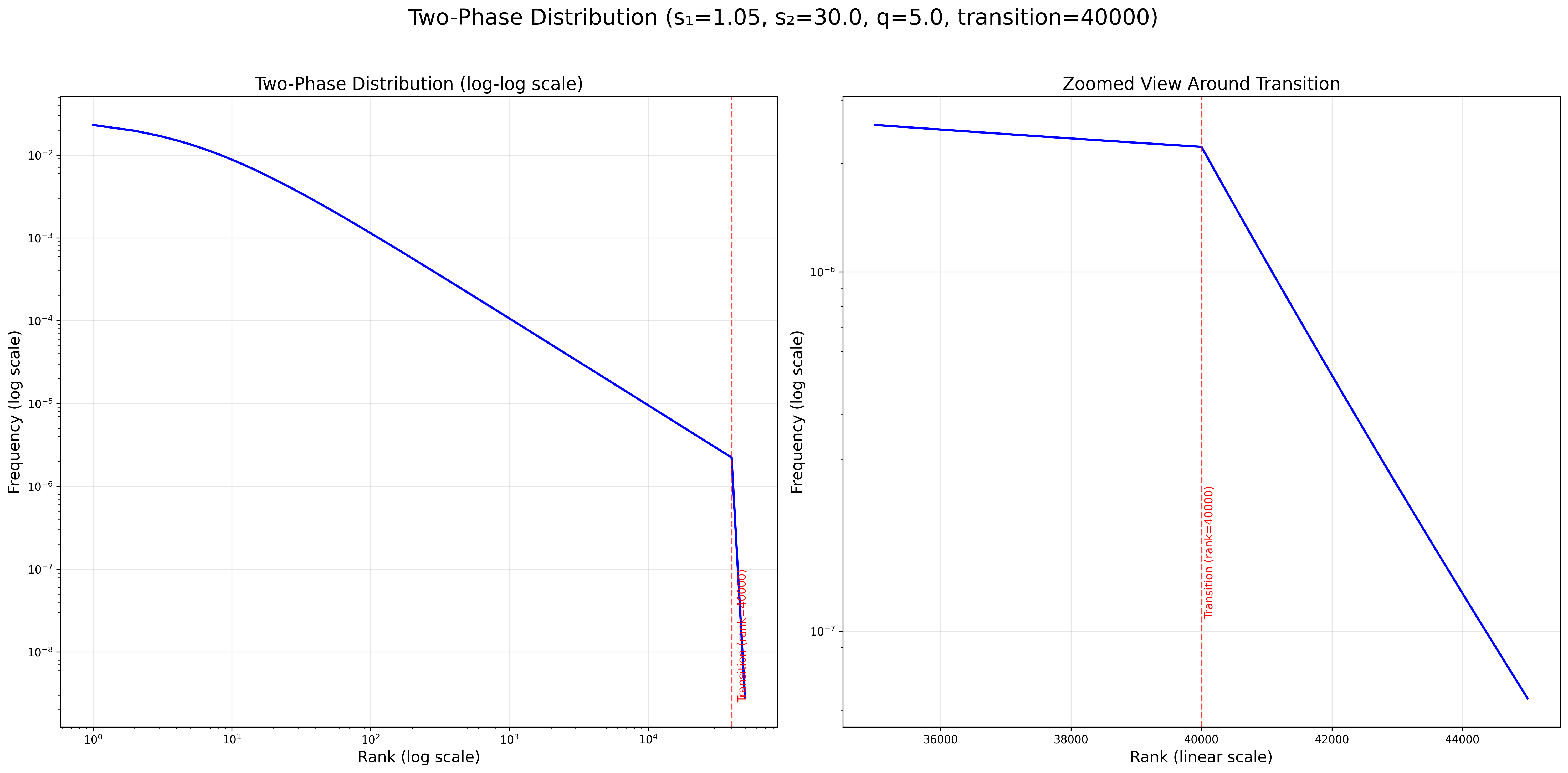}
   \caption{\small Two-phase cluster probability distribution used to approximate real language data. The distribution follows a Mandelbrot-Zipf pattern ($s_1=1.05$) until rank 40,000, then transitions to a steeper power law ($s_2=30.0$) capturing the long tail characteristics of natural language.}
    \label{fig:synthetic_token_distribution}
\end{figure}

Our two-phase distribution depicted in Figure \ref{fig:synthetic_token_distribution} combines a shallow power law for frequent tokens ($s_1=1.05$ until rank 40,000) with a much steeper power law ($s_2=30.0$) for the long tail. This creates a realistic approximation of natural language distributions, which exhibit similar two-phase characteristics (\autoref{fig:synthetic_token_distribution_main_text_pos_v5}).

\begin{figure}[htbp]
    \centering
    \begin{minipage}{0.48\textwidth}
        \centering
        \includegraphics[width=\textwidth]{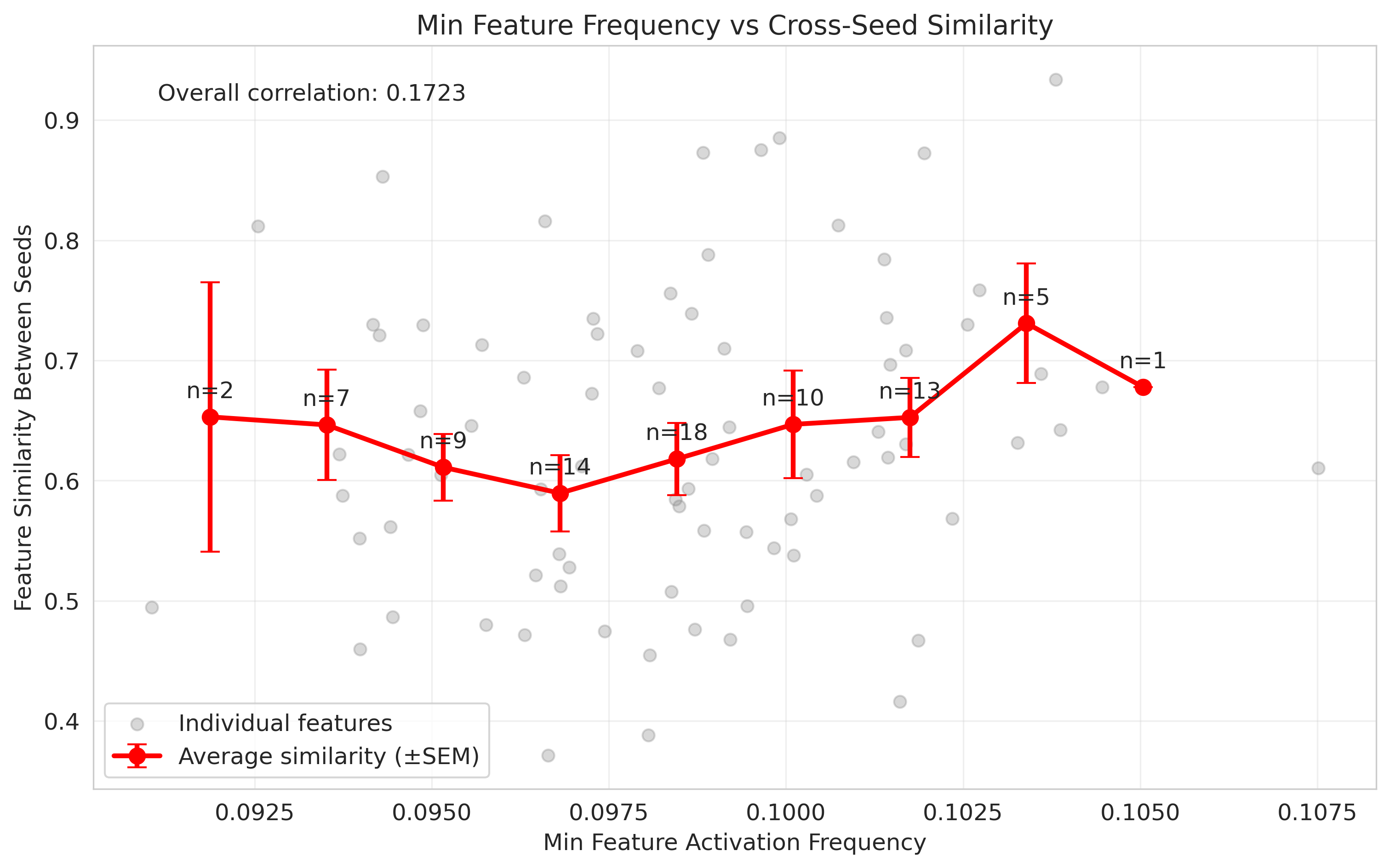}
  \caption{Two-phase model with dictionary size 80. Feature reproducibility shows a weak positive relationship with activation frequency.}
        \label{fig:two_phase_80}
    \end{minipage}
    \hfill
    \begin{minipage}{0.48\textwidth}
        \centering
        \includegraphics[width=\textwidth]{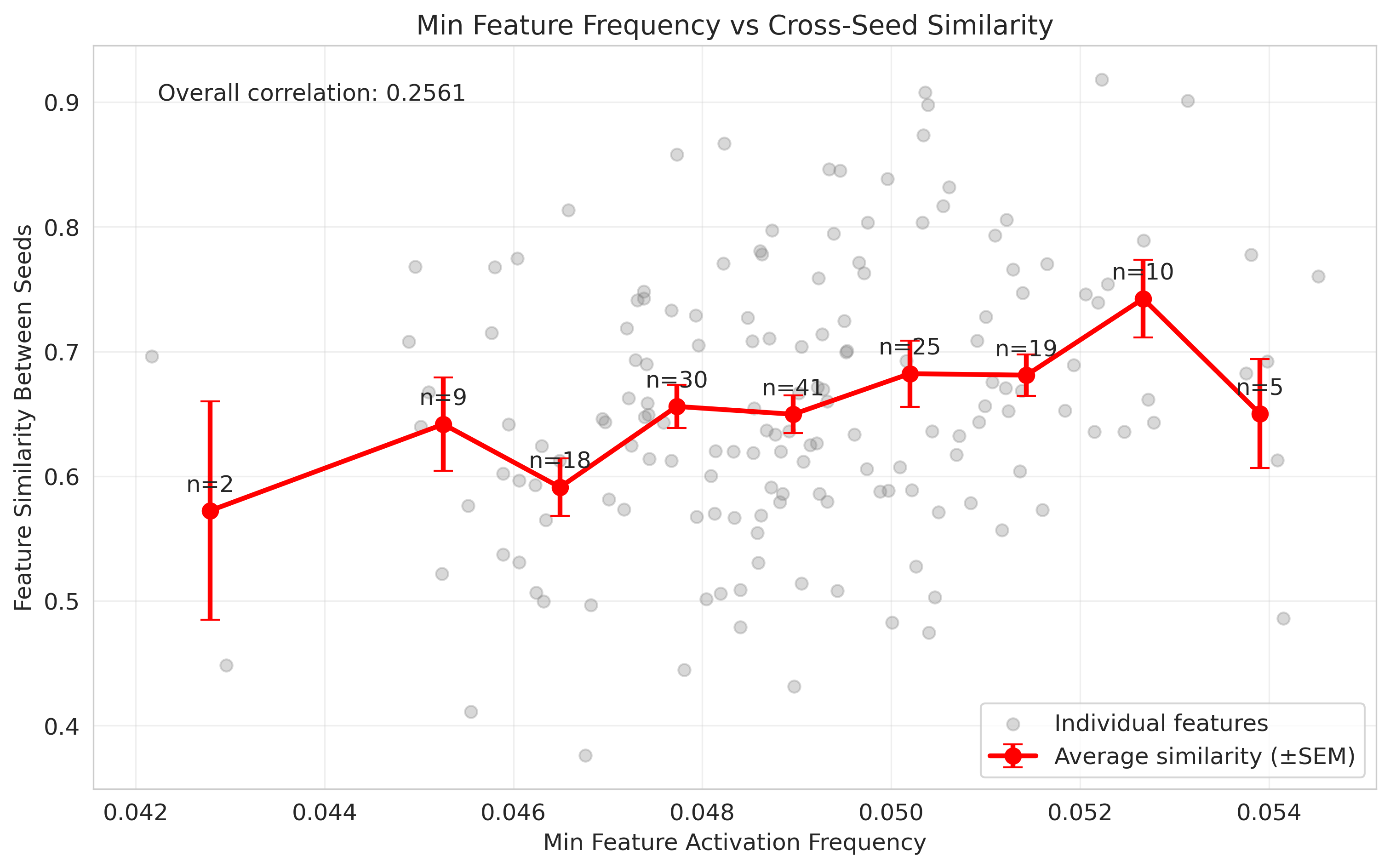}
 \caption{Two-phase model with dictionary size 160. The relationship between activation frequency and feature reproducibility remains weak but becomes slightly more pronounced compared to dictionary size 80.}
        
        \label{fig:two_phase_160}
    \end{minipage}
\end{figure}

\begin{figure}[htbp]
    \centering
    \begin{minipage}{0.48\textwidth}
        \centering
        \includegraphics[width=\textwidth]{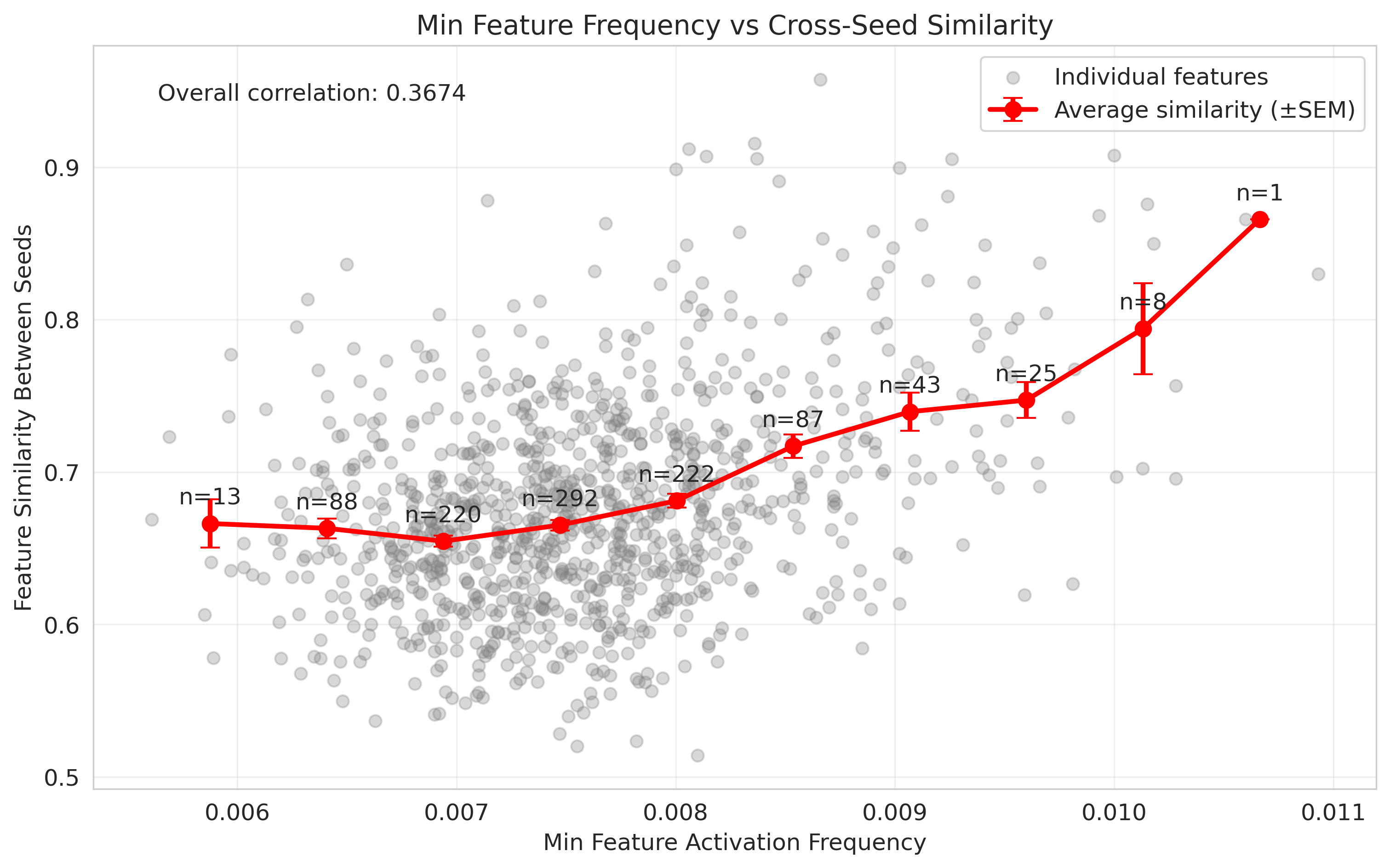}
   \caption{Two-phase model with dictionary size 1000. Feature reproducibility shows a moderately strong positive correlation with activation frequency especially at higher activation frequencies. Increased model capacity creates sufficient local redundancy for high probability clusters.}
        \label{fig:two_phase_1000}
    \end{minipage}
    \hfill
    \begin{minipage}{0.48\textwidth}
        \centering
        \includegraphics[width=\textwidth]{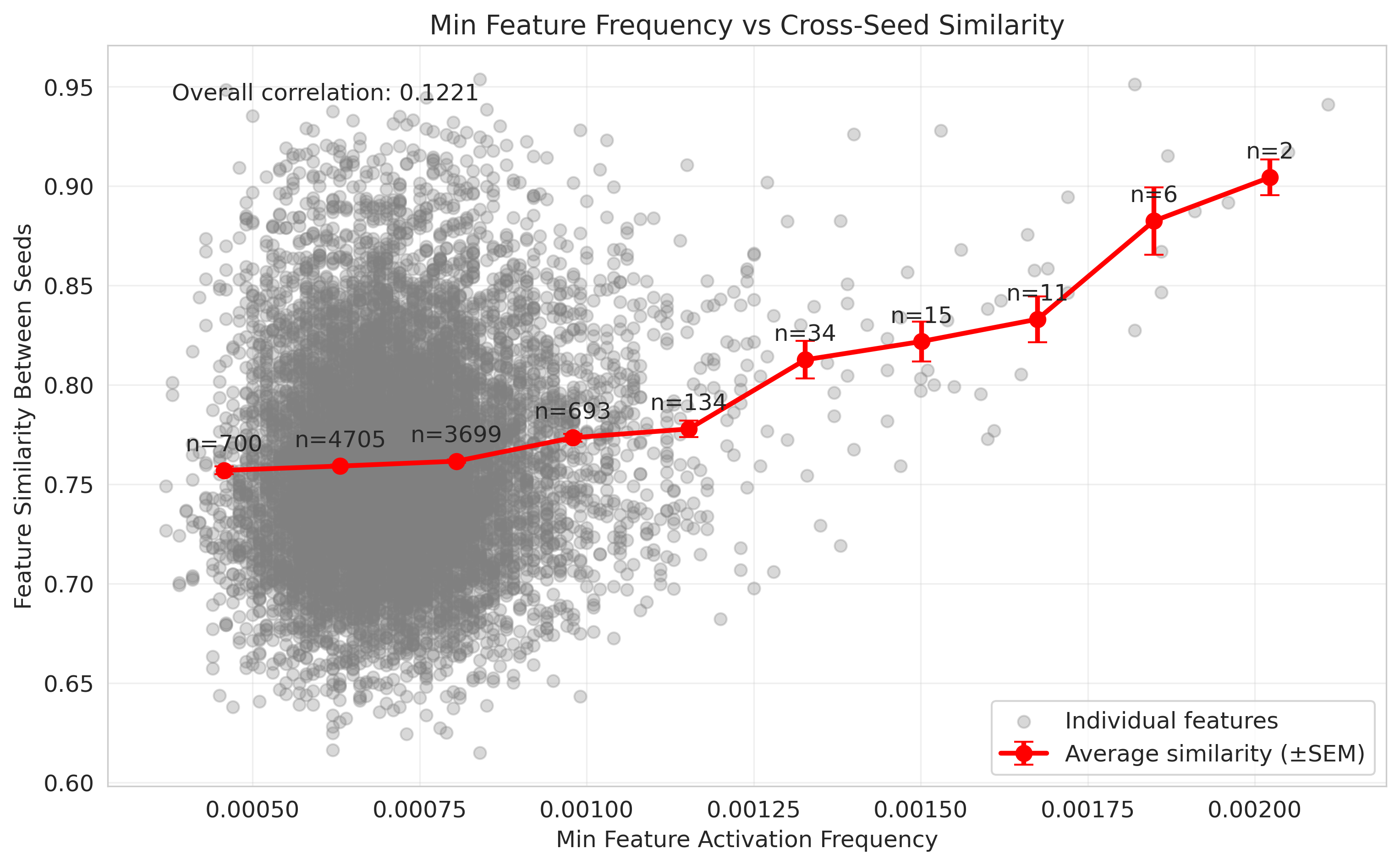}
\caption{Two-phase model with dictionary size 10000. With substantially increased capacity, feature reproducibility exhibits a strong positive correlation with activation frequency across a wide frequency range.  Increased model capacity creates sufficient local redundancy for high probability clusters.}
        \label{fig:two_phase_10000}
    \end{minipage}
\end{figure}

Figures \ref{fig:two_phase_80} through \ref{fig:two_phase_10000} demonstrate how dictionary size affects feature reproducibility across the activation frequency spectrum. Several key trends emerge:

\begin{enumerate}
    \item With small dictionary sizes (80-160 features), we observe only a weak relationship between activation frequency and feature reproducibility. Only the most frequent clusters show consistent reproducibility, indicating severe capacity limitations where the dictionary should prioritize only the dominant clusters.
    
    \item As dictionary size increases to 1000 features, the relationship between activation frequency and reproducibility becomes more pronounced. A wider range of moderately frequent features begins to show improved reproducibility, as the increased capacity allows the model to represent more clusters with sufficient local redundancy.
    
    \item At dictionary size 10000, we observe a positive relationship between activation frequency and reproducibility across a wide frequency range. The substantial increase in capacity creates local redundancy for many more clusters, enabling consistent representation of features.
\end{enumerate}

These results demonstrate that dictionary size relative to the distribution characteristics plays a crucial role in determining which features can be consistently learned. In particular, the local redundancy—defined as the ratio of dictionary size to the effective number of clusters above a certain frequency threshold—determines the model's ability to learn reproducible features across different frequency bands. Rather than a threshold effect, we observe a continuum where the strength of correlation between feature frequency and reproducibility increases as more clusters benefit from sufficient local redundancy.

These results demonstrate that dictionary size relative to the distribution characteristics plays an important role in determining which features can be consistently learned. In particular, local redundancy---defined as the ratio of dictionary capacity allocated to a cluster relative to the cluster dimension---determines the model's ability to learn reproducible features across different frequency bands. With larger dictionaries, more clusters achieve sufficient local redundancy, leading to a continuum where the strength of correlation between feature frequency and reproducibility increases as dictionary capacity expands.

\section{Effect of Misspecifying Encoder Sparsity Parameter}
\label{app:misspecifying_k}

In practical applications of TopK SAEs, practitioners must choose the sparsity parameter $k$ without knowledge of the true underlying sparsity $s$ of the data-generating process. This section investigates how misspecification of $k$ relative to the optimal value $s$ affects dictionary recovery quality and training stability.

Consider a conceptual cluster $i$ within the data with ground-truth dictionary $A_i^\star \in \mathbb{R}^{m \times d_{\text{gt}}}$ having unit-norm columns. Let $s < d_{\text{gt}}$ be the true sparsity of coefficient vectors $s^\star$ for signals $x = A_i^\star s^\star$ from this cluster. The SAE uses a TopK encoder with parameter $k$. We define the \textbf{sparsity ratio} as $\rho := k/s$ and focus on understanding the asymmetric effects of under-sparsity ($\rho < 1$) versus over-sparsity ($\rho > 1$) on feature learning.

We conduct experiments in the matched regime where $d_{\text{sae}} = d_{\text{gt}}$, generating synthetic data by first sampling the ground-truth dictionary $\mathbf{A}_{\text{gt}} \in \mathbb{R}^{m \times d_{\text{gt}}}$ from a standard normal distribution with unit-norm columns. For each data point, we randomly select exactly $s$ features and set their activations to independent Gaussian samples, yielding coefficient vector $\mathbf{f}_{\text{gt}}(\mathbf{x}) \in \mathbb{R}^{d_{\text{gt}}}$ with $s$ non-zero entries, then construct data points as $\mathbf{x} = \mathbf{A}_{\text{gt}}\mathbf{f}_{\text{gt}}(\mathbf{x})$. Our experimental configuration uses input dimension $m = 8$, dictionary sizes $d_{\text{gt}} = d_{\text{sae}} = 40$, true sparsity $s = 8$, training samples $N = 50,000$, TopK parameter range $k \in \{2, 4, 6, 8, 10, 12, 14, 16\}$, and 5 independent runs per configuration. We evaluate dictionary recovery using GT-MCC, which measures correlation between learned dictionary $\mathbf{A}$ and ground-truth $\mathbf{A}_{\text{gt}}$ using optimal permutation matching, computed over the final 100 training steps and averaged to reduce noise.

\begin{figure}[htbp]
\centering
\includegraphics[width=0.48\textwidth]{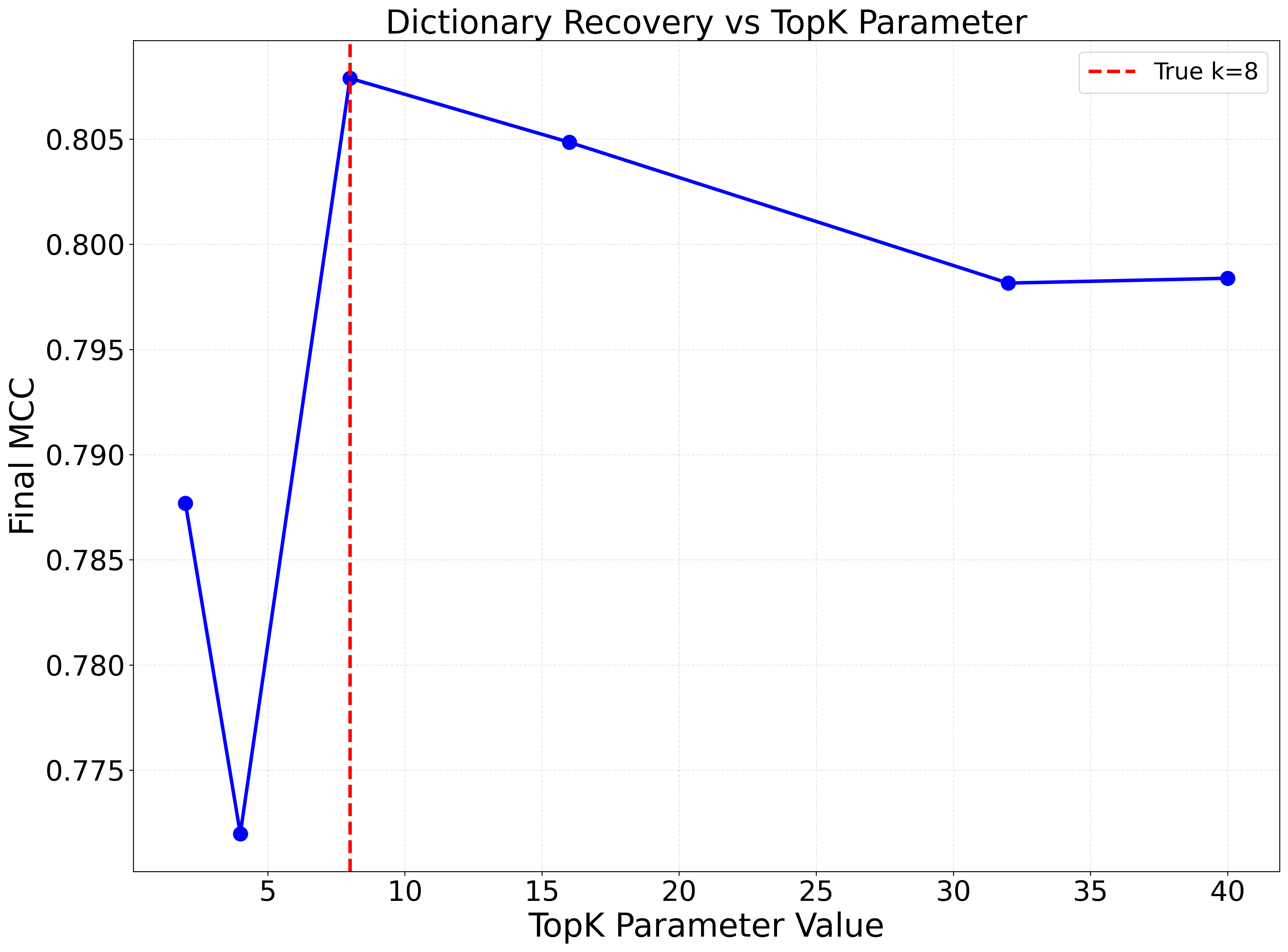}
\caption{Effect of Activation Sparsity $k$ in TopK SAE in the Matched Regime ($d_{\text{gt}} = d_{\text{sae}} = 40$, true $s = 8$). We plot final GT-MCC (averaged over the last 100 steps) vs. $k$. Performance peaks at $k = s = 8$, with underestimating $k$ being more harmful than overestimating it.}
\label{fig:mcc_vs_k}
\end{figure}

Figure~\ref{fig:mcc_vs_k} demonstrates that GT-MCC peaks precisely at $k = s = 8$, confirming that matching encoder sparsity to true data sparsity yields optimal dictionary recovery. The performance curve exhibits notable asymmetry around the optimal point: under-sparsity ($k < s$) causes sharp degradation in GT-MCC as $k$ decreases below $s$, while over-sparsity ($k > s$) leads to more gradual decline as $k$ increases beyond $s$. This asymmetry reflects fundamental differences in how sparsity misspecification affects the optimization process.

When $k < s$, the SAE lacks sufficient representational capacity to capture all active features in the ground-truth generating process. This constraint forces the model to either merge multiple ground-truth features into single learned features or systematically ignore some ground-truth features, both resulting in poor dictionary recovery. The sharp performance degradation occurs because the model cannot represent the true complexity of the data-generating process, leading to fundamental representational limitations that cannot be overcome through better optimization.

Conversely, when $k > s$, the SAE has excess representational capacity that allows recovery of all true features but may also lead to learning near-duplicate features representing similar directions, increased sensitivity to initialization, and selection ambiguity among competing feature representations. While this reduces run-to-run consistency, the gradual performance decline suggests that over-sparsity is less immediately harmful than under-sparsity, as the model can still capture the essential structure of the data even if it learns redundant or unstable features.

These findings have important practical implications. Because practitioners never observe $A_i^\star$ in real applications, measuring PW-MCC across multiple training runs becomes an important diagnostic tool. As demonstrated earlier in this paper, PW-MCC correlates strongly with GT-MCC, especially in the matched regime, providing a practical proxy for dictionary quality when ground truth is unavailable. Our observations reveal that when $k$ is too low, the SAE lacks sufficient representational capacity, leading to poor feature recovery. When $k$ is too high, features may become unstable across runs due to selection ambiguity among near-duplicates, even when reconstruction loss appears acceptable. Therefore, sweeping over $k$ values while monitoring PW-MCC provides a principled approach to approximate effective sparsity for a given dataset and SAE architecture.

For practitioners selecting appropriate sparsity parameters, these results suggest that when uncertain about true sparsity, a conservative approach favoring slight over-sparsity ($\rho \approx 1.2\text{--}1.5$) is preferable to under-sparsity, as performance degradation is more gradual and reversible. Practitioners should systematically vary $k$ while monitoring PW-MCC to identify regions of high consistency. Warning signs include low PW-MCC coupled with acceptable reconstruction loss, which may indicate over-sparsity leading to feature instability, and poor reconstruction performance combined with high PW-MCC, which may indicate under-sparsity with consistent but incomplete feature recovery.

These observations raise important theoretical questions for future research. Can we precisely characterize how dictionary learning error scales with the degree of sparsity misspecification under different conditions, potentially leading to theoretical bounds on performance degradation? What is the exact relationship between mutual coherence, sparsity ratio, and feature stability? How do these effects manifest in real LLM representations, where underlying sparsity may vary across different data regimes and semantic contexts? Can we develop methods that adaptively determine $k$ or accommodate varying sparsities within datasets to better capture heterogeneous features? 

A theoretical analysis of these questions, combined with further empirical investigation, could substantially advance our understanding of SAE training dynamics and improve feature extraction for mechanistic interpretability.

\section{Supplementary Analysis of SAEs Trained on Language Model Activations}
\label{app:real-world}

This appendix provides additional experimental details, visualizations, and quantitative results for SAEs trained on activations extracted from LLMs. The primary objective of these experiments is to empirically evaluate the feature consistency and learned characteristics of different SAE architectures when applied to complex, real-world data. In the absence of ground-truth features for LLM activations, feature consistency is primarily assessed by training multiple instances of each SAE configuration (differing only by random initialization seeds) and subsequently quantifying the similarity between their learned dictionaries using PW-MCC.

\subsection{Experimental Setup for Real Data Analysis}

\subsubsection{General Methodology and Data}

SAEs of various architectures were trained using activations derived from LLMs processing text from the \texttt{monology/pile-uncopyrighted} dataset. For each specific SAE architecture and hyperparameter set, three independent training runs were conducted using distinct random seeds (\texttt{random\_seeds = [42, 43, 44]}) to evaluate consistency. The learned dictionaries from pairs of these runs were compared by first finding an optimal one-to-one matching between their features using the Hungarian algorithm, and then calculating the cosine similarity for each matched pair. The average of these similarities constitutes the PW-MCC for the dictionary. Separately, we plot the individual feature similarity and its correlation with feature activation statistics.

All SAEs were trained on 500 million tokens from the source dataset. Common training parameters, consistent across these experiments unless otherwise specified, are detailed in Table~\ref{tab:common_training_params_real}. Feature activation frequency for a given SAE feature is defined as the proportion of input tokens (within a representative sample of the training data) on which that feature exhibits a non-zero activation. When comparing a matched pair of features from two independently trained SAEs (run 1 and run 2), their joint activation behavior is characterized by \texttt{min(freq\_run1, freq\_run2)}. This metric provides a conservative estimate of their shared activity level, as a feature pair representing a truly consistent underlying concept should ideally be active on a substantial and largely overlapping set of inputs.

\begin{table}[htbp]
\centering
\small
\caption{Common Training Parameters for SAEs on LLM Activations.}
\label{tab:common_training_params_real}
\begin{tabular}{@{}ll@{}}
\toprule
Parameter & Value \\
\midrule
Dataset for Activations & \texttt{monology/pile-uncopyrighted} \\
Total Training Tokens & $5 \times 10^8$ \\
SAE Batch Size  & 2048 \\
Warmup Steps  & 1000 \\
Sparsity Warmup Steps & 5000 (for L1-based and JumpReLU ) \\
Learning Rate Decay Start  & 0.8 (of total training steps) \\
Number of Random Seeds per Configuration & 3 (42, 43, 44) \\
Activation Normalization & Applied (before SAE input) \\
Autocast Data Type & \texttt{torch.bfloat16} \\
\bottomrule
\end{tabular}
\end{table}

\subsubsection{Configuration for Pythia-160M Experiments}

The detailed visualizations and quantitative comparisons presented in Section~\ref{sec:pythia_results} pertain to SAEs trained on activations extracted from the \texttt{EleutherAI/pythia-160m-deduped} model. These SAEs were trained on the residual stream activations output by layer 8 of the LLM (\texttt{resid\_post\_layer\_8}). Specific parameters for this experimental setup are provided in Table~\ref{tab:pythia160m_params} and inspired by the sweep used in \citep{karvonen2024saebench}.

For the specific configurations analyzed from this Pythia-160M setup, the SAE architectures demonstrated varying levels of pairwise dictionary consistency as measured by PW-MCC. TopK SAEs achieved the highest overall performance with a PW-MCC of 0.8181 using a target $k$ of 20. Batch TopK SAEs followed with a PW-MCC of 0.7656, also at target $k$ of 20. Gated SAEs produced a PW-MCC of 0.7370 with a sparsity penalty of 0.06. Among the remaining architectures, Matryoshka Batch TopK SAEs achieved 0.6267 at target $k$ of 20, P-Anneal SAEs reached 0.6113 with an initial sparsity penalty of 0.025, JumpReLU SAEs attained 0.4947 at target $L_0$ of 40, and Standard SAEs achieved 0.4739 with a sparsity penalty of 0.06. All optimal results were observed at training step 244,140. In terms of feature utilization, TopK and Gated SAEs consistently produced fewer \emph{dead} features (features with very low or zero activation rates across the evaluation data) compared to the Standard and JumpReLU variants, with the $L_0$-constrained architectures (TopK, Batch TopK, JumpReLU, and Matryoshka Batch TopK) demonstrating more controlled sparsity patterns than their $L_1$-penalized counterparts.

\begin{table}[htbp]
\centering
\small
\caption{SAE Training Parameters for EleutherAI/pythia-160m-deduped (Layer 8).}
\label{tab:pythia160m_params}
\begin{tabular}{@{}ll@{}}
\toprule
Parameter & Value \\
\midrule
LLM Model & \texttt{EleutherAI/pythia-160m-deduped} \\
Targeted Layer & 8 (residual stream output) \\
LLM Activation Dimension (m) & 768 \\
LLM Batch Size (for activation generation) & 32 \\
LLM Context Length & 1024 \\
LLM Data Type & \texttt{torch.float32} \\
SAE Dictionary Width  & $2^{14}$ (16,384) \\
SAE Learning Rate & $3 \times 10^{-4}$ \\
Architectures Evaluated & Standard, TopK, BatchTopK, Gated, \\
 & P-Anneal, JumpReLU, Matryoshka BatchTopK \\
Sparsity Penalties Sweep (for L1-based architectures): & \\
\quad Standard & [0.012, 0.015, 0.02, 0.03, 0.04, 0.06] \\
\quad P-Anneal (initial penalty) & [0.006, 0.008, 0.01, 0.015, 0.02, 0.025] \\
\quad Gated & [0.012, 0.018, 0.024, 0.04, 0.06, 0.08] \\
Target L0s / $k$ (for L0-based architectures): & [20, 40, 80, 160, 320, 640] \\
\bottomrule
\end{tabular}
\end{table}

\subsubsection{Configuration for Gemma-2-2B Experiments}

Additional experiments were conducted using activations from the \texttt{google/gemma-2-2B} model \citep{team2024gemma}, targeting the residual stream output of layer 12 (\texttt{resid\_post\_layer\_12}). Table~\ref{tab:gemma2b_params} documents the pertinent hyperparameters for these larger-scale runs.

For the Gemma-2-2B experimental configuration, the SAE architectures exhibited similar performance characteristics in terms of PW-MCC. TopK SAEs achieved the highest pairwise dictionary consistency with a PW-MCC of 0.7898 using a target $k$ of 80. JumpReLU SAEs demonstrated competitive performance with a PW-MCC of 0.7405 at target $k$ of 40, closely followed by Batch TopK SAEs with a PW-MCC of 0.7403 at target $k$ of 80. Gated SAEs produced a PW-MCC of 0.7033 with a sparsity penalty of 0.04. The remaining architectures showed more modest performance levels, with Matryoshka Batch TopK SAEs achieving 0.5842 at target $k$ of 80, P-Anneal SAEs reaching 0.5731 with an initial sparsity penalty of 0.025, and Standard SAEs attaining 0.5717 with a sparsity penalty of 0.03. The performance patterns observed in the larger Gemma-2-2B model generally maintained the relative ordering established in the Pythia-160M experiments, with $L_0$-constrained architectures consistently outperforming their $L_1$-penalized counterparts in terms of dictionary consistency metrics.
\begin{table}[htbp]
\centering
\small
\caption{SAE Training Parameters for google/gemma-2-2B (Layer 12).}
\label{tab:gemma2b_params}
\begin{tabular}{@{}ll@{}}
\toprule
Parameter & Value \\
\midrule
LLM Model & \texttt{google/gemma-2-2B} \\
Targeted Layer & 12 (residual stream output) \\
LLM Activation Dimension (m) & 2304 \\
LLM Batch Size (for activation generation) & 4 \\
LLM Context Length & 1024 \\
LLM Data Type & \texttt{torch.bfloat16} \\
SAE Dictionary Width  & $2^{16}$ (65,536) \\
SAE Learning Rate & $3 \times 10^{-4}$ \\
Random Seeds Used & 42, 43 \\
Architectures Evaluated & Standard, TopK, BatchTopK, Gated, \\
 & P-Anneal, JumpReLU, Matryoshka BatchTopK \\
& Standard \\
Sparsity Penalties Sweep (for L1-based architectures): & \\
\quad Standard & [0.012, 0.015, 0.02, 0.03, 0.04, 0.06] \\
\quad P-Anneal (initial penalty) & [0.006, 0.008, 0.01, 0.015, 0.02, 0.025] \\
\quad Gated & [0.012, 0.018, 0.024, 0.04, 0.06, 0.08] \\
Target L0s / $k$ (for L0-based architectures): & [20, 40, 80, 160, 320, 640] \\
\bottomrule
\end{tabular}
\begin{tabular}{p{0.95\linewidth}}
\end{tabular}
\end{table}

\subsection{Detailed Analysis of SAEs Trained on Pythia-160M Layer 8 Activations}
\label{sec:pythia_results}

This section presents a comparative analysis of feature consistency and activation patterns for four different SAE architectures (TopK, Gated, Standard, JumpReLU) trained on Pythia-160M layer 8 activations.

\subsubsection{Cross-Architectural Comparison of Feature Similarity and Activation Patterns}

\begin{figure}[htbp]
    \centering
    \begin{minipage}{0.48\textwidth}
        \centering
        \includegraphics[width=\textwidth]{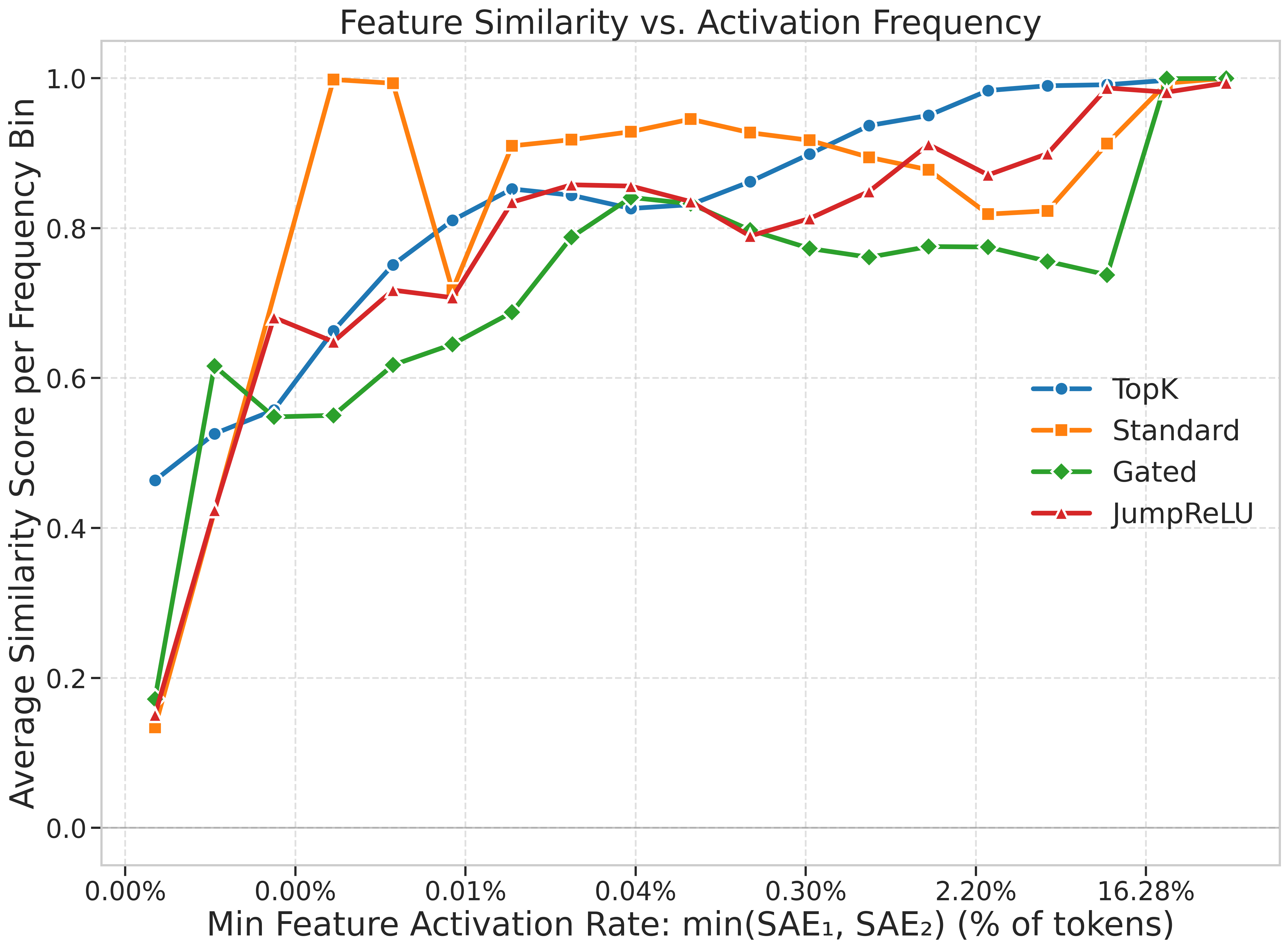}
        \caption{Average feature similarity (PW-MCC of matched individual features) versus activation rate for four SAE architectures. The activation rate is defined as \texttt{min(freq\_run1, freq\_run2)}, representing the minimum percentage of tokens activating the feature across two independent runs. Data is from SAEs trained on Pythia-160M layer 8 activations. A strong positive correlation is evident, indicating that features with higher shared activation rates are learned more consistently.}
        \label{fig:avg_similarity}
    \end{minipage}
    \hfill
    \begin{minipage}{0.48\textwidth}
        \centering
        \includegraphics[width=\textwidth]{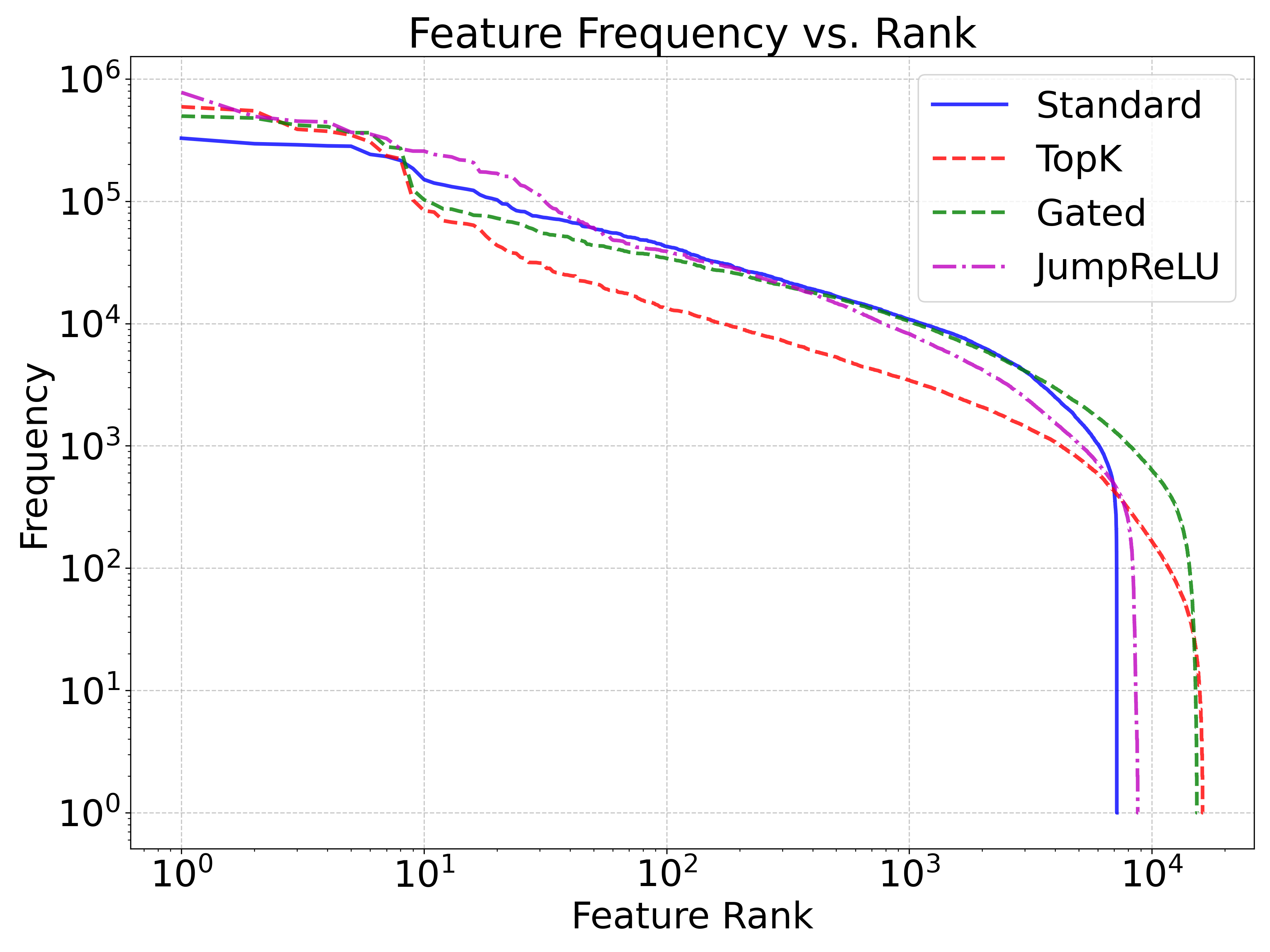}
        \caption{Log-log plot of feature activation frequency versus feature rank for four SAE architectures trained on Pythia-160M layer 8 activations. All architectures exhibit power-law-like distributions of feature usage, but with distinct slopes and differing prevalence of very low-frequency (potentially dead) features. Notably, TopK and Gated architectures tend to show fewer dead features.}
        \label{fig:activation_frequency}
    \end{minipage}
\end{figure}

A general trend observed across SAE architectures is the positive correlation between a feature's activation frequency and its consistency across independent training runs. Figure~\ref{fig:avg_similarity} illustrates this for Standard, TopK, Gated, and JumpReLU SAEs. The x-axis, \texttt{min(freq\_run1, freq\_run2)}, represents a condition for shared activity; features satisfying this with higher values (i.e., both are frequently active) exhibit higher pairwise similarity. This suggests that features corresponding to more prevalent patterns in the LLM's activations are more robustly learned. While this correlation is seen in all SAEs, the overall level of consistency varies, with TopK SAEs achieving the highest aggregate PW-MCC.

Concurrently, Figure~\ref{fig:activation_frequency} reveals that different SAE architectures induce distinct feature utilization profiles. While all exhibit power-law-like distributions for feature activation frequencies (when features are ranked by frequency), the slopes of these distributions and the number of extremely low-frequency or dead features vary. Architectures such as TopK and Gated SAEs tend to result in fewer dead features compared to Standard and JumpReLU SAEs, indicating potentially more efficient use of their learned dictionaries.

\subsubsection{Standard SAE}

\begin{figure}[htb]
    \centering
    \includegraphics[width=0.7\textwidth]{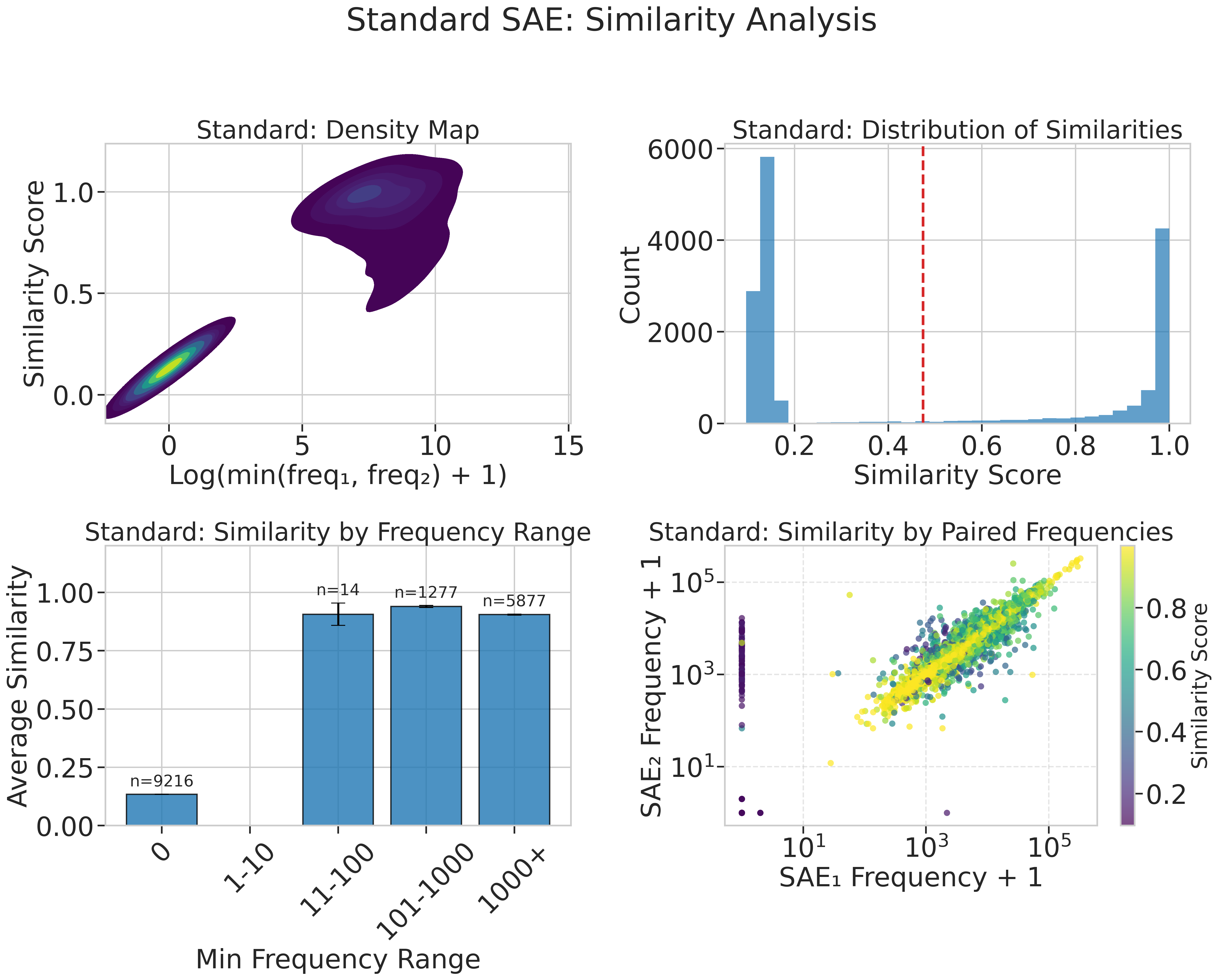}
    \caption{Feature similarity analysis for a Standard SAE (L1-penalized) trained on Pythia-160M layer 8 activations. The overall PW-MCC for this configuration was approximately $0.4739$. Top left: Density map of pairwise feature similarity vs. log minimum activation frequency (\texttt{min(freq\_run1, freq\_run2)}). Top right: Histogram of pairwise feature similarity scores. Bottom left: Average similarity bucketed by log minimum frequency range. Bottom right: Scatter plot of feature activation frequencies from two runs (\texttt{freq\_run1} vs. \texttt{freq\_run2}), colored by their pairwise similarity score. Points along the diagonal represent features with similar activation frequencies in both runs.}
    \label{fig:similarity_analysis_standard}
\end{figure}

\begin{figure}[htb]
    \centering
    \includegraphics[width=0.5\textwidth]{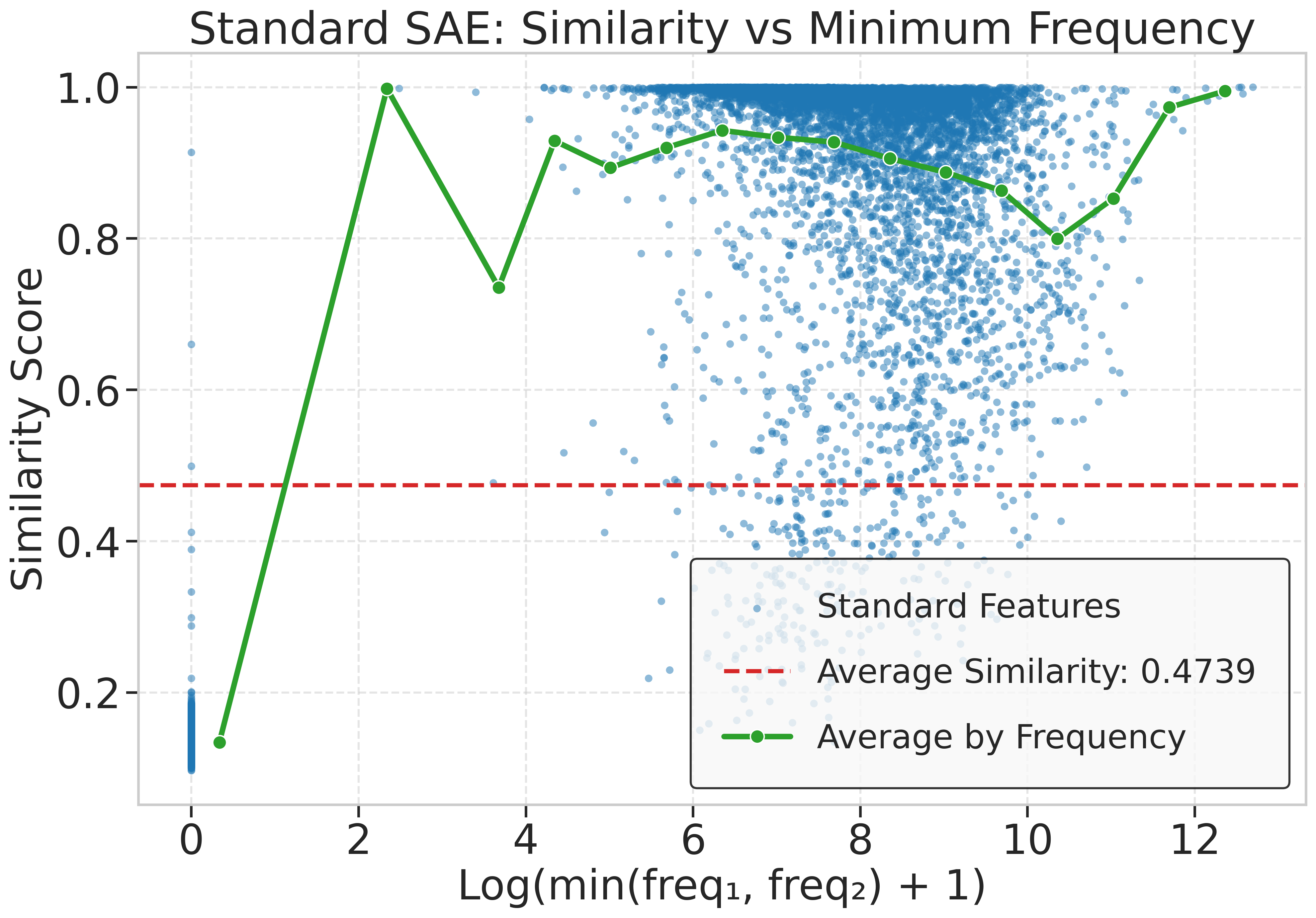}
    \caption{Average feature similarity versus \texttt{min(freq\_run1, freq\_run2)} for the Standard SAE. This plot highlights the positive trend: features with higher shared activation levels tend to exhibit greater pairwise similarity, though the overall consistency for this architecture is modest.}
    \label{fig:similarity_vs_freq_standard}
\end{figure}

The Standard SAE architecture, characterized by an L1 penalty on activations, achieved a relatively low overall PW-MCC of approximately $0.4739$ in these experiments. A detailed breakdown of its feature similarity characteristics is provided in Figure~\ref{fig:similarity_analysis_standard}. The density map (top-left) and the bucketed average similarity (bottom-left) illustrate that while higher minimum activation frequencies correlate with improved similarity, a significant portion of features exhibit low similarity. The histogram of similarity scores (top-right) is also peaked at zero, indicating several dead features. The scatter plot of paired frequencies (bottom-right) shows that features matched by the Hungarian algorithm (and thus contributing to the similarity score) often, but not always, possess similar activation frequencies across the two runs. Features lying closer to the diagonal (similar frequencies in both runs) and having higher joint frequencies (top-right of this sub-panel) tend to exhibit higher similarity.
Figure~\ref{fig:similarity_vs_freq_standard} isolates and clearly depicts the positive relationship between shared activation frequency and pairwise similarity for this architecture.

\subsubsection{TopK SAE}

\begin{figure}[htb]
    \centering
    \includegraphics[width=0.7\textwidth]{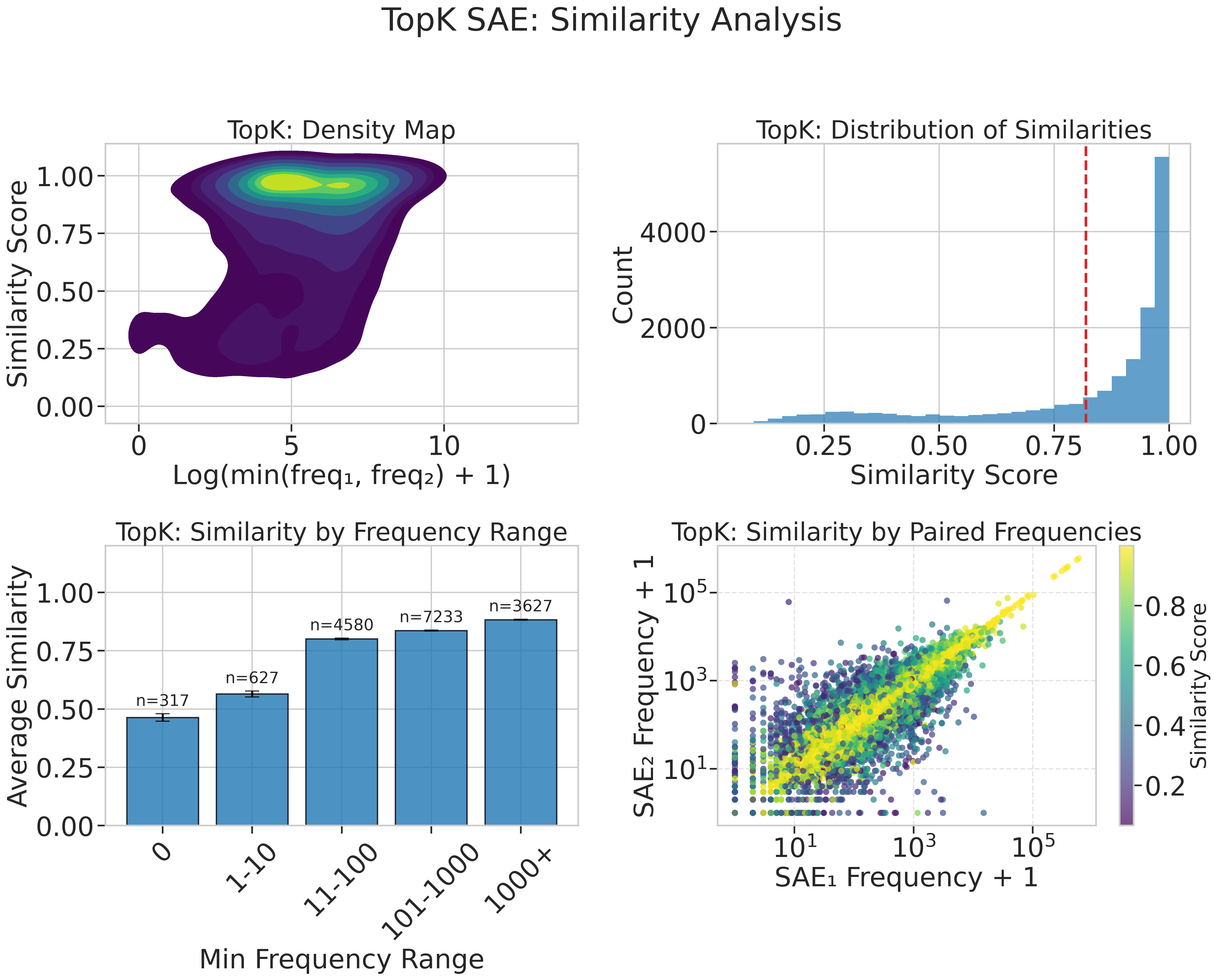}
    \caption{Feature similarity analysis for a TopK SAE trained on Pythia-160M layer 8 activations. This architecture achieved a high overall PW-MCC of approximately $0.8188$. Panels are analogous to Figure~\ref{fig:similarity_analysis_standard}.}
    \label{fig:similarity_analysis_topk}
\end{figure}

\begin{figure}[htb]
    \centering
    \includegraphics[width=0.5\textwidth]{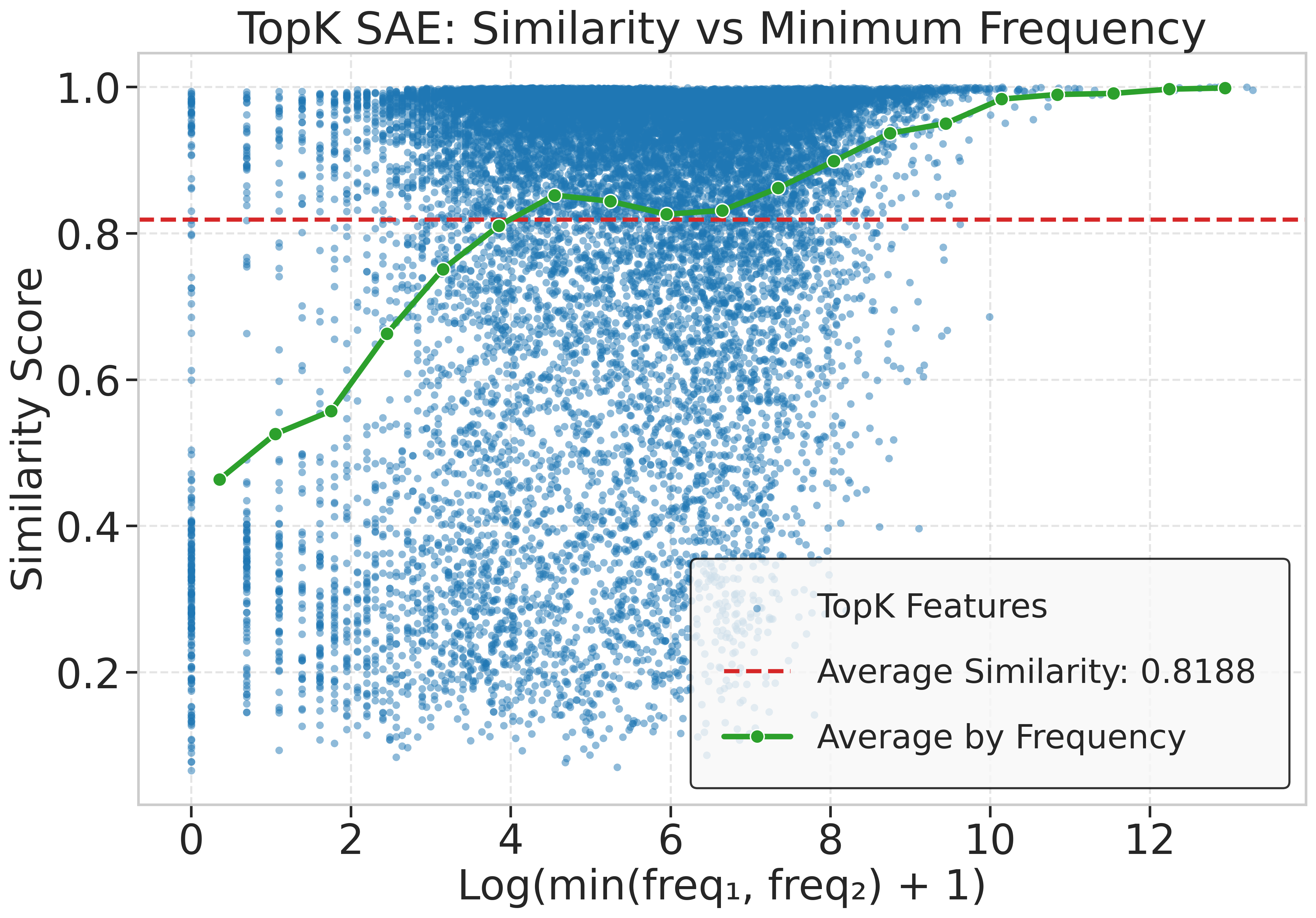}
    \caption{Average feature similarity versus \texttt{min(freq\_run1, freq\_run2)} for the TopK SAE. This architecture demonstrates both high overall similarity levels and a strong positive correlation between shared activation frequency and feature reproducibility.}
    \label{fig:similarity_vs_freq_topk}
\end{figure}

In contrast, the TopK SAE architecture demonstrated superior performance, achieving the highest overall PW-MCC of approximately $0.8188$. The analyses in Figure~\ref{fig:similarity_analysis_topk} and Figure~\ref{fig:similarity_vs_freq_topk} illustrate this. The density map in Figure~\ref{fig:similarity_analysis_topk} (top-left) shows a strong concentration of features in the high-similarity, high-shared-frequency region. The histogram of similarity scores (top-right) is markedly skewed towards higher values compared to the Standard SAE, indicating more consistently learned features across the dictionary. The scatter plot of paired frequencies (bottom-right) again suggests that matched features tend to have similar activation rates especially at higher frequencies. Figure~\ref{fig:similarity_vs_freq_topk} clearly shows a robust positive correlation between shared activation frequency and high pairwise similarity, underscoring the stability of frequently used features learned by TopK SAEs.

\subsubsection{Gated SAE}

\begin{figure}[htbp]
    \centering
    \includegraphics[width=0.7\textwidth]{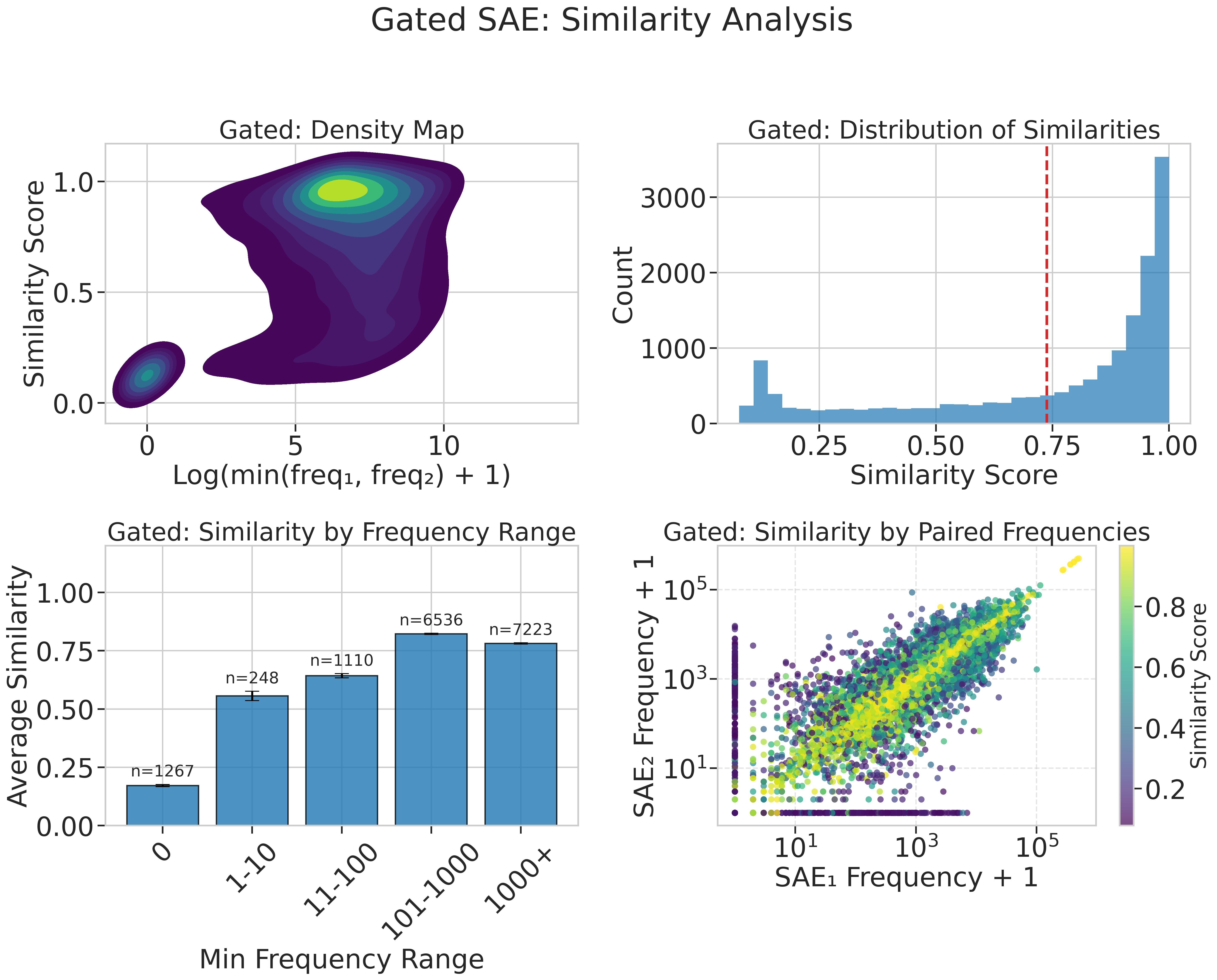}
    \caption{Feature similarity analysis for a Gated SAE trained on Pythia-160M layer 8 activations, with an overall PW-MCC of approximately $0.7378$. Panels are analogous to Figure~\ref{fig:similarity_analysis_standard}.}
    \label{fig:similarity_analysis_gated}
\end{figure}

\begin{figure}[htbp]
    \centering
    \includegraphics[width=0.5\textwidth]{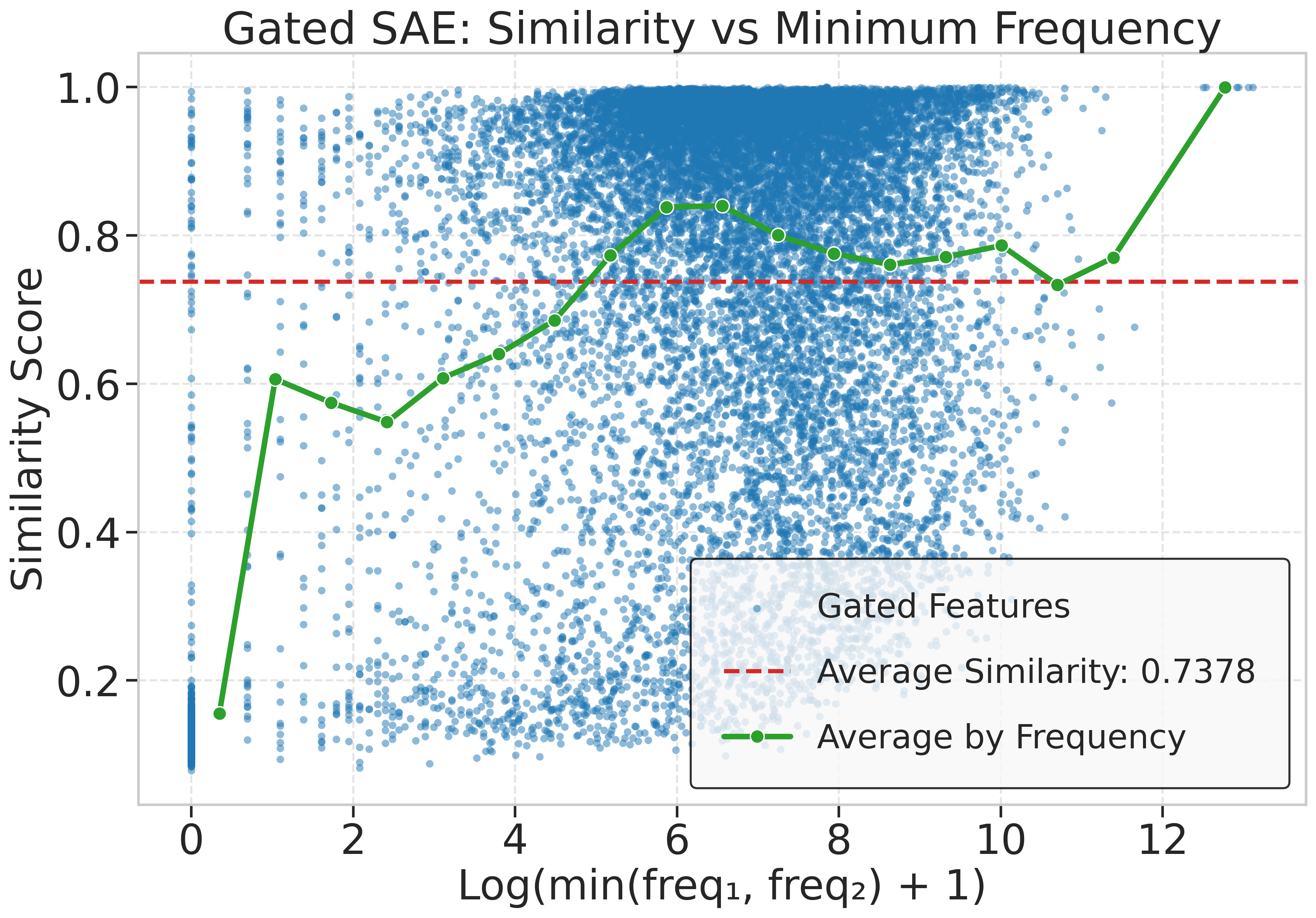}
    \caption{Average feature similarity versus \texttt{min(freq\_run1, freq\_run2)} for the Gated SAE. The overall dictionary PW-MCC for this configuration is $0.7378$. A strong positive correlation is evident between shared activation frequency and individual feature similarity.}
    \label{fig:similarity_vs_freq_gated}
\end{figure}

The Gated SAE architecture yielded an overall PW-MCC of approximately $0.7378$, exhibiting good consistency, second only to TopK SAEs in this comparison. Figure~\ref{fig:similarity_analysis_gated} and Figure~\ref{fig:similarity_vs_freq_gated} detail its characteristics. The four-panel plot in Figure~\ref{fig:similarity_analysis_gated} displays trends consistent with other architectures regarding the relationship between shared frequency and similarity. The distribution of similarity scores (top-right) is more favorable than that of Standard SAEs, leaning towards higher consistency. The paired frequency scatter plot (bottom-right) also indicates that matched features tend to share similar activation levels. Figure~\ref{fig:similarity_vs_freq_gated} confirms the strong positive correlation between \texttt{min(freq\_run1, freq\_run2)} and feature similarity.

\subsubsection{JumpReLU SAE}

\begin{figure}[htbp]
    \centering
    \includegraphics[width=0.7\textwidth]{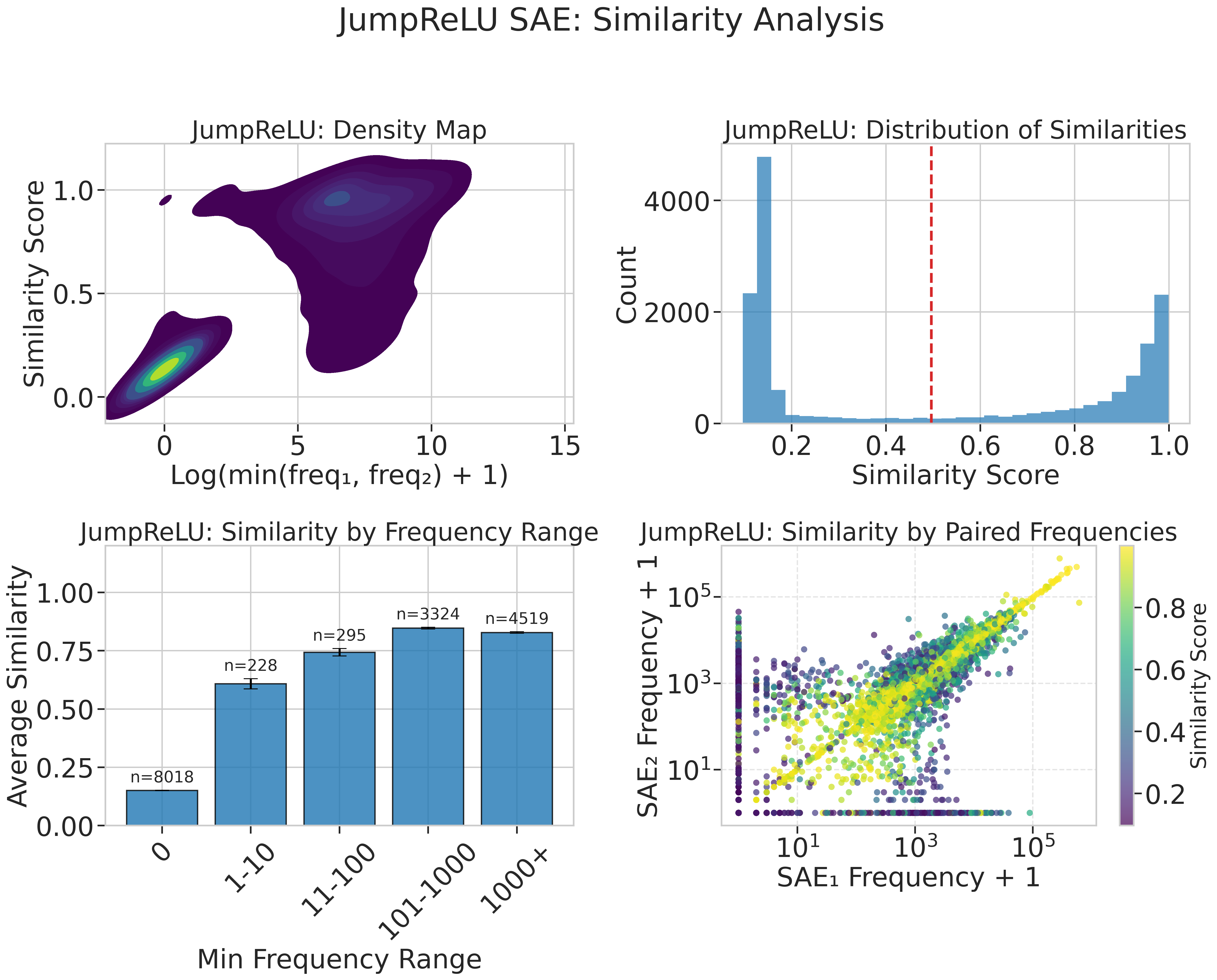}
    \caption{Feature similarity analysis for a JumpReLU SAE trained on Pythia-160M layer 8 activations. The overall PW-MCC was approximately $0.4957$. Panels are analogous to Figure~\ref{fig:similarity_analysis_standard}.}
    \label{fig:similarity_analysis_jumprelu}
\end{figure}

\begin{figure}[htbp]
    \centering
    \includegraphics[width=0.5\textwidth]{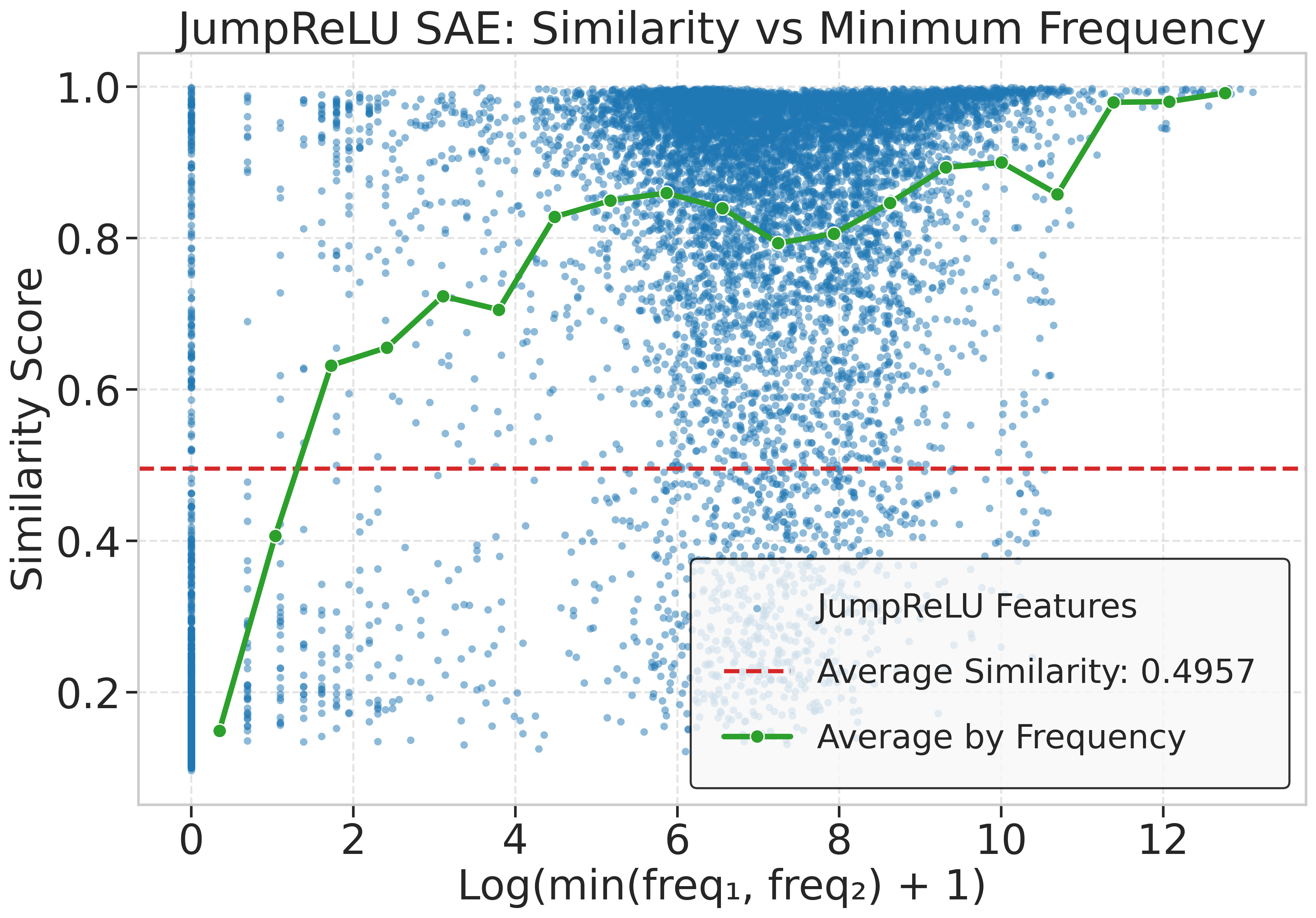}
    \caption{Average feature similarity versus \texttt{min(freq\_run1, freq\_run2)} for the JumpReLU SAE. This plot shows increasing similarity with higher shared activation frequency. The presence of two distinct clusters suggests potential subpopulations of features with differing learning or consistency characteristics within this architecture.}
    \label{fig:similarity_vs_freq_jumprelu}
\end{figure}

The JumpReLU SAE achieved an overall PW-MCC of approximately $0.4957$, placing its aggregate consistency on par with Standard SAEs in these experiments. The results are presented in Figure~\ref{fig:similarity_analysis_jumprelu} and Figure~\ref{fig:similarity_vs_freq_jumprelu}. The general trend of higher similarity for more frequently and jointly active features persists. The scatter plot of paired frequencies in Figure~\ref{fig:similarity_analysis_jumprelu} (bottom-right) also suggests that features matched by the Hungarian algorithm tend to possess similar activation frequencies across runs. Figure~\ref{fig:similarity_vs_freq_jumprelu} shows that JumpReLU also produces several dead features that affect the net consistency.

\subsubsection{Summary of Real Data Findings}

The empirical investigations on Pythia-160M activations consistently reveal a significant positive correlation between the activation frequency of learned features—particularly their shared activation level across independent runs, \texttt{min(freq\_run1, freq\_run2)}—and their inter-run similarity or consistency. This observation holds across diverse SAE architectures. The use of \texttt{min(freq\_run1, freq\_run2)} as a metric for joint activity is justified by the intuition that a feature representing a stable, underlying concept should be consistently activated by a similar set of inputs across different model initializations, thus implying that both individual frequencies should be high for a robust match. Furthermore, the scatter plots of paired feature frequencies (e.g., bottom-right panels in the four-panel figures) visually corroborate that features deemed ``the same'' by the Hungarian matching process (and thus contributing to similarity scores) indeed tend to exhibit comparable activation frequencies in the respective runs.

Among the architectures quantitatively compared, TopK SAEs demonstrated the highest overall dictionary consistency as measured by PW-MCC, followed in order by Gated, JumpReLU, and Standard SAEs. This ranking correlates with qualitative observations regarding feature utilization; TopK and Gated SAEs also tended to produce fewer ``dead'' or very sparsely used features, suggesting a more effective and stable learning dynamic that leverages a greater portion of their dictionary capacity.

While the frequency-consistency trend is a common thread, different SAE architectures clearly lead to varying aggregate levels of feature consistency and distinct feature activation profiles. The detailed multi-panel visualizations offer a more granular understanding of consistency than a single global PW-MCC score, by illustrating how stability is distributed across features with different activation characteristics within each architectural type. These findings highlight the interplay of architectural choice and feature activation statistics in determining the reliability and interpretability of features learned by SAEs from real-world LLM activations.

\subsection{Analysis of SAEs Trained on Gemma-2-2B Layer 12 Activations}
\label{app:sec-gemma}

This section presents results from training SAEs on activations from layer 12 of the Gemma-2-2B model \citep{team2024gemma} and analyzes the mean pairwise MCC averaged across two training seeds. Due to computational constraints, we report only the final aggregated pairwise MCC values. Computing the MCC requires solving an assignment problem with complexity \(\mathcal{O}\!\left(N^{3}\right)\), where \(N\) represents the dictionary size. This becomes particularly challenging for Gemma-2-2B, which employs a dictionary four times larger than Pythia-160M (\(65\,536\) versus \(16\,384\)).

Figure \ref{fig:gemma-2-2b} shows the final training PW-MCC of SAEs trained on 500M tokens from \texttt{monology/pile-uncopyrighted}. TopK SAE achieves the highest pairwise MCC on this larger model, corroborating our findings on Pythia-160M and supporting our theoretical analysis. Most other SAE variants exhibit performance patterns similar to those observed in Figure \ref{fig:pythia_160M}. One notable exception is JumpReLU, which previously showed relatively poor consistency on Pythia-160M but achieves substantially improved feature consistency on Gemma-2-2B. Despite this improvement, JumpReLU still does not match the consistency levels attained by TopK SAE.
JumpReLU is commonly used in extremely large-scale SAEs within the community \citep{ameisen2025circuit,lindsey2025biology}.

\begin{figure}[htbp]
    \centering
    \includegraphics[width=0.8\linewidth]{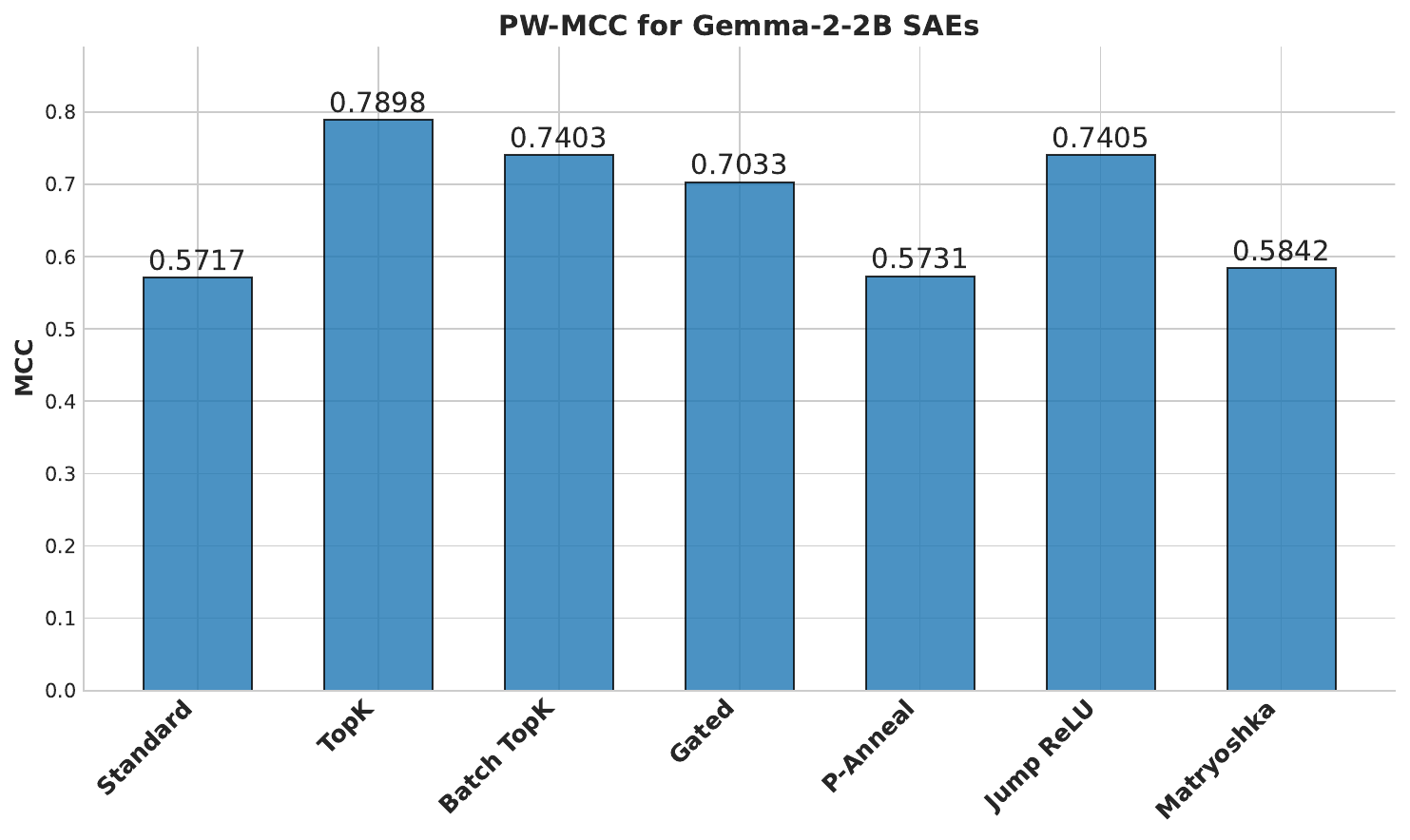}
    \caption{Final PW-MCC for BatchTopK, Gated, P-Anneal, JumpReLU, Standard, TopK, and Matryoshka BatchTopK SAEs on Gemma-2-2B activations. Higher PW-MCC indicates greater run-to-run feature consistency.}
    \label{fig:gemma-2-2b}
\end{figure}

\subsection{Qualitative Analysis: Example Feature Pairs across Similarity Buckets}
\label{app:qualitative}

To complement the quantitative assessments of feature consistency, this section provides a qualitative examination of example feature pairs extracted from 2 SAEs trained on Pythia-160M. We leverage an automated interpretation pipeline~\citep{paulo2024automatically}, to generate natural language explanations for individual SAE features and to measure the functional similarity between pairs of features from independently trained models.

The process for generating an explanation for a single SAE feature is as follows: first, activations for the feature are recorded across a substantial corpus (100,000 diverse text examples from \texttt{monology/pile-uncopyrighted} in our setup). The 10 text segments eliciting the strongest (highest magnitude) activations for that feature are then selected. These top-activating examples are formatted, the top activating tokens are emphasized (surrounded by << >>), the activation strength of the tokens is shown after each example and presented to an explainer LLM, specifically \texttt{gpt-4.1-2025-04-14}. The LLM is prompted to identify common patterns or semantic concepts within these examples and produce a concise natural language interpretation of the feature's apparent function.

For the analysis presented in \autoref{tab:feature_pairs_examples}, we first matched features between two independently trained SAEs (SAE1 and SAE2). This matching was achieved by computing the pairwise cosine similarity of all their dictionary vectors and subsequently applying the Hungarian algorithm to find an optimal one-to-one correspondence. These matched feature pairs were then categorized into five buckets based on their dictionary vector cosine similarity scores (Bucket 1: low vector similarity; Bucket 5: high vector similarity).

To measure the functional similarity of each matched pair, their independently generated explanations were provided to the same \texttt{gpt-4.1-2025-04-14} model. This model was then prompted to evaluate the semantic resemblance between the two explanations and assign a functional similarity score on a 1-10 scale. This score is presented as the \textbf{GPT-Score} score in \autoref{tab:feature_pairs_examples}, alongside the feature indices and their generated explanations.

\autoref{tab:feature_pairs_examples} shows a clear concordance between the quantitative cosine similarity of the feature dictionary vectors and the functional similarity of their roles, as determined by the automated interpretability pipeline. Feature pairs with low vector similarity (e.g., Buckets 1 and 2) typically receive divergent functional explanations and low similarity scores from GPT-4.1. For instance, one feature might be interpreted as activating on \LaTeX{} math symbols, while its low-vector-similarity counterpart from another SAE is interpreted as activating on Go/Rust code structures. Conversely, feature pairs exhibiting high vector similarity (e.g., Buckets 4 and 5) frequently obtain near-identical functional explanations and high similarity scores, such as both features being interpreted as responding to phrases indicating future events or specific Wikipedia category tags like births.

This analysis reinforces the utility of dictionary vector cosine similarity as a meaningful measure of feature consistency. The strong correlation observed suggests that high vector similarity between features learned across different runs reliably indicates the stable learning of semantically coherent and functionally equivalent interpretable units.

\begin{table}[hb]
\scriptsize
\caption{\small Representative feature pairs from two independently trained SAEs (SAE1 and SAE2) organized by similarity buckets, with corresponding GPT-evaluated functional similarity scores. Buckets represent increasing levels of feature similarity, demonstrating a strong correlation between computed similarity measures and functional equivalence. Lower-similarity pairs (Buckets 1-2) exhibit largely unrelated behavioral patterns, while higher-similarity pairs (Buckets 4-5) demonstrate nearly identical semantic functions across both SAEs.}
\label{tab:feature_pairs_examples}
\begin{tabular}{crrp{0.35\textwidth}p{0.35\textwidth}}
\toprule
\textbf{Bucket} & \textbf{SAE1, SAE2} & \textbf{GPT-Score} & \textbf{SAE1 Feature Explanation} & \textbf{SAE2 Feature Explanation} \\
\midrule
\rowcolor{bucket1}
 & 10742, 13528 & 3/10 & Activates on punctuation marks and transition words marking syntactic or discourse boundaries. & Activates on section separator "Background \{Sec1\}" in scientific writing. \\
\rowcolor{bucket1}
 & 37, 6034 & 2/10 & Activates on LaTeX/math environments and symbols within mathematical markup. & Activates at the start of code block bodies after opening braces in Go and Rust. \\
\rowcolor{bucket1}
1 & 11993, 9627 & 3/10 & Activates on bibliographic reference tokens and citation markers in academic writing. & Activates on closing parenthesis and angle bracket sequence at the end of figure captions. \\
\rowcolor{bucket1}
 & 16044, 13563 & 2/10 & Activates on common 2-4 letter substrings within larger tokens or variable names. & Activates on log message prefixes like "W/" and "E/" in Android logcat output. \\
\rowcolor{bucket1}
 & 5177, 15030 & 3/10 & Activates on LaTeX math tokens beginning or ending with backslash or angle brackets. & Activates on file paths or URLs containing delimiter sequences between directory or resource names. \\
\midrule
\rowcolor{bucket2}
 & 15430, 4143 & 4/10 & Activates on ordinal terms and comparative adjectives marking ordered elements in a sequence. & Activates on words following punctuation or in enumerated lists, especially suffixes or grammatical constructs. \\
\rowcolor{bucket2}
 & 13156, 12246 & 6/10 & Activates on markup-like sequences of repeated or paired symbols used as section separators in code. & Activates on closing angle brackets that terminate LaTeX-style or math expression delimiters. \\
\rowcolor{bucket2}
2 & 2292, 15215 & 4/10 & Activates on forms of the verb "to be" used as auxiliary verbs or main verbs. & Activates on the auxiliary verb "have" used for forming present perfect constructs. \\
\rowcolor{bucket2}
 & 4789, 3718 & 4/10 & Activates on phrases like "sounds like", "feels like" expressing resemblance or subjective impressions. & Activates on verbs expressing mental or emotional impact in reactions or realizations. \\
\rowcolor{bucket2}
 & 15326, 4995 & 2/10 & Activates on the token "int" in various contexts, both as a suffix and as a programming token. & Activates on "Image" and variants in variable names, software names, and UI elements. \\
\midrule
\rowcolor{bucket3}
 & 10739, 10630 & 5/10 & Activates on paired angle brackets used as delimiters or markers in code and technical writing. & Activates on code keywords, method names, and identifiers in programming contexts. \\
\rowcolor{bucket3}
 & 1203, 2543 & 3/10 & Activates on tokens containing the sequence "red", "whit", or "Reddit" within longer words. & Activates on scientific nouns denoting materials used in laboratory or industrial contexts. \\
\rowcolor{bucket3}
3 & 9668, 13698 & 6/10 & Activates on past-tense passive forms of verbs indicating completion or accomplishment. & Activates on tokens signifying the achievement or fulfillment of requirements or conditions. \\
\rowcolor{bucket3}
 & 14177, 3256 & 4/10 & Activates on delimiters for copyright and license statements in code and documentation. & Activates on punctuation elements and conjunctions serving as delimiters in complex sentences. \\
\rowcolor{bucket3}
 & 7237, 7841 & 4/10 & Activates on wordpieces containing "wor" and similar substrings within longer words. & Activates on "War" in proper nouns and as a standalone word in relevant contexts. \\
\midrule
\rowcolor{bucket4}
 & 8557, 8334 & 9/10 & Activates on mentions of the color "yellow" when describing objects or attributes. & Activates on the token "yellow" as a standalone word or within color-related phrases. \\
\rowcolor{bucket4}
 & 3796, 6569 & 4/10 & Activates on ".>" token in numeric, scientific notation or code fragments. & Activates on the period character when used as a decimal point in numerical values. \\
\rowcolor{bucket4}
4 & 3161, 11659 & 6/10 & Activates on multi-word phrases with verbs plus prepositions introducing perspectives. & Activates on "in" within common prepositional phrases introducing abstract relationships. \\
\rowcolor{bucket4}
 & 2045, 14698 & 5/10 & Activates on mentions of diseases, particularly cancer and related medical conditions. & Activates on scientific terms denoting important issues, processes, or domains in research. \\
\rowcolor{bucket4}
 & 10453, 9653 & 7/10 & Activates on tokens related to biometric identification, especially fingerprints and forensics. & Activates on technical nouns denoting identifiers, labels, codes, or tags in specialized domains. \\
\midrule
\rowcolor{bucket5}
 & 13974, 1289 & 10/10 & Activates on phrases indicating future events or developments with specific timeframes. & Activates on future-related phrases indicating when something is expected or planned. \\
\rowcolor{bucket5}
 & 9040, 9704 & 10/10 & Activates on "births" in Wikipedia-style category tags denoting birth years. & Activates on "births" within Wikipedia category tags indicating year of birth. \\
\rowcolor{bucket5}
5 & 13514, 4930 & 9/10 & Activates on multi-token prepositions and conjunctions, especially with "of", "by", "to". & Activates on phrases containing prepositions that indicate causes, relationships, or compositions. \\
\rowcolor{bucket5}
 & 3430, 6144 & 9/10 & Activates on function definitions in code following parameter lists before function bodies. & Activates on opening curly braces that begin function or method bodies in code. \\
\rowcolor{bucket5}
 & 3215, 7110 & 9/10 & Activates on "exclude", "excluded", or "exclusion" in procedural or scientific contexts. & Activates on "exclude" and related forms in the context of setting boundaries or omission criteria. \\
\bottomrule
\end{tabular}

\end{table}

\end{document}